\documentclass{article} % For LaTeX2e
\usepackage{iclr2016_conference,times}
\usepackage{hyperref}
\usepackage{url}
\usepackage{graphicx}
\usepackage[titletoc,title]{appendix}
\usepackage{amsfonts}
\usepackage{url}
\usepackage{multirow}
\usepackage{amsmath}
\usepackage{tablefootnote}
\usepackage[section]{placeins}

%\graphicspath{ {../experiments/}}
\graphicspath{ {figures/}}

\title{Actor-Mimic\\ Deep Multitask and Transfer Reinforcement Learning}

\author{
Emilio Parisotto, Jimmy Ba, Ruslan Salakhutdinov\\
Department of Computer Science\\
University of Toronto\\
Toronto, Ontario, Canada \\
\texttt{\{eparisotto,jimmy,rsalakhu\}@cs.toronto.edu} 
}

\iclrfinalcopy % Uncomment for camera-ready version

\usepackage{amsmath}
\usepackage{amsthm}
\usepackage{amsfonts}
\usepackage{amssymb}
\usepackage{graphicx}

%%%%%%%%%%%% THEOREM %%%%%%%%%%%%%%%%%%
\theoremstyle{definition}

\theoremstyle{plain}
\newtheorem{theorem}{Theorem}
\newtheorem{lemma}{Lemma}
\newtheorem{proposition}{Proposition}
%%%%%%%%%%% MATHEMATICS %%%%%%%%%%%%%%%

\newcommand{\be}{\begin{equation}}
\newcommand{\ee}{\end{equation}}
\def\bea#1\eea{\begin{align}#1\end{align}}
\newcommand{\eqnarr}{\begin{eqnarray}}
\newcommand{\eqnend}{\end{eqnarray}}

\newcommand{\pderiv}[2]{\frac{\partial #1}{\partial #2}}

\DeclareMathSymbol{\umu}{\mathalpha}{operators}{0}
\newcommand{\expectation}{\mathop{\mathbb{E}}}
%%%%%%%%%%% NOTATIONS %%%%%%%%%%%%%%%
\newcommand{\stateSpace}{\mathcal{S}}
\newcommand{\state}{s}
\newcommand{\nextState}{s'}
\newcommand{\actionSpace}{\mathcal{A}}
\newcommand{\action}{a}
\newcommand{\actionTwo}{a'}
\newcommand{\nextAction}{a'}
\newcommand{\transProb}{\mathcal{T}}
\newcommand{\rewardFunc}{\mathcal{R}}
\newcommand{\reward}{r}
\newcommand{\disFactor}{\gamma}
\newcommand{\policy}{\pi}
\newcommand{\softTransform}{\Gamma}
\newcommand{\param}{\theta}
\newcommand{\replayMem}{\mathcal{M}}
\newcommand{\sourceGame}{S}
\newcommand{\expertNet}{E}
\newcommand{\actionSpaceExperti}{\mathcal{A}_{\expertNet_i}}
\newcommand{\amNet}{\text{AMN}}
\newcommand{\lossPolicy}{\mathcal{L}_{policy}}
\newcommand{\lossFeature}{\mathcal{L}_{FeatureRegression}}
\newcommand{\lossAM}{\mathcal{L}_{Actor Mimic}}
\newcommand{\featureAMNet}{h_\amNet}
\newcommand{\featureExpertNeti}{h_{\expertNet_i}}
\newcommand{\policyMat}{\Pi}
\newcommand{\stationaryProb}{D}
\newcommand{\transMat}{T}
\newcommand{\linearFeat}{\phi}
\newcommand{\egreedy}{\epsilon\text{-greedy}}
\newcommand{\softmaxMat}{P}

\begin{document}

\maketitle

\begin{abstract}
The ability to act in multiple environments and transfer previous knowledge to new situations can be considered a critical aspect of any intelligent agent. Towards this goal, we define a novel method of multitask and transfer learning that enables an autonomous agent to learn how to behave in multiple tasks simultaneously, and then generalize its knowledge to new domains. This method, termed ``Actor-Mimic'', exploits the use of deep reinforcement learning and model compression techniques to
train a single policy network that learns how to act in a set of distinct tasks by using the guidance of several expert teachers. We then show that the representations learnt by the deep policy network are capable of generalizing to new tasks with no prior expert guidance, speeding up learning in novel environments. Although our method can in general be applied to a wide range of problems, we use Atari games as a testing environment to demonstrate these methods.
\end{abstract}

\section{Introduction}

Deep Reinforcement Learning (DRL), the combination of reinforcement learning methods and deep neural network function approximators, has recently shown considerable success in high-dimensional challenging tasks,
such as robotic manipulation \citep{levine2015_visuomotor,lillicrap2015_continuous} and arcade games \citep{mnih2015_humanlevel}. These methods exploit the ability of deep networks to learn salient descriptions of raw state input, allowing the agent designer to essentially bypass the lengthy process of feature engineering. In addition, these automatically learnt descriptions often 
significantly outperform hand-crafted feature representations that require 
extensive domain knowledge. One such DRL approach, the Deep Q-Network (DQN) \citep{mnih2015_humanlevel}, has achieved state-of-the-art results on the Arcade Learning Environment (ALE) \citep{bellemare2013_ale}, a benchmark of Atari 2600 arcade games. The DQN uses a deep convolutional neural network over pixel inputs to parameterize a state-action value function. The DQN is trained using Q-learning combined with several tricks that stabilize the training of the network, such as a replay memory to store past transitions and target networks to define a more consistent temporal difference error.  

Although the DQN maintains the same network architecture and hyperparameters for all games, the approach is limited in the fact that each network only learns how to play a single game at a time, despite the existence of similarities between games. For example, the tennis-like game of pong and the squash-like game of breakout are both similar in that each game consists of trying to hit a moving ball with a rectangular paddle. 
A network trained to play multiple games would be able to generalize its knowledge between the games, achieving a single compact state representation as the inter-task similarities are exploited by the network. 
Having been trained on enough source tasks, the multitask network can also exhibit transfer to new target tasks, which can speed up learning. Training DRL agents can be extremely computationally intensive and therefore reducing training time is a significant practical benefit. 

The contribution of this paper is to develop and evaluate methods that enable multitask and transfer learning for DRL agents, using the ALE as a test environment. To first accomplish multitask learning, we design a method called ``Actor-Mimic'' that leverages techniques from model compression to train a single multitask network using guidance from a set of game-specific expert networks. The particular form of guidance can vary, and several different approaches are explored and tested empirically.
To then achieve transfer learning, we treat a multitask network as being a DQN which was pre-trained on a set of source tasks. 
We show experimentally that this multitask pre-training can result in a DQN that learns a target task significantly faster than a DQN starting from a random initialization, effectively demonstrating that the source task representations generalize to the target task.

\section{Background: Deep Reinforcement Learning}

A Markov Decision Process (MDP) is defined as a tuple ($\stateSpace$, $\actionSpace$, $\transProb$, $\rewardFunc$, $\disFactor$) where $\stateSpace$ is a set of states, $\actionSpace$ is a set of actions, $\transProb(s'|s,a)$ is the transition probability of ending up in state $s'$ when executing action $a$ in state $s$, $\mathcal{R}$ is the reward function mapping states in $\mathcal{S}$ to rewards in $\mathbb{R}$, and $\gamma$ is a discount factor. An agent's behaviour in an MDP is represented as a policy $\pi(a|s)$ which defines the probability of executing action $a$ in state $s$. For a given policy, we can further define the Q-value function $Q^{\pi}(s,a) = \mathop{\mathbb{E}}[\sum_{t=0}^{H}\gamma^t r_t | s_0=s, a_0=a]$ where $H$ is the step when the game ends. The Q-function represents the expected future discounted reward when starting in a state $s$, executing $a$, and then following policy $\pi$ until a terminating state is reached. There always exists at least one optimal state-action value function, $Q^{*}(s,a)$, such that $\forall s\in S,a\in A$, $Q^{*}(s,a) = \max_{\pi}Q^{\pi}(s,a)$ \citep{sutton1998_rl}. The optimal Q-function can be rewritten as a Bellman equation:

\begin{equation}
Q^{*}(\state,\action) = \mathop{\mathbb{E}}_{\nextState\sim\transProb(\cdot|\state,\action)}\left[\reward + \disFactor \cdot \max_{\nextAction \in \actionSpace} Q^{*}(\nextState, \nextAction)\right].
\end{equation}

An optimal policy can be constructed from the optimal Q-function by choosing, for a given state, the action with highest Q-value. Q-learning, a reinforcement learning algorithm, uses iterative backups of the Q-function to converge towards the optimal Q-function. Using a tabular representation of the Q-function, this is equivalent to setting $Q^{(n+1)}(\state,\action) = \mathop{\mathbb{E}}_{\nextState\sim\transProb(\cdot|\state,\action)}[\reward + \disFactor \cdot \max_{\nextAction \in \actionSpace} Q^{(n)}(\nextState,\nextAction)]$ for the (n+1)th update step \citep{sutton1998_rl}. Because the state space in the ALE is too large to tractably store a tabular representation of the Q-function, the Deep Q-Network (DQN) approach uses a deep function approximator to represent the state-action value function \citep{mnih2015_humanlevel}. To train a DQN on the (n+1)th step, we set the network's loss to

\begin{equation}
L^{(n+1)}(\param^{(n+1)}) = \mathop{\mathbb{E}}_{\state,\action,\reward,\nextState\sim\mathcal{\replayMem(\cdot)}}\left[\left(\reward + \disFactor \cdot \max_{\nextAction \in \actionSpace} Q(\nextState,\nextAction;\param^{(n)}) - Q(\state,\action;\param^{(n+1)})\right)^2\right],
\end{equation}

where $\replayMem(\cdot)$ is a uniform probability distribution over a replay memory, which is a set of the m previous $(\state,\action,\reward,\nextState)$ transition tuples seen during play, where m is the size of the memory. The replay memory is used to reduce correlations between adjacent states and is shown to have large effect on the stability of training the network in some games.

\section{Actor-Mimic}

\subsection{Policy Regression Objective}

Given a set of source games ${\sourceGame_1,...,\sourceGame_N}$, our first goal
is to obtain a single multitask policy network that can play any source game at
as near an expert level as possible.  
To train this multitask policy network,
we use guidance from a set of expert DQN networks
${\expertNet_1,...,\expertNet_N}$, where $\expertNet_i$ is an expert
specialized in source task $\sourceGame_i$. 
One possible definition of ``guidance'' would be to define a squared loss that would match Q-values between the student network and the experts. As the range of the expert value functions could vary widely between games, we found it difficult to directly distill knowledge from the expert value functions. The alternative we develop here is to instead match policies by first transforming Q-values using a softmax. Using the softmax gives us outputs which are bounded in the unit interval and so the effects of the different scales of each expert's Q-function are diminished, achieving higher stability during learning.
Intuitively, we can view using the softmax from the perspective of forcing the student to focus more on mimicking the action chosen by the guiding expert at each state, where the exact values of the state are less important. 
We call this method ``Actor-Mimic'' as it is an actor, i.e. policy, that mimics the decisions of a set of experts. In
particular, our technique first transforms each expert DQN into a policy
network by a Boltzmann distribution defined over the Q-value outputs,
\begin{equation}
\policy_{\expertNet_i}(\action|\state) = \frac{e^{\tau^{-1}Q_{\expertNet_i}(\state,\action)}}{\sum\limits_{\actionTwo\in\actionSpaceExperti} e^{\tau^{-1}Q_{\expertNet_i}(\state,\actionTwo)}},
\end{equation}
where $\tau$ is a temperature parameter and $\actionSpaceExperti$ is the action space used by the expert $\expertNet_i$, $\actionSpaceExperti \subseteq \actionSpace$. 
Given a state $s$ from source task $S_i$, we then define the policy objective over the multitask network as the cross-entropy between the expert network's policy and the current multitask policy: 
\begin{equation}
\label{eq:policy}
\lossPolicy^i(\param) = \sum_{\action\in\actionSpaceExperti}\policy_{\expertNet_i}(a|s)\log\policy_{\amNet}(\action|\state;\param),
\end{equation}
where $\policy_{\amNet}(\action|\state;\param)$ is the multitask Actor-Mimic Network (AMN) policy, parameterized by $\param$. In contrast to the Q-learning objective which recursively relies on itself as a target value, we now have a stable supervised training signal (the expert network output) to guide the multitask network. 

To acquire training data, we can sample either the expert network or the AMN action outputs to generate the trajectories used in the loss. Empirically we have observed that sampling from the AMN while it is learning gives the best results. We later prove that in either case of sampling from the expert or AMN as it is learning, the AMN will converge to the expert policy using the policy regression loss, at least in the case when the AMN is a linear function approximator. We use an $\epsilon$-greedy policy no matter which network we sample actions from, which with probability $\epsilon$ picks a random action uniformly and with probability $1 - \epsilon$ chooses an action from the network.

\subsection{Feature Regression Objective}

We can obtain further guidance from the expert networks in the following way. Let $\featureAMNet(\state)$ and $\featureExpertNeti(\state)$ be the hidden activations in the feature (pre-output) layer of the AMN and i'th expert network computed from the input state $\state$, respectively. Note that the dimension of $\featureAMNet(\state)$ does not necessarily need to be equal to $\featureExpertNeti(\state)$, and this is the case in some of our experiments. We define a feature regression network $f_i(\featureAMNet(\state))$ that, for a given state $\state$, attempts to predict the features $\featureExpertNeti(\state)$ from $\featureAMNet(\state)$. The architecture of the 
mapping $f_i$ can be defined arbitrarily, and $f_i$ can be trained using the following feature regression loss: 
\begin{equation}
\lossFeature^i(\param, \param_{f_i}) = \left\|f_i(\featureAMNet(\state;\param);\param_{f_i}) - \featureExpertNeti(\state)\right\|^2_2,
\end{equation}
where $\param$ and $\param_{f_i}$ are the parameters of the AMN and $i^{th}$ feature regression network, respectively. When training this objective, the error is fully back-propagated from the feature regression network output through the layers of the AMN. In this way, the feature regression objective provides pressure on the AMN to compute features that can predict an expert's features. A justification for this objective is that if we have a perfect regression from multitask to expert features, all the information in the expert features is contained in the multitask features. The use of the separate feature prediction network $f_i$ for each task enables the multitask network to have a different feature dimension than the experts as well as prevent issues with identifiability. Empirically we have found that the feature regression objective's primary benefit is that it can increase the performance of transfer learning in some target tasks.

\subsection{Actor-Mimic Objective}

Combining both regression objectives, the Actor-Mimic objective is thus defined as
\begin{equation}
\lossAM^i(\param, \param_{f_i}) = \lossPolicy^i(\param) + \beta * \lossFeature^i(\param, \param_{f_i}),
\end{equation}
where $\beta$ is a scaling parameter which controls the relative weighting of the two objectives. Intuitively, we can think of the policy regression objective as a teacher (expert network) telling a student (AMN) how they should act (mimic expert's actions), while the feature regression objective is analogous to a teacher telling a student why it should act that way (mimic expert's thinking process). 

\subsection{Transfering Knowledge: Actor-Mimic as Pretraining}

Now that we have a method of training a network that is an expert at all source tasks, we can proceed to the task of transferring source task knowledge to a novel but related target task. To enable transfer to a new task, we first remove the final softmax layer of the AMN. We then use the weights of AMN as an instantiation for a DQN that will be trained on the new target task. The pretrained DQN is then trained using the same training procedure as the one used with a standard DQN.
Multitask pretraining can be seen as initializing the DQN with a set of features that are effective at defining policies in related tasks. If the source and target tasks share similarities, it is probable that some of these pretrained features will also be effective at the target task (perhaps after slight fine-tuning).

\section{Convergence properties of Actor-Mimic}
\vspace{-0.1in}
We further study the convergence properties of the proposed Actor-Mimic under a framework similar to \citep{perkins2002convergent}. The analysis mainly focuses on L2-regularized policy regression without feature regression. Without losing generality, the following analysis focuses on learning from a single game expert softmax policy $\policy_\expertNet$. The analysis can be readily extended to consider multiple experts on multiple games by absorbing different games into the same state space. Let $\stationaryProb^{\policy}(\state)$ be the stationary distribution of the Markov decision process under policy $\policy$ over states $\state \in \stateSpace$. The policy regression objective function can be rewritten using expectation under the stationary distribution of the Markov decision process:
\be
\min_\param  \expectation_{\state\sim \stationaryProb^{\policy_{\amNet,\epsilon\text{-greedy}}}(\cdot)}\bigg[\mathcal{H}\bigg(\policy_{\expertNet}(\action|\state),\  \policy_\amNet(\action|\state;\param)\bigg)\bigg] + \lambda \|\param\|^2_2,
\label{eq:ev_obj}
\ee
where $\mathcal{H(\cdot)}$ is the cross-entropy measure and $\lambda$ is the coefficient of weight decay that is necessary in the following analysis of the policy regression. 
Under Actor-Mimic, the learning agent interacts with the environment by following an $\epsilon$-greedy strategy of some Q function. The mapping from a Q function to an $\epsilon$-greedy policy $\policy_{\epsilon\text{-greedy}}$ is denoted by an operator $\softTransform$, where $\policy_{\epsilon\text{-greedy}} = \softTransform(Q)$.
 To avoid confusion onwards, we use notation $p(\action|\state;\param)$ for the softmax policies in the policy regression objective. 

Assume each state in a Markov decision process is represented by a compact $K$-dimensional feature representation $\linearFeat(\state) \in \mathbb{R}^K$. Consider a linear function approximator for Q values with parameter matrix $\param \in \mathbb{R}^{K \times |\actionSpace|}$, $\hat{Q}(\state,\action;\param) = \linearFeat(\state)^T\param_\action$, where $\param_\action$ is the $\action^{th}$ column of $\param$. The corresponding softmax policy of the linear approximator is defined by $p(\action|\state;\param) \propto \text{exp}\{\hat{Q}(\state,\action;\param)\}$.  
\subsection{Stochastic stationary policy}
\vspace{-0.1in}
For any stationary policy $\pi^*$, the stationary point of the objective function Eq. (\ref{eq:ev_obj}) can be found by setting its gradient w.r.t. $\param$ to zero. Let $P_\param$ be a $|\stateSpace| \times |\actionSpace|$ matrix where its $i^{th}$ row $j^{th}$ column element is the softmax policy prediction $p(\action_j|\state_i;\param)$ from the linear approximator. Similarly, let $\policyMat_\expertNet$ be a $|\stateSpace| \times |\actionSpace|$ matrix for the softmax policy prediction from the expert model. Additionally, let $\stationaryProb_{\policy}$ be a diagonal matrix whose entries are $\stationaryProb_{\policy}(s)$. A simple gradient following algorithm on the objective function Eq. (\ref{eq:ev_obj}) has the following expected update rule using a learning rate $\alpha_t >0$ at the $t^{th}$ iteration:
\be
\Delta \param_t = -\alpha_t \bigg[ \Phi^TD_{\pi}(P_{\param_{t-1}} - \policyMat_\expertNet) + \lambda \param_{t-1} \bigg].
\label{eq:ev_grad}
\ee
\begin{lemma}
\label{lemma:fixed_policy}
Under a fixed policy $\policy^*$ and a learning rate schedule that satisfies $\sum_{t=1}^\infty \alpha_t = \infty$, $\sum_{t=1}^\infty \alpha^2_t < \infty$, the parameters $\param$, updated by the stochastic gradient descent learning algorithm described above, asymptotically almost surely converge to a unique solution $\param^*$. 
\end{lemma}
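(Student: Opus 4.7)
The plan is to argue that under the fixed policy $\policy^*$ the objective in Eq.~(\ref{eq:ev_obj}) is a deterministic strictly convex function of $\param$, that the expected update in Eq.~(\ref{eq:ev_grad}) is precisely its gradient, and then to invoke a standard stochastic approximation theorem.

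First I would fix $\policy^*$, which freezes the stationary distribution $\stationaryProb^{\policy^*}$ so that
\[
L(\param) \;=\; \sum_{\state}\stationaryProb^{\policy^*}(\state)\,\mathcal{H}\!\big(\policy_\expertNet(\cdot|\state),\, p(\cdot|\state;\param)\big) \;+\; \lambda\|\param\|_2^2
\]
becomes a deterministic function of $\param$. The per-state cross-entropy expands to $-\sum_\action \policy_\expertNet(\action|\state)\linearFeat(\state)^T\param_\action + \log\sum_\action e^{\linearFeat(\state)^T\param_\action}$, a linear term plus a log-sum-exp composed with a linear map, hence convex in $\param$. Non-negative combinations preserve convexity, and the strictly convex $L_2$ term (with $\lambda>0$) makes $L$ strictly convex, so it admits a unique minimizer $\param^*$.

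Next, differentiating the per-state cross-entropy with respect to column $\param_\action$ yields $\linearFeat(\state)\big(p(\action|\state;\param) - \policy_\expertNet(\action|\state)\big)$. Assembling these across states and actions and weighting by $\stationaryProb^{\policy^*}$ recovers exactly the matrix expression $\Phi^T \stationaryProb_{\policy^*}(\softmaxMat_\param - \policyMat_\expertNet)$ in Eq.~(\ref{eq:ev_grad}), with the factor of $2$ from the regularizer absorbed into $\lambda$ as in the paper. So the deterministic update is $-\alpha_t \nabla L(\param_{t-1})$, and the stochastic update obtained by sampling $\state\sim\stationaryProb^{\policy^*}$ and $\action\sim\policy_\expertNet(\cdot|\state)$ per iteration is an unbiased estimator of $\nabla L(\param_{t-1})$.

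To conclude I would appeal to a standard stochastic approximation result (e.g.\ Robbins--Siegmund, or Bertsekas--Tsitsiklis Proposition~4.1) for strictly convex smooth objectives: the step-size conditions $\sum_t\alpha_t=\infty$, $\sum_t\alpha_t^2<\infty$, together with bounded gradient noise variance and Lipschitz $\nabla L$, force $\param_t \to \param^*$ almost surely. The hard part will be cleanly verifying these regularity conditions in the present setting: assuming $\|\linearFeat(\state)\|$ is uniformly bounded (automatic if $\stateSpace$ is finite), the softmax Jacobian is uniformly bounded, making $\nabla L$ Lipschitz and the gradient noise square-integrable, while the weight-decay term furnishes strong convexity with modulus $2\lambda$, so $\|\param_t-\param^*\|^2$ becomes a Lyapunov function that contracts in expectation up to a summable noise term. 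With these bounds in hand the stochastic approximation machinery delivers the claimed almost sure convergence to the unique $\param^*$.
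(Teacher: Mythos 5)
Your proposal follows essentially the same route as the paper's own argument: the paper likewise observes that under a fixed policy the objective is a convex (L2-regularized) multinomial logistic regression with a bounded Lipschitz constant due to compact features, hence has a unique stationary point $\param^*$, and then invokes the Robbins--Monro stochastic approximation argument. Your version simply fills in the details (explicit log-sum-exp convexity, the gradient computation matching Eq.~(\ref{eq:ev_grad}), and the regularity conditions for the stochastic approximation theorem) that the paper leaves as a sketch.
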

When the policy $\pi^*$ is fixed, the objective function Eq. (\ref{eq:ev_obj}) is convex and is the same as a multinomial logistic regression problem with a bounded Lipschitz constant due to its compact input features. Hence there is a unique stationary point $\param^*$ such that $\Delta \param^* = 0$. The proof of Lemma \ref{lemma:fixed_policy} follows the stochastic approximation argument~\citep{RobbinsMonro}.

\subsection{Stochastic adaptive policy}
\vspace{-0.1in}
\label{sec:stochastic_adaptive_policy}
Consider the following learning scheme to adapt the agent's policy. The learning agent interacts with the environment and samples states by following a fixed $\egreedy$ policy $\policy'$. Given the samples and the expert prediction, the linear function approximator parameters are updated using Eq. (\ref{eq:ev_grad}) to a unique stationary point $\param'$. The new parameters $\param'$ are then used to establish a new $\egreedy$ policy $\policy'' = \softTransform(\hat{Q}_{\param'})$ through the $\softTransform$ operator over the linear function $\hat{Q}_{\param'}$. The agent under the new policy $\policy''$ subsequently samples a new set of states and actions from the Markov decision process to update its parameters.  The learning agent therefore generates a sequence of policies $\{\policy^1, \policy^2, \policy^3, ...\}$. The proof for the following theorem is given in Appendix~\ref{app:convergence}.
\begin{theorem}
\label{thm:convergence}
Assume the Markov decision process is irreducible and aperiodic for any policy $\policy$ induced by the $\Gamma$ operator and $\Gamma$ is Lipschitz continuous with a constant $c_\epsilon$, then the sequence of policies and model parameters generated by the iterative algorithm above converges almost surely to a unique solution $\policy^*$ and $\param^*$. 
\end{theorem}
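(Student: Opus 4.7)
I would recast the adaptive scheme as a deterministic fixed-point iteration on policy space, perturbed by vanishing stochastic noise, and argue by a contraction plus stochastic approximation combination. Define a map $F$ on $\epsilon$-greedy policies by: given $\policy$, let $\stationaryProb_\policy$ be the (unique) stationary distribution guaranteed by irreducibility and aperiodicity, let $\param^*(\policy)$ be the unique minimizer of (\ref{eq:ev_obj}) with the expectation taken under $\stationaryProb_\policy$ (existence and uniqueness provided by Lemma~\ref{lemma:fixed_policy}), and set $F(\policy) = \softTransform(\hat{Q}_{\param^*(\policy)})$. The outer iterates then satisfy $\policy^{k+1} = F(\policy^k)$ up to the vanishing error from the inner SGD, so it suffices to show that $F$ has a unique attracting fixed point in an appropriate metric.

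To establish that $F$ is a strict contraction (I would use total variation on policies, equivalently an $\ell_\infty$ norm on $\param$), I would chain three Lipschitz estimates. (i) \emph{Stationary distribution in policy:} the induced transition matrix $\transMat^\policy = \sum_\action \policy(\action|\cdot)\transProb(\cdot,\action,\cdot)$ is linear in $\policy$, and classical perturbation theory for irreducible aperiodic chains (Seneta's bound on the condition number of the Perron eigenvector) gives $\|\stationaryProb_{\policy_1}-\stationaryProb_{\policy_2}\|_1 \le L_D\|\policy_1-\policy_2\|_\infty$ for a finite $L_D$. (ii) \emph{Optimal parameter in stationary distribution:} the optimality condition $\Phi^T \stationaryProb_\policy (P_{\param^*} - \policyMat_\expertNet) + \lambda\param^* = 0$ defines $\param^*$ implicitly, and since the weight decay lower-bounds the Hessian of the strongly convex regression objective by $\lambda I$, the implicit function theorem yields Lipschitz continuity of $\param^*$ in the entries of $\stationaryProb_\policy$ with constant $O(1/\lambda)$, using that $\|\linearFeat\|_\infty$ and the expert probabilities are bounded. (iii) \emph{Policy in parameter:} by assumption $\softTransform$ is Lipschitz with constant $c_\epsilon$, and $\hat{Q}_\param$ is linear in $\param$, so the composition is Lipschitz with constant $c_\epsilon \|\Phi\|$. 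Combining gives that $F$ is Lipschitz with constant $L = c_\epsilon \|\Phi\| \cdot O(1/\lambda) \cdot L_D$; provided $L<1$, Banach's fixed-point theorem yields a unique pair $(\policy^*, \param^*)$ and geometric convergence of the noise-free iterates.

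To upgrade this deterministic convergence to almost-sure convergence of the actual stochastic iterates, I would invoke a two-time-scale stochastic approximation argument in the spirit of Borkar: the inner gradient descent converges a.s.\ to $\param^*(\policy^k)$ by Lemma~\ref{lemma:fixed_policy}, and runs on a faster time scale than the outer policy update, so the outer iteration tracks the ODE $\dot{\policy} = F(\policy)-\policy$, whose unique asymptotically stable equilibrium is $\policy^*$ by the contraction. A cleaner packaging, if preferred, is to run the inner loop to $\delta_k$-accuracy between outer updates with $\sum_k \delta_k < \infty$, reducing the problem to Banach iteration with summable additive perturbation.

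The main obstacle I anticipate is quantitative: guaranteeing that the product $c_\epsilon \|\Phi\| L_D / \lambda$ is strictly below one. The Markov-chain Lipschitz constant $L_D$ can degrade if the family $\{\transMat^\policy : \policy \in \text{range}(\softTransform)\}$ contains chains with arbitrarily long mixing times, and $c_\epsilon$ grows as $\epsilon \to 0$ since the hard argmax is discontinuous in $Q$. The weight-decay coefficient $\lambda$ is the lever that makes $\param^*$ insensitive to perturbations in the sampling distribution, so I expect the actual proof to hinge on either a side assumption that $\lambda$ is large enough relative to $c_\epsilon L_D \|\Phi\|$, or on establishing a uniform ergodicity bound for the chains induced by policies in the compact range of $\softTransform$, which would bound $L_D$ independently of the specific $\policy$.
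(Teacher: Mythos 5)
Your proposal follows the same overall strategy as the paper's proof in Appendix~\ref{app:convergence}: treat one outer iteration as a map on policies, chain Lipschitz estimates through stationary distribution $\to$ optimal parameters $\to$ new $\egreedy$ policy, conclude that the map is a contraction, and combine the Banach fixed-point theorem with Lemma~\ref{lemma:fixed_policy} for the inner convergence. Your steps (i) and (iii) are exactly the paper's Lemma~2 (Seneta's perturbation bound for the stationary distribution of an irreducible aperiodic chain) and its use of the Lipschitz assumption on $\softTransform$ composed with the linear map $\param \mapsto \Phi\param$. Where you genuinely diverge is step (ii): rather than invoking the implicit function theorem, the paper sets $\Delta\param^1 = \Delta\param^2 = 0$ in Eq.~(\ref{eq:ev_grad}), solves for $\param = -\tfrac{1}{\lambda}\Phi^T D_\pi (P_\param - \policyMat_\expertNet)$, and telescopes the difference, controlling the $P_{\param^1}-P_{\param^2}$ term with a separate lemma bounding the softmax Jacobian. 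Your strong-convexity route buys something real here: the paper's telescoping leaves $\|P_{\param^1}-P_{\param^2}\|$ --- which itself depends on $\|\param^1-\param^2\|$ --- on the right-hand side, so closing that inequality silently requires an additional smallness condition of the form $\tfrac{1}{\lambda}\|\Phi\|^2 \|D_{\pi^1}\| c_J < 1$, whereas lower-bounding the Hessian of the regularized objective by $\lambda I$ handles this cleanly. You also correctly isolate the weak point that both arguments share: the final contraction constant is a product of the form $c_\epsilon \|\Phi\| c_D / \lambda$ that is never shown to lie below one; the paper disposes of this with the sentence ``for a Lipschitz constant $c_\epsilon \ge c$, there exists a $\beta$,'' which is an implicit assumption on the relative sizes of $c_\epsilon$, $\lambda$, and the feature norms rather than a derivation, exactly as you anticipate. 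Your two-time-scale packaging of the stochastic part is likewise more explicit than the paper's, which simply cites Lemma~\ref{lemma:fixed_policy} and the contraction mapping theorem. In short: same skeleton, a cleaner middle estimate, and an honest accounting of the one condition the paper assumes away.
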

\vspace{-0.1in}
%\begin{proof}
%See Appendix \ref{app:convergence}
%\end{proof}
\subsection{Performance Guarantee}
\vspace{-0.1in}
The convergence theorem implies the Actor-Mimic learning algorithm also belongs to the family of no-regret algorithms in the online learning framework, see \cite{ross2011_dagger} for more details. Their theoretical analysis can be directly applied to Actor-Mimic and results in a performance guarantee bound on how well the Actor-Mimic model performs with respect to the guiding expert. 

Let $Z^{\pi'}_t (s,\pi)$ be the t-step reward of executing $\pi$ in the initial state $s$ and then following policy $\pi'$. The cost-to-go for a policy $\pi$ after $T$-steps is defined as $J_T(\pi) = -T\expectation_{\state \sim \stationaryProb(\cdot)}\left[\rewardFunc(\state,\action)\right]$, where $\rewardFunc(\state, \action)$ is the reward after executing action $\action$ in state $\state$.
\begin{proposition}
For the iterative algorithm described in Section (\ref{sec:stochastic_adaptive_policy}), if the loss function in Eq. (\ref{eq:ev_obj}) converges to $\epsilon$ with the solution $\policy_{\amNet}$ and $Z^{\pi^*}_{T-t+1} (s,\pi^*) - Z^{\pi^*}_{T-t+1} (s,a) \ge u$ for all actions $\action \in \actionSpace$ and $t \in \{1, \cdots, T\}$, then the cost-to-go of Actor-Mimic $J_T(\pi_{\amNet})$ grows linearly after executing $T$ actions: $J_T(\pi_{\amNet}) \le J_T(\pi_{\expertNet}) +  uT\epsilon/\log2$.
\end{proposition}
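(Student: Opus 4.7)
The plan is to combine Theorem~\ref{thm:convergence} with the DAgger-style performance bound of \citep{ross2011_dagger}. Theorem~\ref{thm:convergence} guarantees that the iterative Actor-Mimic procedure converges to a stationary point at which the cross-entropy loss of Eq.~(\ref{eq:ev_obj}) is evaluated under the state distribution $\stationaryProb^{\policy_{\amNet,\egreedy}}$ induced by the learner itself. This is precisely the no-regret online-learning condition required by the DAgger analysis, so we can transport that bound into our setting off the shelf.

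First I would invoke the DAgger performance bound. Under the one-step advantage hypothesis in the proposition (deviating from $\policy_{\expertNet}$ for a single step and then reverting to $\policy_{\expertNet}$ costs at most $u$ in expected return), each state at which the learner disagrees with the expert contributes at most $u$ of excess cost, and summing over the $T$-step horizon gives
\[
J_T(\policy_{\amNet}) \;\leq\; J_T(\policy_{\expertNet}) + uT \cdot \ell_{01}(\policy_{\amNet}),
\]
where $\ell_{01}(\policy_{\amNet})$ denotes the expected 0-1 disagreement between $\policy_{\amNet}$ and $\policy_{\expertNet}$ on states drawn from $\stationaryProb^{\policy_{\amNet,\egreedy}}$.

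Second, I would convert the cross-entropy bound $\epsilon$ into a 0-1 loss bound. The key observation is that whenever the learner's modal action disagrees with the expert's, the learner must assign probability at most $1/2$ to the expert's action, hence it incurs cross-entropy at least $\log 2$ on that state. Taking expectations, this yields $\ell_{01}(\policy_{\amNet}) \leq \epsilon/\log 2$. Substituting back into the DAgger bound gives exactly $J_T(\policy_{\amNet}) \leq J_T(\policy_{\expertNet}) + uT\epsilon/\log 2$.

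The main obstacle is aligning the hypotheses of the DAgger theorem with the guarantees produced by Theorem~\ref{thm:convergence}: specifically, that the surrogate loss truly converges to $\epsilon$ on the learner-induced state distribution rather than on the expert's or some fixed distribution. Once this identification is made explicit, both remaining steps are routine, and the $1/\log 2$ factor is simply the price of using cross-entropy (rather than a direct 0-1 surrogate) in Eq.~(\ref{eq:ev_obj}); a slightly different constant could be obtained via Pinsker's inequality, but the form above is the cleanest and matches the proposition.
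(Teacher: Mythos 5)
Your proposal takes essentially the same route the paper intends: the paper gives no explicit proof of this proposition, merely asserting that the no-regret/DAgger analysis of Ross et al.\ (2011) ``can be directly applied,'' and your argument---the per-step advantage bound $J_T(\pi_{\amNet}) \le J_T(\pi_{\expertNet}) + uT\,\ell_{01}$ combined with the observation that a modal disagreement forces cross-entropy at least $\log 2$, yielding $\ell_{01} \le \epsilon/\log 2$---is exactly the intended derivation and in fact supplies more detail than the paper itself provides. The only caveat worth recording is that the cross-entropy-to-0-1 conversion is clean only when the expert softmax policy is (near-)deterministic, since $\mathcal{H}(\policy_{\expertNet},p)$ is taken against the expert's full distribution rather than a point mass; the paper glosses over this as well.
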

The above linear growth rate of the cost-to-go is achieved through sampling from AMN action output $\policy_{\amNet}$, while the cost grows quadratically if the algorithm only samples from the expert action output. Our empirical observations confirm this theoretical prediction. 

\vspace{-0.1in}

\section{Experiments}
\vspace{-0.1in}
In the following experiments, we validate the Actor-Mimic method by demonstrating its effectiveness at both multitask and transfer learning in the 
Arcade Learning Environment (ALE). For our experiments, we use subsets of a collection of 20 Atari games. 19 games of this set were among the 29 games that the DQN method performed at a super-human level. 
We additionally chose 1 game, the game of Seaquest, on which the DQN had performed poorly when compared to a human expert. Details on the training procedure are described in Appendix~\ref{app:trainingdetails}.

\vspace{-0.1in}
\subsection{Multitask}
\vspace{-0.1in}
\label{Actor_Mimic_exp}

\begin{figure}[t!]
\vspace{-0.5in}
\label{Actor_Mimic_results_small}
\begin{tabular}{cccc}
\includegraphics[width=0.225\linewidth]{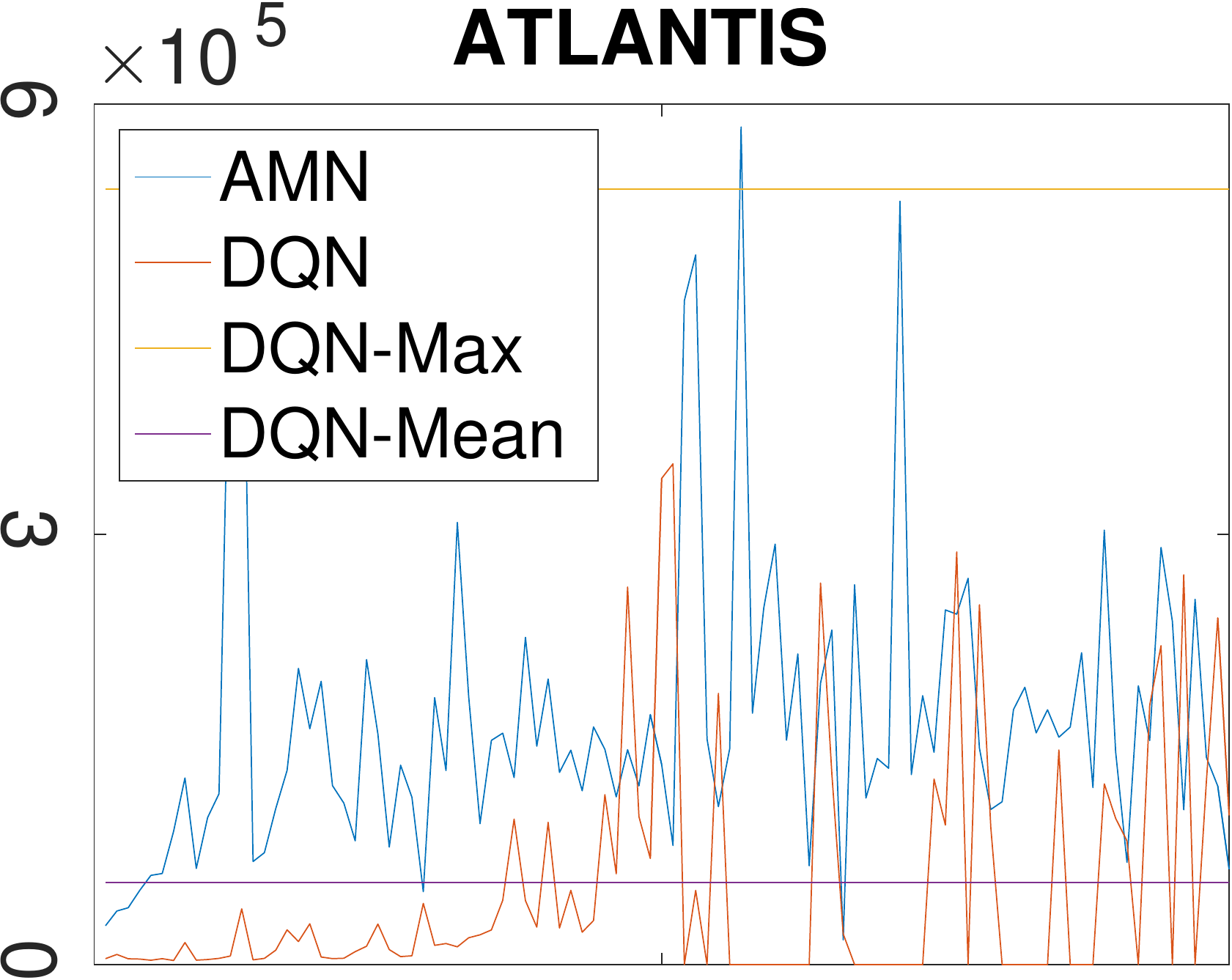} &
\includegraphics[width=0.22\linewidth]{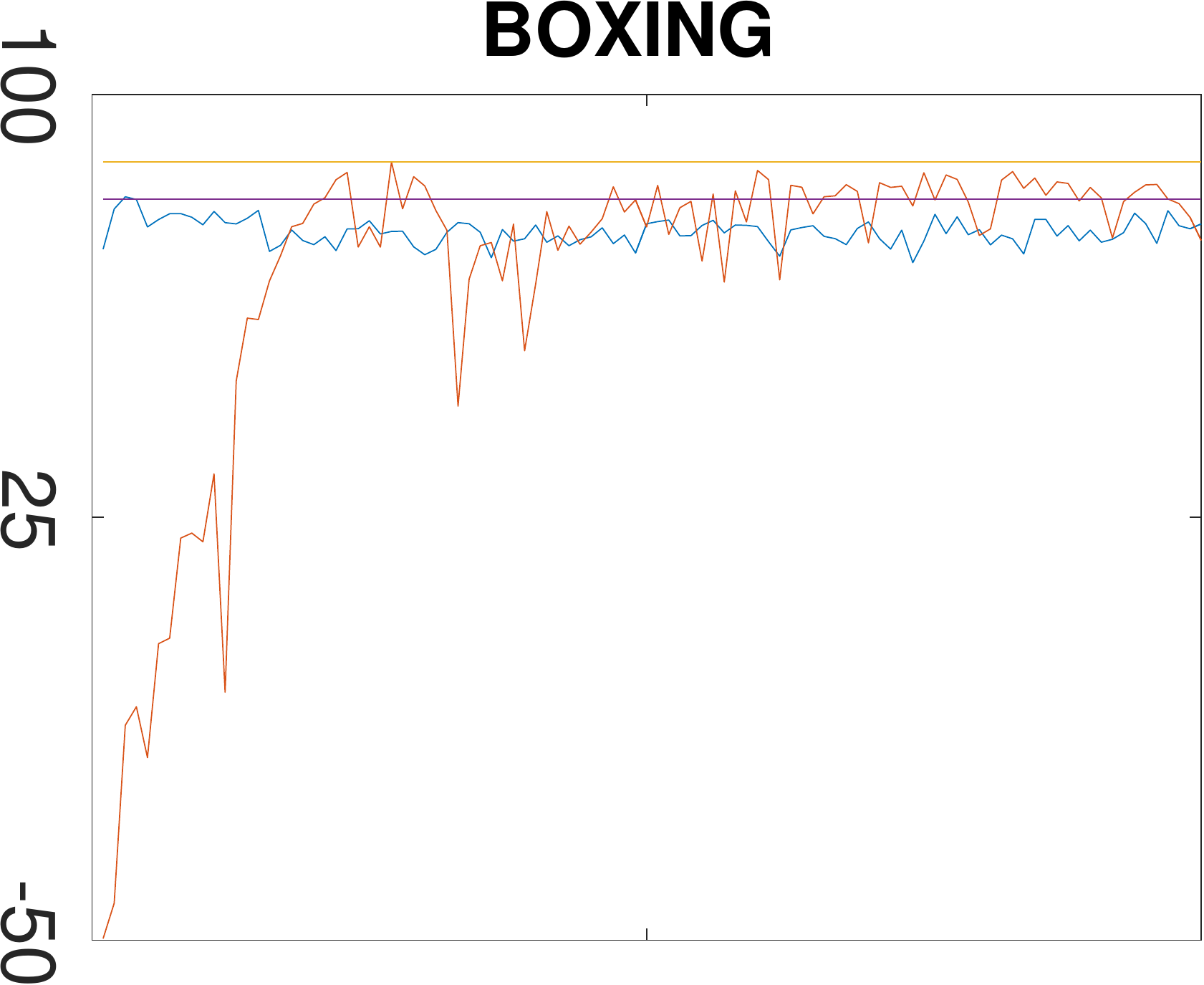} &
\includegraphics[width=0.225\linewidth]{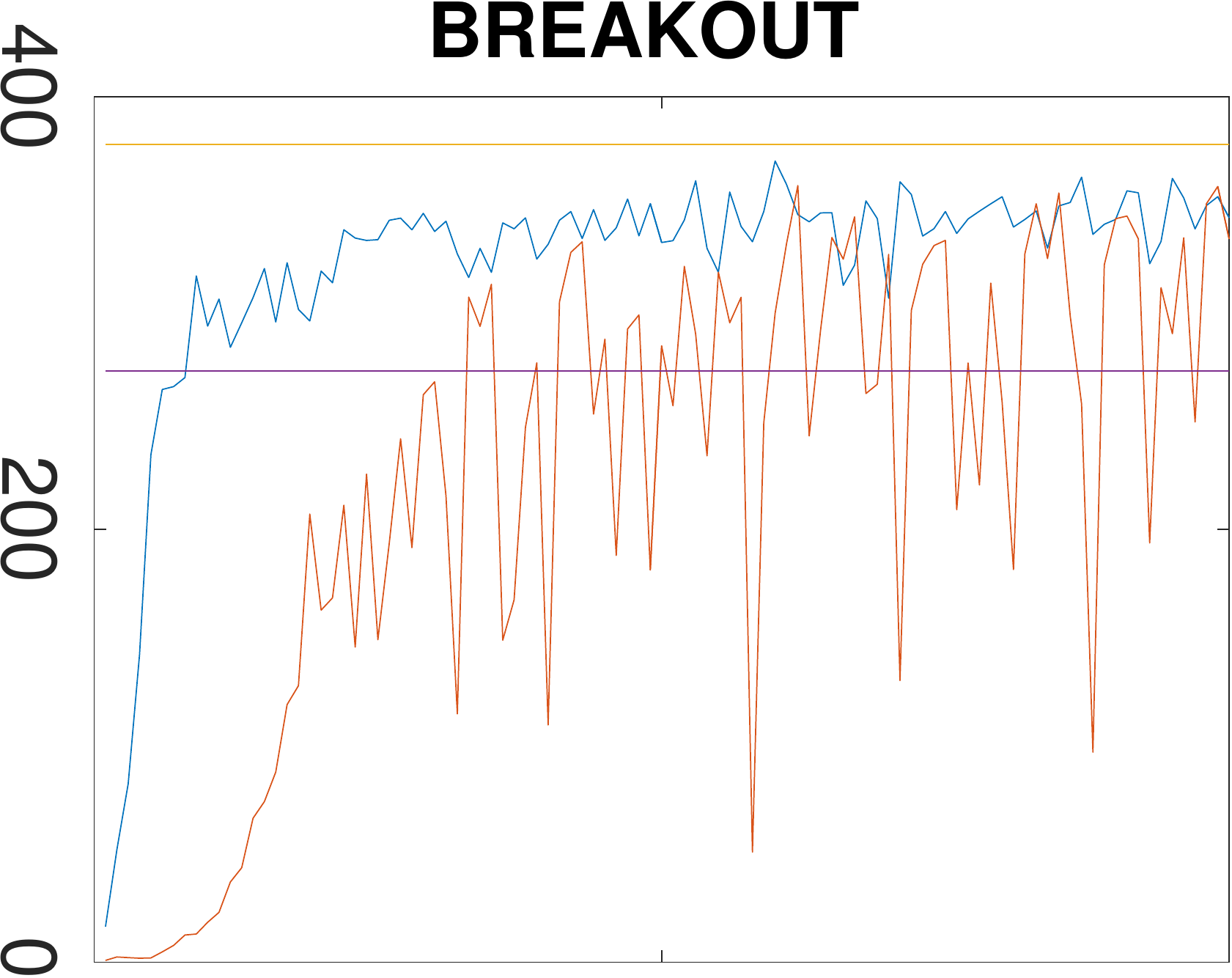} &
\includegraphics[width=0.225\linewidth]{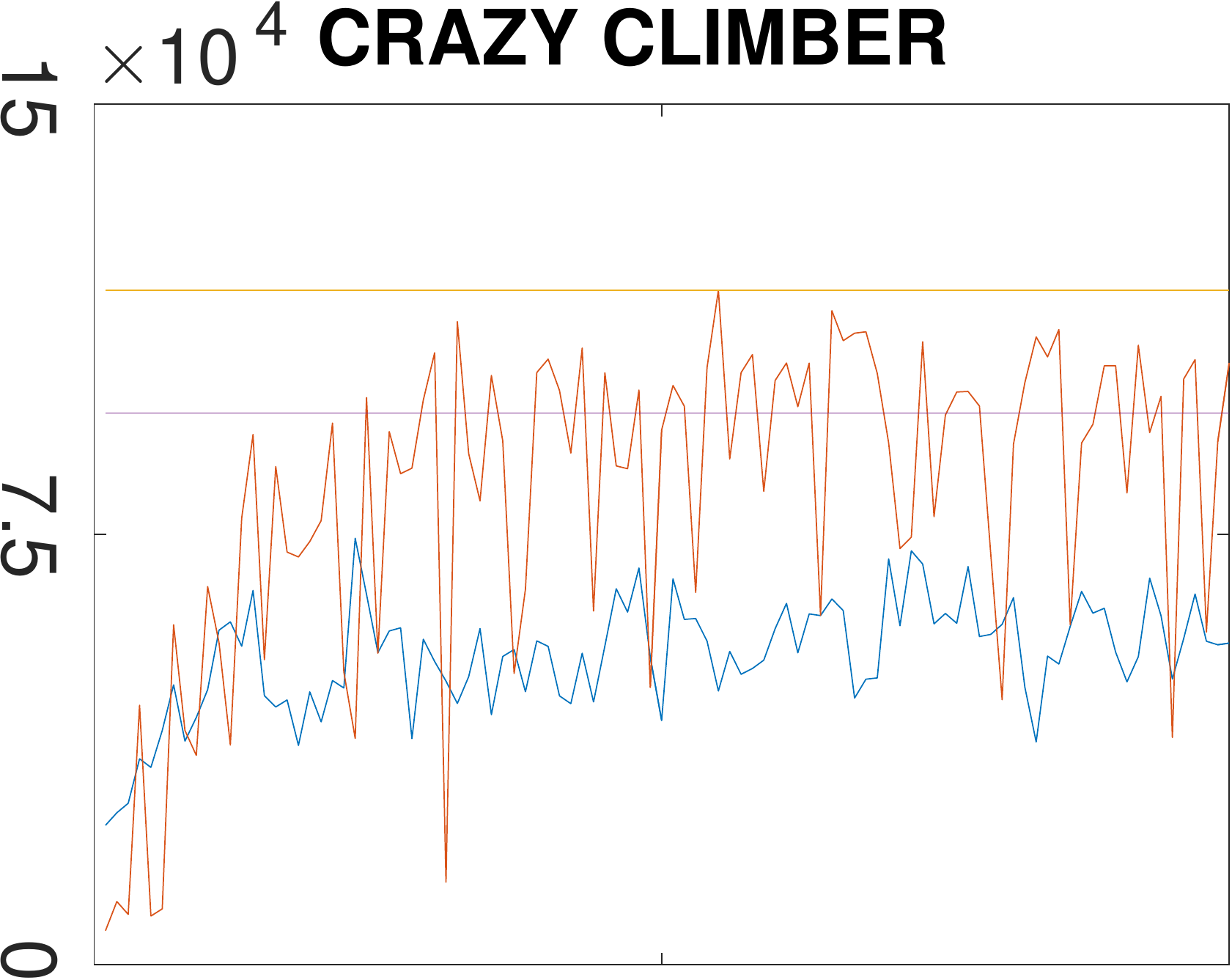} \\ 
\includegraphics[width=0.225\linewidth]{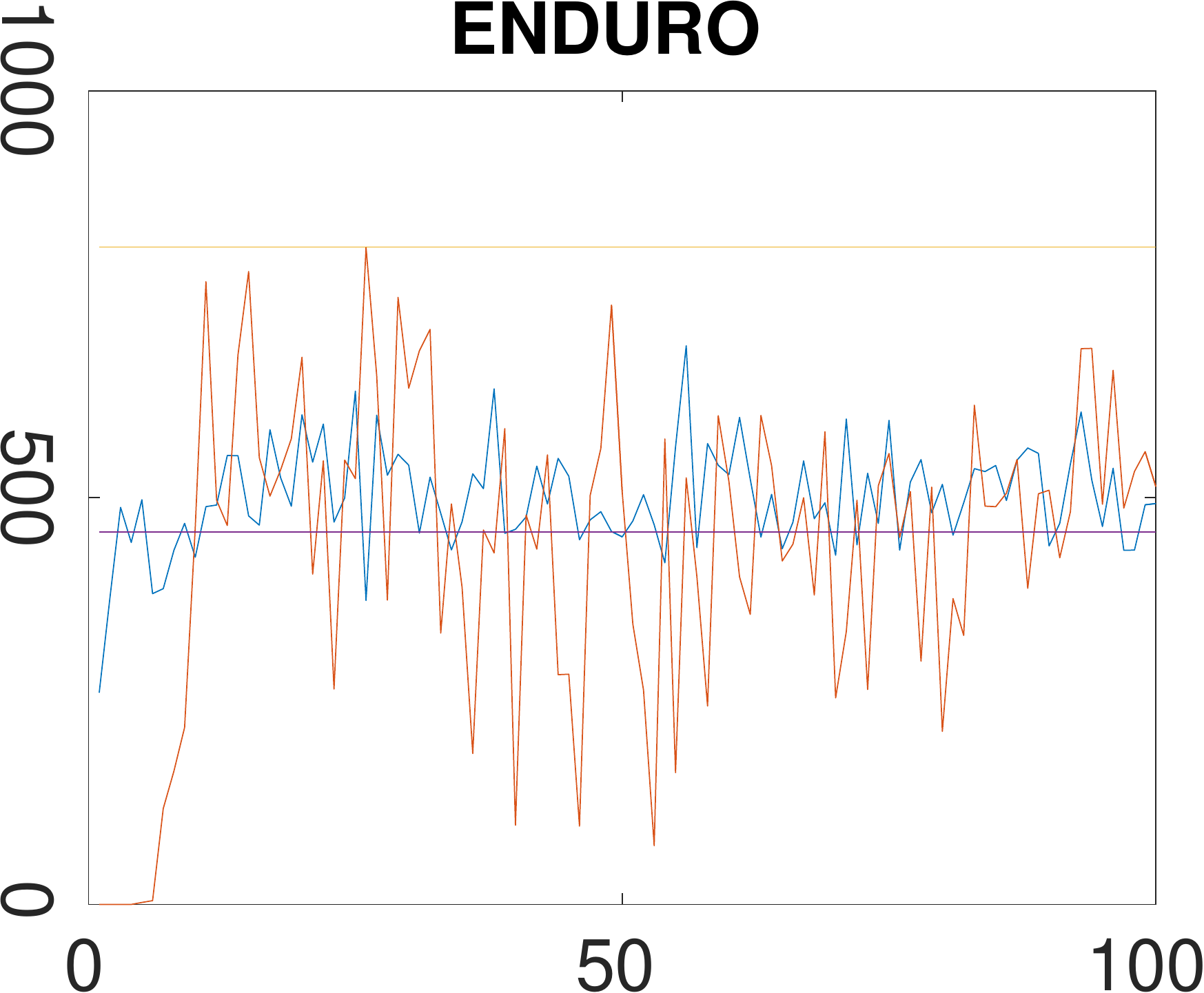} & 
\includegraphics[width=0.225\linewidth]{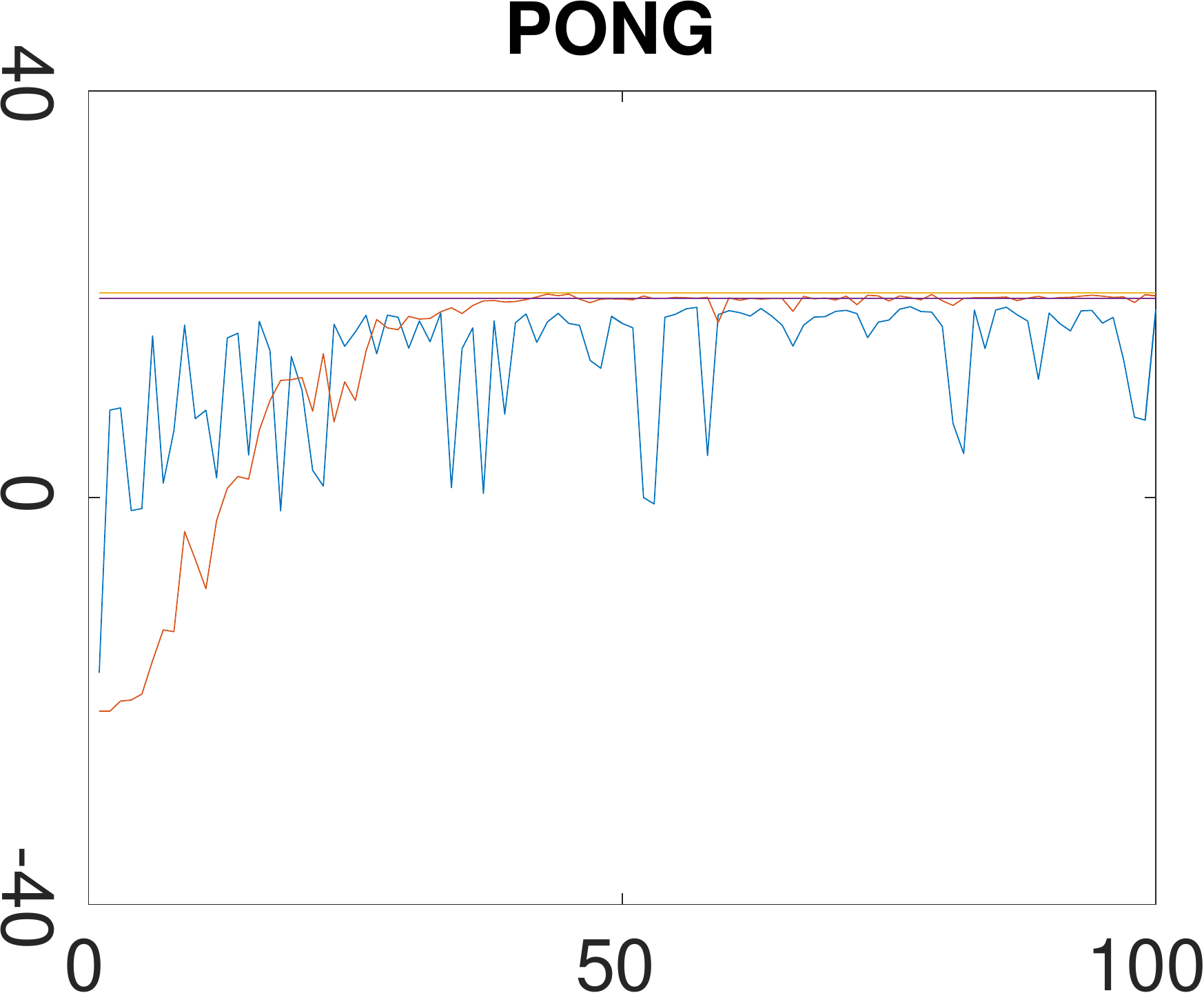} &
\includegraphics[width=0.225\linewidth]{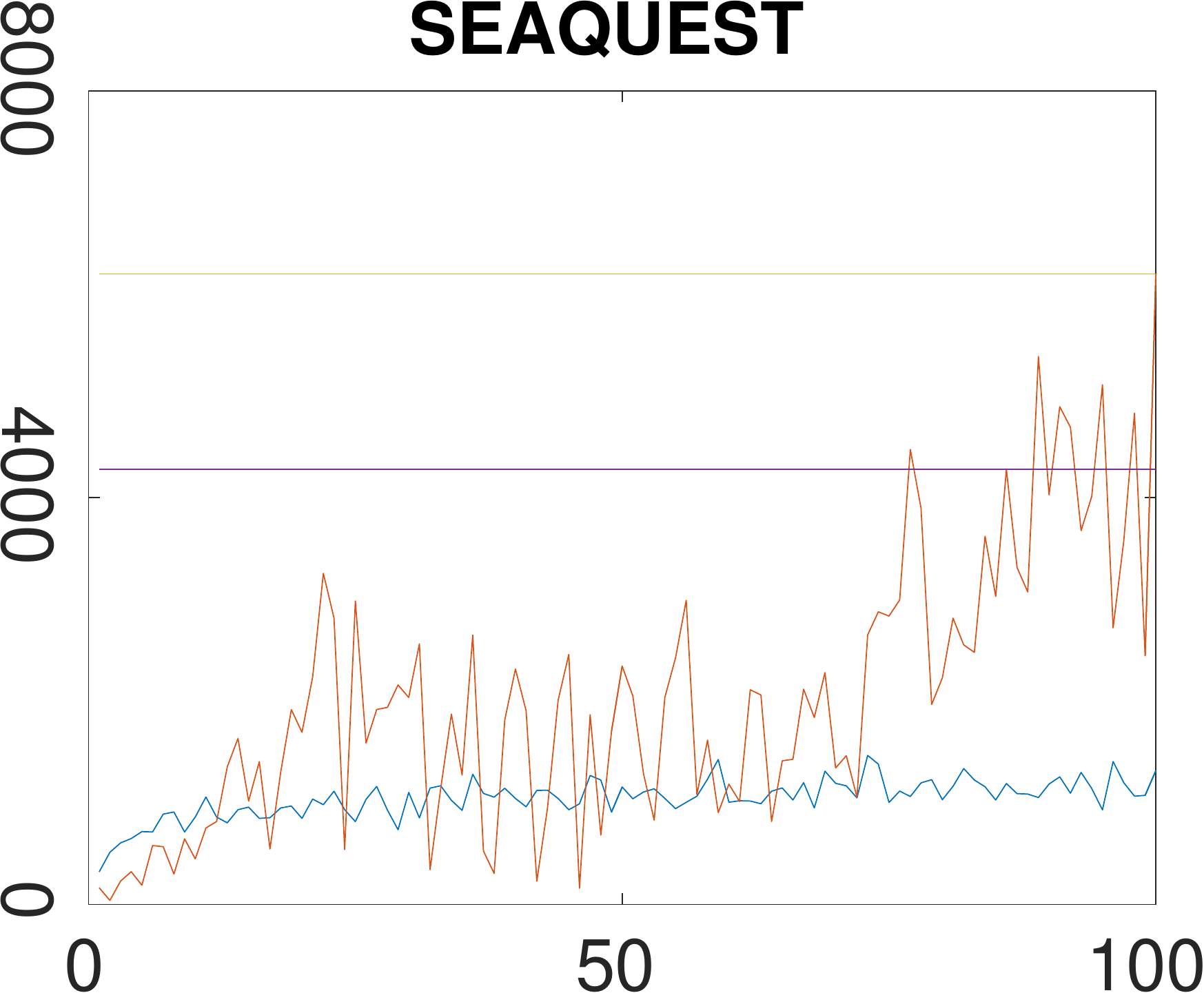} & 
\includegraphics[width=0.225\linewidth]{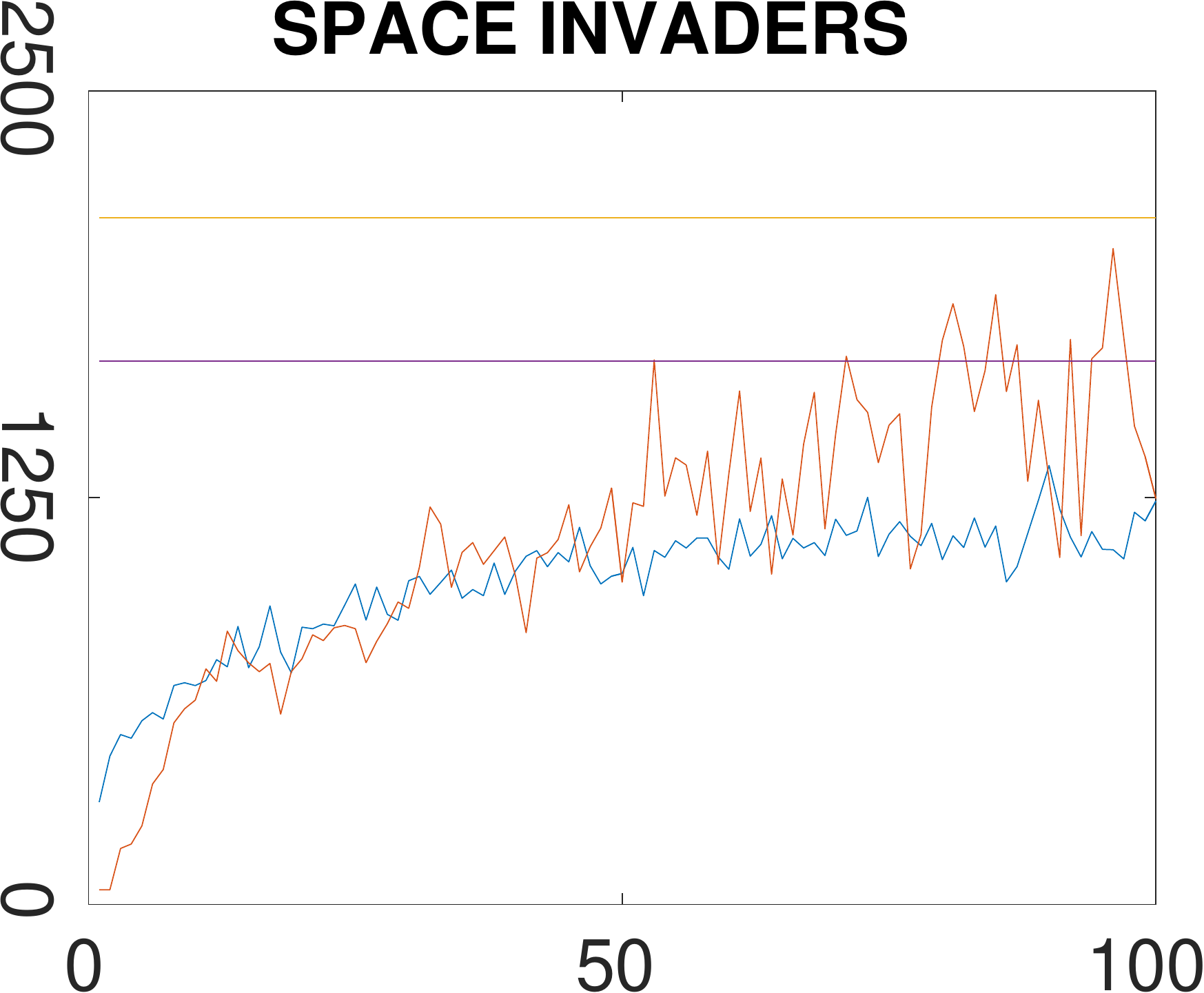} 
\end{tabular}
\vspace{-0.1in}
\caption{\small The Actor-Mimic and expert DQN training curves for 100 training epochs for each of the 8 games. A training epoch is 250,000 frames and for each training epoch we evaluate the networks with a testing epoch that lasts 125,000 frames.  We report AMN and expert DQN test reward for each testing epoch and the mean and max of DQN performance. The max is calculated over all testing epochs that the DQN experienced until convergence while the mean is calculated over the last ten epochs before the DQN training was stopped. In the testing epoch we use $\epsilon = 0.05$ in the $\epsilon$-greedy policy. The y-axis is the average unscaled episode reward during a testing epoch. The AMN results are averaged over 2 separately trained networks.}
\label{AMN_8games_figure}
\vspace{-0.2in}
\end{figure}

\begin{table}[t!]
\vspace{-0.5in}
\centering
\resizebox{\linewidth}{!}{%
\begin{tabular}{| c c | c | c | c | c | c | c | c | c |}
\hline
\multicolumn{2}{|c|}{Network} & Atlantis & Boxing & Breakout & Crazy Climber & Enduro & Pong & Seaquest & Space Invaders \\
\hline
\multirow{2}{*}{DQN} & Mean & 57279  & 81.47 & 273.15 & 96189  & 457.60 & 19.581 & 4278.9 & 1669.2 \\
                     & Max  & 541000 & 88.02 & 377.96 & 117593 & 808.00 & 20.140 & 6200.5 & 2109.7 \\
\hline
\multirow{2}{*}{AMN} & Mean  & 165065 & 76.264 & 347.01 & 57070 & 499.3  & 15.275 & 1177.3 & 1142.4 \\
                     & Max   & 584196 & 81.860 & 370.32 & 74342 & 686.77 & 18.780 & 1466.0 & 1349.0 \\
                               
\hline
\multirow{2}{*}{$100\%\times\frac{\textrm{AMN}}{\textrm{DQN}}$} 
                     & Mean & 288.2\% & 93.61\% & 127.0\% & 59.33\% & 109.1\% & 78.01\% & 27.51\% & 68.44\% \\
                     & Max  & 108.0\% & 93.00\% & 97.98\% & 63.22\% & 85.00\% & 93.25\% & 23.64\% & 63.94\% \\

\hline	
\end{tabular}}
\caption{\small Actor-Mimic results on a set of eight Atari games. We compare the AMN performance to that of the expert DQNs trained separately on each game. The expert DQNs were trained until convergence and the AMN was trained for 100 training epochs, which is equivalent to 25 million input frames per source game. For the AMN, we report maximum test reward ever achieved in epochs 1-100 and mean test reward in epochs 91-100. For the DQN, we report maximum test reward ever achieved until convergence and mean test reward in the last 10 epochs of DQN training. Additionally, at the last row of the table we report the percentage ratio of the AMN reward to the expert DQN reward for every game for both mean and max rewards. These percentage ratios are plotted in Figure~\ref{fig:amn_relative_improvement}. The AMN results are averaged over 2 separately trained networks.}
\label{AMN_8games_table}
\vspace{-0.2in} 
\end{table}

To first evaluate the actor-mimic objective on multitask learning, we demonstrate the effectiveness of training an AMN over multiple games simultaneously. In this particular case, since our focus is on multitask learning and not transfer learning, we disregard the feature regression objective and set $\beta$ to 0. Figure~\ref{AMN_8games_figure} and Table~\ref{AMN_8games_table} show the results of an AMN trained on 8 games simultaneously with the policy regression objective, compared to an expert DQN trained separately for each game. The AMN and every individual expert DQN in this case had the exact same network architecture. We can see that the AMN quickly reaches close-to-expert performance on 7 games out of 8, only taking around 20 epochs or 5 million training frames to settle to a stable behaviour. This is in comparison to the expert networks, which were trained for up to 50 million frames.

One result that was observed during training is that the AMN often becomes more consistent in its behaviour than the expert DQN, with a noticeably lower reward variance in every game except Atlantis and Pong. 
Another surprising result is that the AMN achieves a significantly higher mean reward in the game of Atlantis and relatively higher mean reward in the games of Breakout and Enduro. This is despite the fact that the AMN is not being optimized to improve reward over the expert but just replicate the expert's behaviour.
We also observed this increase in source task performance again when we later on increased the AMN model complexity for the transfer experiments (see Atlantis experiments in Appendix~\ref{app:transfer}). 
The AMN had the worst performance on the game of Seaquest, which was a game on which the expert DQN itself did not do very well. It is possible that a low quality expert policy has difficulty teaching the AMN to even replicate its own (poor) behaviour. We compare the performance of our AMN against a baseline of two different multitask DQN architectures in Appendix~\ref{app:mtbaseline}.

\vspace{-0.1in}
\subsection{Transfer}
\vspace{-0.1in}

We have found that although a small AMN can learn how to behave at a close-to-expert level on multiple source tasks, a larger AMN can more easily transfer knowledge to target tasks after being trained on the source tasks. For the transfer experiments, we therefore significantly increased the AMN model complexity relative to that of an expert. Using a larger network architecture also allowed us to scale up to playing 13 source games at once (see Appendix~\ref{app:transfer} for source task performance using the larger AMNs). We additionally found that using an AMN trained for too long on the source tasks hurt transfer, as it is likely overfitting. Therefore for the transfer experiments, we train the AMN on only 4 million frames for each of the source games.

\newcommand{\tb}[1]{\textbf{#1}}

\begin{table}[t!]
\vspace{-0.5in}
\centering
\resizebox{\linewidth}{!}{%
\begin{tabular}{| c | c | c | c | c | c | c | c | c | c | c |}
\hline
Breakout & 1 mil & 2 mil & 3 mil & 4 mil & 5 mil & 6 mil & 7 mil & 8 mil & 9 mil & 10 mil \\
\hline
Random      & 1.182 & 5.278 & 29.13 & 102.3 & 202.8 & 212.8 & 252.9 & 211.8 & 243.5 & 258.7 \\
AMN-policy  & \tb{18.35} & 102.1 & \tb{216.0} & \tb{271.1} & \tb{308.6} & \tb{286.3} & \tb{284.6} & \tb{318.8} & \tb{281.6} & \tb{311.3} \\
AMN-feature & 16.23 & \tb{119.0} & 153.7 & 191.8 & 172.6 & 233.9 & 248.5 & 178.8 & 235.6 & 225.5 \\
\hline
\hline
Gopher      & 1 mil & 2 mil & 3 mil & 4 mil & 5 mil & 6 mil & 7 mil & 8 mil & 8 mil & 10 mil \\
\hline
Random      & 294.0 & 578.9 & 1360  & \tb{1540}  & \tb{1820} & 1133  & 633.0 & \tb{1306}  & \tb{1758} & \tb{1539}  \\
AMN-policy  & \tb{715.0} & 612.7 & \tb{1362}  & 924.8 & 1029 & \tb{1186}  & \tb{1081}  & 936.7 & 1251 & 1142  \\
AMN-feature & 636.2 & \tb{1110}  & 918.8 & 1073  & 1028 & 810.1 & 1008  & 868.8 & 1054 & 982.4 \\
\hline
\hline
Krull & 1 mil & 2 mil & 3 mil & 4 mil & 5 mil & 6 mil & 7 mil & 8 mil & 9 mil & 10 mil \\
\hline
Random      & 4302 & 6193 & 6576 & 7030 & 6754 & 5294 & 5949 & 5557 & 5366 & 6005 \\
AMN-policy  & \tb{5827} & \tb{7279} & 6838 & 6971 & 7277 & 7129 & 7854 & \tb{8012} & 7244 & \tb{7835} \\
AMN-feature & 5033 & 7256 & \tb{7008} & \tb{7582} & \tb{7665} & \tb{8016} & \tb{8133} & 6536 & \tb{7832} & 6923 \\
\hline
\hline
Road Runner & 1 mil & 2 mil & 3 mil & 4 mil & 5 mil & 6 mil & 7 mil & 8 mil & 9 mil & 10 mil \\
\hline
Random      & 327.5 & 988.1 & 16263 & \tb{27183} & \tb{26639} & \tb{29488} & 33197 & 27683 & 25235 & \tb{31647} \\
AMN-policy  & \tb{1561}  & 5119  & \tb{19483} & 22132 & 23391 & 23813 & \tb{34673} & \tb{33476} & \tb{31967} & 31416 \\
AMN-feature & 1349  & \tb{6659}  & 18074 & 16858 & 18099 & 22985 & 27023 & 24149 & 28225 & 23342 \\
\hline
\hline
Robotank & 1 mil & 2 mil & 3 mil & 4 mil & 5 mil & 6 mil & 7 mil & 8 mil & 9 mil & 10 mil \\
\hline
Random      & \tb{4.830} & \tb{6.965} & 9.825 & 13.22 & \tb{21.07} & \tb{22.54} & \tb{31.94} & \tb{29.80} & \tb{37.12} & \tb{34.04} \\
AMN-policy  & 3.502 & 4.522 & 11.03 & 9.215 & 16.89 & 17.31 & 18.66 & 20.58 & 23.58 & 23.02 \\
AMN-feature & 3.550 & 6.162 & \tb{13.94} & \tb{17.58} & 17.57 & 20.72 & 20.13 & 21.13 & 26.14 & 23.29 \\
\hline
\hline
Star Gunner & 1 mil & 2 mil & 3 mil & 4 mil & 5 mil & 6 mil & 7 mil & 8 mil & 9 mil & 10 mil \\
\hline
Random      & 221.2 & 468.5 & 927.6 & 1084 & 1508 & 1626 & 3286 & 16017 & 36273 & 45322 \\
AMN-policy  & 274.3 & 302.0 & 978.4 & 1667 & 4000 & 14655 & 31588 & 45667 & 38738 & 53642 \\
AMN-feature & \tb{1405} & \tb{4570} & \tb{18111} & \tb{23406} & \tb{36070} & \tb{46811} & \tb{50667} & \tb{49579} & \tb{50440} & \tb{56839} \\
\hline
\hline
Video Pinball & 1 mil & 2 mil & 3 mil & 4 mil & 5 mil & 6 mil & 7 mil & 8 mil & 9 mil & 10 mil \\
\hline
Random      & 2323 & 8549  & 6780  & 5842   & 10383 & 11093  & 8468   & 5476  & 9964  & 11893 \\
AMN-policy  & \tb{2583} & \tb{25821} & \tb{95949} & \tb{143729} & \tb{57114} & \tb{106873} & \tb{111074} & \tb{73523} & \tb{34908} & \tb{123337} \\
AMN-feature & 1593 & 3958  & 21341 & 12421  & 15409 & 18992  & 15920  & 48690 & 24366 & 26379 \\
\hline
\end{tabular}}
\vspace{-0.1in}
\caption{\small Actor-Mimic transfer results for a set of 7 games. The 3 networks are trained as DQNs on the target task, with the only difference being the weight initialization. ``Random'' means random initial weights, ``AMN-policy'' means a weight initialization with an AMN trained using policy regression and ``AMN-feature'' means a weight initialization with an AMN trained using both policy and feature regression (see text for more details). We report the average test reward every 4 training epochs (equivalent to 1 million training frames), where the average is over 4 testing epochs that are evaluated immediately after each training epoch. For each game, we bold out the network results that have the highest average testing reward for that particular column.
%The learning curves are plotted in Figure~\ref{fig:transfer_learn_curve}.
}
\label{AMN_transfer_table}
\vspace{-0.2in}
\end{table}

To evaluate the Actor-Mimic objective on transfer learning, the previously described large AMNs will be used as a weight initialization for DQNs which are each trained on a different target task. We additionally independently evaluate the benefit of the feature regression objective during transfer by having one AMN trained with only the policy regression objective (AMN-policy) and another trained using both feature and policy regression (AMN-feature). The results are then compared to the baseline of a DQN that was initialized with random weights. 

The performance on a set of 7 target games is detailed in Table~\ref{AMN_transfer_table} (learning curves are plotted in Figure~\ref{fig:transfer_learn_curve}). We can see that the AMN pretraining provides a definite increase in learning speed for the 3 games of Breakout, Star Gunner and Video Pinball. 
The results in Breakout and Video Pinball demonstrate that the policy regression objective alone provides significant positive transfer in some target tasks. The reason for this large positive transfer might be due to the source game Pong having very similar mechanics to both Video Pinball and Breakout, where one must use a paddle to prevent a ball from falling off screen. The machinery used to detect the ball in Pong would likely be useful in detecting the ball for these two target tasks, given some fine-tuning.
Additionally, the feature regression objective causes a significant speed-up in the game of Star Gunner compared to both the random initialization and the network trained solely with policy regression. Therefore even though the feature regression objective can slightly hurt transfer in some source games, it can provide large benefits in others. The positive transfer in Breakout, Star Gunner and Video Pinball saves at least up to 5 million frames of training time in each game. Processing 5 million frames with the large model is equivalent to around 4 days of compute time on a NVIDIA GTX Titan.

On the other hand, for the games of Krull and Road Runner
(although the multitask pretraining does help learning at the start)  
the effect is not very pronounced. When running Krull we observed that the policy learnt by any DQN regardless of the initialization was a sort of unexpected local maximum. In Krull, the objective is to move between a set of varied minigames and complete each one. One of the minigames, where the player must traverse a spiderweb, gives extremely high reward by simply jumping quickly in a mostly random fashion. What the DQN does is it kills itself on purpose in the initial minigame, runs to the high reward spiderweb minigame, and then simply jumps in the corner of the spiderweb until it is terminated by the spider. Because it is relatively easy to get stuck in this local maximum, and very hard to get out of it (jumping in the minigame gives unproportionally high reward compared to the other minigames), transfer does not really help learning.

For the games of Gopher and Robotank, we can see that the multitask pretraining does not have any significant positive effect. In particular, multitask pretraining for Robotank even seems to slow down learning, providing an example of negative transfer. The task in Robotank is to control a tank turret in a 3D environment to destroy other tanks, so it's possible that this game is so significantly different from any source task (being the only first-person 3D game) that the multitask pretraining does not provide any useful prior knowledge.

\vspace{-0.1in}

\vspace{-0.05in}
\section{Related Work}
\vspace{-0.1in}
The idea of using expert networks to guide a single mimic network has been studied in the context of supervised learning, where it is known as model compression. The goal of model compression is to reduce the computational complexity of a large model (or ensemble of large models) to a single smaller mimic network while maintaining as high an accuracy as possible. To obtain high accuracy, the mimic network is trained using rich output targets provided by the experts. These output targets are either the final layer logits \citep{ba2014_compression} or the high-temperature softmax outputs of the experts \citep{hinton2015_distilling}. Our approach is most similar to the technique of
\citep{hinton2015_distilling}
which matches the high-temperature outputs of the mimic network with that of the expert network.
In addition, we also tried an objective that provides expert guidance at the feature level instead of only at the output level. A similar idea was also explored in the model compression case \citep{romero2014_fitnets}, where a deep and thin mimic network used a larger expert network's intermediate features as guiding hints during training. 
In contrast to these model compression techniques, our method is not concerned with decreasing test time computation but instead using experts to provide otherwise unavailable supervision to a mimic network on several distinct tasks. 

Actor-Mimic can also be considered as part of the larger Imitation Learning class of methods, which use expert guidance to teach an agent how to act. One such method, called DAGGER \citep{ross2011_dagger}, is similar to our approach in that it trains a policy to directly mimic an expert's behaviour while sampling actions from the mimic agent. Actor-Mimic can be considered as an extension of this work to the multitask case. In addition, using a deep neural network to parameterize the policy provides us with several advantages over the more general Imitation Learning framework. First, we can exploit the automatic feature construction ability of deep networks to transfer knowledge to new tasks, as long as the raw data between tasks is in the same form, i.e. pixel data with the same dimensions. Second, we can define objectives which take into account intermediate representations of the state and not just the policy outputs, for example the feature regression objective
which provides a richer training signal to the mimic network than just samples of the expert's action output.

Recent work has explored combining expert-guided Imitation Learning and deep neural networks in the single-task case. \cite{guo2014_mcts} use DAGGER with expert guidance provided by Monte-Carlo Tree Search (MCTS) policies to train a deep neural network that improves on the original DQN's performance. Some disadvantages of using MCTS experts as guidance are that they require both access to the (hidden) RAM state of the emulator as well as an environment model. 
Another related method is that of guided policy search \citep{levine2013_gps}, which combines a regularized importance-sampled policy gradient with guiding trajectory samples generated using differential dynamic programming. The goal in that work was to learn continuous control policies which improved upon the basic policy gradient method, which is prone to poor local minima.

A wide variety of methods have also been studied in the context of RL transfer learning (see
\cite{taylor2009_transfer} for a more comprehensive review). 
One related approach is to use a dual state representation with a set of task-specific and task-independent features known as ``problem-space'' and ``agent-space'' descriptors, respectively. For each source task, a task-specific value function is learnt on the problem-space descriptors and then these learnt value functions are transferred to a single value function over the agent-space descriptors. Because the agent-space value function is defined over features which maintain constant semantics across all tasks, this value function can be directly transferred to new tasks. 
\cite{banerjee2007_gametransfer} constructed agent-space features by first generating a fixed-depth game tree of the current state, classifying each future state in the tree as either $\{win, lose, draw, nonterminal\}$ and then coalescing all states which have the same class or subtree. To transfer the source tasks value functions to agent-space, they use a simple weighted average of the source task value functions, where the weight is proportional to the number of times that a specific agent-space descriptor has been seen during play in that source task. In a related method, \cite{konidaris2006_agentspace} transfer the value function to agent-space by using regression to predict every source task’s problem-space value function from the agent-space descriptors.
A drawback of these methods is that the agent- and problem-space descriptors are either hand-engineered or generated from a perfect environment model, thus requiring a significant amount of domain knowledge.

\vspace{-0.2in}
\section{Discussion}
\vspace{-0.1in}
In this paper we defined Actor-Mimic, a novel method for training a single deep policy network over a set of related source tasks. We have shown that a network trained using Actor-Mimic is capable of reaching expert performance on many games simultaneously, while having the same model complexity as a single expert. In addition, using Actor-Mimic as a multitask pretraining phase can significantly improve learning speed in a set of target tasks. This demonstrates that the features learnt over the source tasks can generalize to new target tasks, given a sufficient level of similarity between source and target tasks. A direction of future work is to develop methods that can enable a targeted knowledge transfer from source tasks by identifying related source tasks for the given target task. Using targeted knowledge transfer can potentially help in cases of negative transfer observed in our experiments.

{\small
{\bf Acknowledgments:}
This work was supported by Samsung and NSERC.
}

\newpage

\bibliography{actormimic_ref}

\begin{thebibliography}{19}
\providecommand{\natexlab}[1]{#1}
\providecommand{\url}[1]{\texttt{#1}}
\expandafter\ifx\csname urlstyle\endcsname\relax
  \providecommand{\doi}[1]{doi: #1}\else
  \providecommand{\doi}{doi: \begingroup \urlstyle{rm}\Url}\fi

\bibitem[Ba \& Caruana(2014)Ba and Caruana]{ba2014_compression}
Ba, Jimmy and Caruana, Rich.
\newblock Do deep nets really need to be deep?
\newblock In \emph{Advances in Neural Information Processing Systems}, pp.\
  2654--2662, 2014.

\bibitem[Banerjee \& Stone(2007)Banerjee and Stone]{banerjee2007_gametransfer}
Banerjee, Bikramjit and Stone, Peter.
\newblock General game learning using knowledge transfer.
\newblock In \emph{International Joint Conferences on Artificial Intelligence},
  pp.\  672--677, 2007.

\bibitem[Bellemare et~al.(2013)Bellemare, Naddaf, Veness, and
  Bowling]{bellemare2013_ale}
Bellemare, Marc~G., Naddaf, Yavar, Veness, Joel, and Bowling, Michael.
\newblock The arcade learning environment: An evaluation platform for general
  agents.
\newblock \emph{Journal of Artificial Intelligence Research}, 47:\penalty0
  253--279, 2013.

\bibitem[Bertsekas(1995)]{bertsekas1995dynamic}
Bertsekas, Dimitri~P.
\newblock \emph{Dynamic programming and optimal control}, volume~1.
\newblock Athena Scientific Belmont, MA, 1995.

\bibitem[Guo et~al.(2014)Guo, Singh, Lee, Lewis, and Wang]{guo2014_mcts}
Guo, Xiaoxiao, Singh, Satinder, Lee, Honglak, Lewis, Richard~L, and Wang,
  Xiaoshi.
\newblock Deep learning for real-time atari game play using offline monte-carlo
  tree search planning.
\newblock In \emph{Advances in Neural Information Processing Systems 27}, pp.\
  3338--3346, 2014.

\bibitem[Hinton et~al.(2015)Hinton, Vinyals, and Dean]{hinton2015_distilling}
Hinton, Geoffrey, Vinyals, Oriol, and Dean, Jeff.
\newblock Distilling the knowledge in a neural network.
\newblock \emph{arXiv preprint arXiv:1503.02531}, 2015.

\bibitem[Kingma \& Ba(2015)Kingma and Ba]{kingma2015_adam}
Kingma, Diederik~P. and Ba, Jimmy.
\newblock Adam: A method for stochastic optimization.
\newblock In \emph{International Conference on Learning Representations}, 2015.

\bibitem[Konidaris \& Barto(2006)Konidaris and Barto]{konidaris2006_agentspace}
Konidaris, George and Barto, Andrew~G.
\newblock Autonomous shaping: Knowledge transfer in reinforcement learning.
\newblock In \emph{Proceedings of the 23rd international conference on Machine
  learning}, pp.\  489--496, 2006.

\bibitem[Levine \& Koltun(2013)Levine and Koltun]{levine2013_gps}
Levine, Sergey and Koltun, Vladlen.
\newblock Guided policy search.
\newblock In \emph{Proceedings of the 30th international conference on Machine
  Learning}, 2013.

\bibitem[Levine et~al.(2015)Levine, Finn, Darrell, and
  Abbeel]{levine2015_visuomotor}
Levine, Sergey, Finn, Chelsea, Darrell, Trevor, and Abbeel, Pieter.
\newblock End-to-end training of deep visuomotor policies.
\newblock \emph{CoRR}, abs/1504.00702, 2015.

\bibitem[Lillicrap et~al.(2015)Lillicrap, Hunt, Pritzel, Heess, Erez, Tassa,
  Silver, and Wierstra]{lillicrap2015_continuous}
Lillicrap, Timothy~P., Hunt, Jonathan~J., Pritzel, Alexander, Heess, Nicholas,
  Erez, Tom, Tassa, Yuval, Silver, David, and Wierstra, Daan.
\newblock Continuous control with deep reinforcement learning.
\newblock \emph{CoRR}, abs/1509.02971, 2015.

\bibitem[Mnih et~al.(2015)Mnih, Kavukcuoglu, Silver, Rusu, Veness, Bellemare,
  Graves, Riedmiller, Fidjeland, Ostrovski, Petersen, Beattie, Sadik,
  Antonoglou, King, Kumaran, Wierstra, Legg, and Hassabis]{mnih2015_humanlevel}
Mnih, Volodymyr, Kavukcuoglu, Koray, Silver, David, Rusu, Andrei~A., Veness,
  Joel, Bellemare, Marc~G., Graves, Alex, Riedmiller, Martin, Fidjeland,
  Andreas~K., Ostrovski, Georg, Petersen, Stig, Beattie, Charles, Sadik, Amir,
  Antonoglou, Ioannis, King, Helen, Kumaran, Dharshan, Wierstra, Daan, Legg,
  Shane, and Hassabis, Demis.
\newblock Human-level control through deep reinforcement learning.
\newblock \emph{Nature}, 518\penalty0 (7540):\penalty0 529--533, 2015.

\bibitem[Perkins \& Precup(2002)Perkins and Precup]{perkins2002convergent}
Perkins, Theodore~J and Precup, Doina.
\newblock A convergent form of approximate policy iteration.
\newblock In \emph{Advances in neural information processing systems}, pp.\
  1595--1602, 2002.

\bibitem[Robbins \& Monro(1951)Robbins and Monro]{RobbinsMonro}
Robbins, Herbert and Monro, Sutton.
\newblock A stochastic approximation method.
\newblock \emph{The annals of mathematical statistics}, pp.\  400--407, 1951.

\bibitem[Romero et~al.(2015)Romero, Ballas, Kahou, Chassang, Gatta, and
  Bengio]{romero2014_fitnets}
Romero, Adriana, Ballas, Nicolas, Kahou, Samira~Ebrahimi, Chassang, Antoine,
  Gatta, Carlo, and Bengio, Yoshua.
\newblock Fitnets: Hints for thin deep nets.
\newblock In \emph{International Conference on Learning Representations}, 2015.

\bibitem[Ross et~al.(2011)Ross, Gordon, and Bagnell]{ross2011_dagger}
Ross, Stephane, Gordon, Geoffrey, and Bagnell, Andrew.
\newblock A reduction of imitation learning and structured prediction to
  no-regret online learning.
\newblock \emph{Journal of Machine Learning Research}, 15:\penalty0 627--635,
  2011.

\bibitem[Seneta(1991)]{seneta1991sensitivity}
Seneta, E.
\newblock Sensitivity analysis, ergodicity coefficients, and rank-one updates
  for finite markov chains.
\newblock \emph{Numerical solution of Markov chains}, 8:\penalty0 121--129,
  1991.

\bibitem[Sutton \& Barto(1998)Sutton and Barto]{sutton1998_rl}
Sutton, Richard~S. and Barto, Andrew~G.
\newblock \emph{Reinforcement learning: An introduction}.
\newblock MIT Press Cambridge, 1998.

\bibitem[Taylor \& Stone(2009)Taylor and Stone]{taylor2009_transfer}
Taylor, Matthew~E and Stone, Peter.
\newblock Transfer learning for reinforcement learning domains: A survey.
\newblock \emph{The Journal of Machine Learning Research}, 10:\penalty0
  1633--1685, 2009.

\end{thebibliography}
\bibliographystyle{actormimic_ref}

\newpage

\begin{appendices}
\section{Proof of Theorem \ref{thm:convergence}}
\label{app:convergence}
\begin{lemma}
For any two policies $\pi^1$,$\pi^2$, the stationary distributions over the states under the policies are bounded: $\|D_{\pi^1} - D_{\pi^2}\| \le c_D \|\pi^1 - \pi^2\|$, for some $c_D > 0$.
\end{lemma}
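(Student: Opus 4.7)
The plan is to reduce this statement to a standard Markov chain perturbation result and exploit linearity of the induced transition matrix in the policy. Write the policy-induced transition matrix as
\begin{equation*}
T_\pi(\state'\mid \state) \;=\; \sum_{\action \in \actionSpace} \pi(\action\mid \state)\, \transProb(\state'\mid \state,\action),
\end{equation*}
which is linear in $\pi$. Hence by the triangle inequality and boundedness of $\transProb$, one gets
$\|T_{\pi^1} - T_{\pi^2}\| \le c_T \|\pi^1 - \pi^2\|$ for a constant $c_T$ depending only on the underlying MDP. The lemma then reduces to showing that the map $T_\pi \mapsto D_\pi$ is Lipschitz on the set of transition matrices induced by $\Gamma$.

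Next I would invoke standard perturbation theory for finite ergodic Markov chains. Since the hypotheses of Theorem~\ref{thm:convergence} guarantee that every policy $\pi$ in the iterate class induces an irreducible and aperiodic chain, the group inverse (equivalently Kemeny--Snell fundamental matrix) $Z_\pi = (I - T_\pi + \mathbf{1} D_\pi^T)^{-1}$ exists. Using the classical Schweitzer/Meyer identity
\begin{equation*}
D_{\pi^1}^T - D_{\pi^2}^T \;=\; D_{\pi^1}^T\,(T_{\pi^1} - T_{\pi^2})\, Z_{\pi^2},
\end{equation*}
one obtains $\|D_{\pi^1} - D_{\pi^2}\| \le \|D_{\pi^1}\|\cdot\|Z_{\pi^2}\|\cdot\|T_{\pi^1} - T_{\pi^2}\|$. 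Combining this with the earlier inequality yields the claimed bound with $c_D = c_T \sup_{\pi}\|Z_\pi\|$.

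The main obstacle is producing a \emph{uniform} bound on $\|Z_\pi\|$ across all policies $\pi$ arising from the $\Gamma$ operator. For this I would argue as follows: the image of $\Gamma$ consists of $\epsilon$-greedy policies, so every entry of $\pi(\cdot\mid \state)$ is at least $\epsilon/|\actionSpace|$. This implies a uniform Doeblin minorization on $T_\pi$, which in turn gives a uniform lower bound on the spectral gap and hence a uniform upper bound on $\|Z_\pi\|$ (equivalently a uniform mixing time). Thus the sup is finite and $c_D$ is a genuine constant, completing the argument. A cleaner but less sharp alternative is to observe that the space of stochastic matrices satisfying the Doeblin condition is compact and $\pi \mapsto D_\pi$ is continuous there, so it is Lipschitz on this compact set; however, the explicit group-inverse identity above is preferable since it also yields an explicit constant.
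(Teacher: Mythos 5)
Your proposal is correct and its first half coincides with the paper's argument: both exploit the linearity of the induced transition kernel $T_\pi(s'\mid s)=\sum_a \pi(a\mid s)\transProb(s'\mid s,a)$ in $\pi$ to get $\|T_{\pi^1}-T_{\pi^2}\|\le c_T\|\pi^1-\pi^2\|$. The two proofs diverge in the second half, the perturbation step from transition matrices to stationary distributions. The paper invokes a sensitivity bound of Seneta of the form $\|D_{\pi^1}-D_{\pi^2}\|\le \frac{1}{1-\lambda^1}\|T^1-T^2\|_\infty$, where $\lambda^1$ is an eigenvalue of $T^1$, and immediately absorbs the prefactor into $c_D$; you instead use the group-inverse (fundamental-matrix) identity $D_{\pi^1}^T-D_{\pi^2}^T=D_{\pi^1}^T(T_{\pi^1}-T_{\pi^2})Z_{\pi^2}$. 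These are essentially interchangeable classical tools, but your route buys something the paper glosses over: the constant in either bound depends on the particular chain ($\lambda^1$ on $\pi^1$, $\|Z_{\pi^2}\|$ on $\pi^2$), so a single $c_D$ valid for \emph{all} pairs of policies requires a uniform bound over the policy class. You supply exactly this, observing that every policy in the image of $\Gamma$ is $\epsilon$-greedy and hence satisfies a uniform Doeblin minorization, which bounds $\sup_\pi\|Z_\pi\|$ (equivalently, keeps the paper's $\lambda^1$ uniformly away from $1$). This is the one point where the paper's proof is loose and yours is tight, and the same uniformity is implicitly needed later when the lemma is used inside the contraction argument of Theorem 1. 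The only cosmetic caveat is that your constant is tied to the $\epsilon$-greedy policy class rather than literally ``any two policies'' as the lemma is phrased, but that is the only setting in which the lemma is applied, and some such restriction is unavoidable since the stationary distribution need not even be unique for arbitrary policies.
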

\begin{proof} 
Let $\transMat^1$ and $\transMat^2$ be the two transition matrices under the stationary distributions $\stationaryProb_{\policy^1}$, $\stationaryProb_{\policy^2}$. For any $ij$ elements $\transMat^1_{ij}$, $\transMat^2_{ij}$ in the transition matrices, we have,
\bea
\|\transMat^1_{ij} - \transMat^2_{ij}\| 
=&\left\|\sum_a p(s_i|a,s_j)\left(\policy^1(a|s_j)-\policy^2(a|s_j)\right)\right\|\\
 \le& |\actionSpace| \|\policy^1(a|s_j)-\policy^2(a|s_j)\|\\
 \le& |\actionSpace| \|\policy^1-\policy^2\|_\infty.
\eea
The above bound for any $ij^{th}$ elements implies the Euclidean distance of the transition matrices is also upper bounded $\|\transMat^1 - \transMat^2\|\le|\stateSpace||\actionSpace| \|\policy^1-\policy^2\|$. \cite{seneta1991sensitivity} has shown that $\|\stationaryProb_{\policy^1} - \stationaryProb_{\policy^2}\| \le \frac{1}{1-\lambda^1}\|\transMat^1 - \transMat^2\|_\infty$, where $\lambda^1$ is the largest eigenvalue of $\transMat^1$. Hence, there is a constant $c_D > 0$ such that $\|\stationaryProb_{\policy^1} - \stationaryProb_{\policy^2}\| \le c_D \|\policy^1 - \policy^2\|$.
\end{proof}

\begin{lemma}
For any two softmax policy $P_{\param^1}$, $P_{\param^2}$ matrices from the linear function approximator, $\|P_{\param^1} - P_{\param^2}\| \le c_J\| \Phi \param^1 - \Phi \param^2\|$, for some $c_J\ge0$. 
\end{lemma}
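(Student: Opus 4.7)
The plan is to reduce the claim to a Lipschitz bound on the softmax map applied row-by-row. By construction, the $i^{\text{th}}$ row of $P_\param$ corresponds to the softmax of the Q-values at state $\state_i$, namely the vector $(\linearFeat(\state_i)^T\param_\action)_{\action\in\actionSpace} = (\Phi\param)_i$, the $i^{\text{th}}$ row of $\Phi\param$. So proving $\|P_{\param^1} - P_{\param^2}\| \le c_J \|\Phi\param^1 - \Phi\param^2\|$ amounts to proving that the map $\sigma: \mathbb{R}^{|\actionSpace|} \to \mathbb{R}^{|\actionSpace|}$ defined by $\sigma(x)_\action = e^{x_\action}/\sum_{\actionTwo}e^{x_{\actionTwo}}$ is Lipschitz, and then aggregating row-by-row.

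First I would bound the Jacobian of softmax. A direct computation gives $\partial \sigma(x)_i/\partial x_j = \sigma(x)_i(\delta_{ij} - \sigma(x)_j)$. Since $\sigma(x)_i \in [0,1]$ and $\sum_i \sigma(x)_i = 1$, every such Jacobian has bounded spectral norm, uniformly in $x$; an elementary estimate via the Frobenius norm yields $\|J_\sigma(x)\|_2 \le 1$ (a sharper standard fact, but I would only need some universal constant $L$). The mean value inequality then gives $\|\sigma(x) - \sigma(y)\|_2 \le L\|x-y\|_2$ for all $x,y$.

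Second, I would apply this bound row-wise. For each state $s_i$,
\begin{equation*}
\bigl\|(P_{\param^1})_i - (P_{\param^2})_i\bigr\|_2 = \bigl\|\sigma((\Phi\param^1)_i) - \sigma((\Phi\param^2)_i)\bigr\|_2 \le L\,\bigl\|(\Phi\param^1)_i - (\Phi\param^2)_i\bigr\|_2.
\end{equation*}
Squaring and summing over $i$, then taking square roots, yields the Frobenius-norm inequality $\|P_{\param^1} - P_{\param^2}\|_F \le L\,\|\Phi\param^1 - \Phi\param^2\|_F$, which is exactly the desired bound with $c_J = L$. (Equivalence of matrix norms lets one state the result in any chosen norm up to a dimensional constant absorbed into $c_J$.)

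The only genuine obstacle is pinning down a clean universal Lipschitz constant for the softmax, but since the Jacobian entries are uniformly bounded in $[-1/4,1]$ and the resulting matrix is a difference of two rank-$\le 1$ pieces, this is essentially a one-line estimate; no assumption on $\param$ or on the feature matrix $\Phi$ is needed beyond what has already been used in the excerpt.
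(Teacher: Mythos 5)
Your proposal is correct and follows essentially the same route as the paper's proof: both reduce the claim to Lipschitz continuity of the softmax via a bound on its Jacobian and then aggregate over the entries of the policy matrix. Your version is somewhat more careful than the paper's (you make the row-wise structure and the explicit Jacobian formula $\partial\sigma(x)_i/\partial x_j = \sigma(x)_i(\delta_{ij}-\sigma(x)_j)$ explicit, whereas the paper appeals informally to ``the largest Jacobian in the domain''), but there is no substantive difference in approach.
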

\begin{proof}
Note that the $i^{th}$ row $j^{th}$ column element $p(\action_j|\state_i)$ in a softmax policy matrix $\softmaxMat$ is computed by the softmax transformation on the Q function:
\be
p_{ij}=p(\action_j|\state_i) = softmax\bigg(Q(\state_i,\action_j)\bigg) = \frac{e^{Q(\state_i,\action_j)}}{\sum\limits_{k} e^{Q(\state_i,\action_k)}}.
\ee
Because the softmax function is a monotonically increasing element-wise function on matrices, the Euclidean distance of the softmax transformation is upper bounded by the largest Jacobian in the domain of the softmax function. Namely, for $c'_J = \max_{z\in\text{Dom } softmax}\|\pderiv{softmax(z)}{z}\|$,
\be
\|softmax(x^1) - softmax(x^2)\| \le c'_J\|x^1 - x^2\|, \forall x^1, x^2 \in \text{Dom } softmax.
\ee
By bounding the elements in $P$ matrix, it gives $\|P_{\param^1} - P_{\param^2}\| \le c_J\|\hat{Q}_{\param^1} - \hat{Q}_{\param^2}\| = c_J\| \Phi \param^1 - \Phi \param^2\|$.
\end{proof}

\setcounter{theorem}{0}
\begin{theorem}
Assume the Markov decision process is irreducible and aperiodic for any policy $\policy$ induced by the $\Gamma$ operator and $\Gamma$ is Lipschitz continuous with a constant $c_\epsilon$, the sequence of policies and model parameters generated by the iterative algorithm above converges almost surely to a unique solution $\policy^*$ and $\param^*$.
\end{theorem}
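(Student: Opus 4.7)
The natural strategy is a contraction-mapping/Banach fixed-point argument applied to the composed map $F : \pi \mapsto \Gamma(\hat{Q}_{\param^*(\pi)})$, where $\param^*(\pi)$ denotes the unique stationary point produced by the inner SGD loop when the sampling policy is held fixed at $\pi$. Lemma~\ref{lemma:fixed_policy} already gives us that this inner loop is well-defined: for each fixed $\pi$, the convex regularized objective $\expectation_{\state \sim \stationaryProb^{\pi}}[\mathcal{H}(\pi_\expertNet, p(\cdot;\param))] + \lambda\|\param\|^2$ admits a unique minimizer $\param^*(\pi)$, and SGD converges there almost surely under the Robbins-Monro schedule. So once the outer iteration's contraction is established, the full iterative procedure inherits almost-sure convergence by combining the inner a.s. convergence with the outer deterministic contraction, and uniqueness of $\pi^*,\param^*$ follows from uniqueness of the fixed point.

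To obtain the contraction I would chain the two appendix lemmas with the Lipschitz assumption on $\Gamma$. First I would quantify how $\param^*(\pi)$ depends on $\pi$: setting the expected gradient \eqref{eq:ev_grad} to zero gives the implicit relation $\lambda \param^*(\pi) = -\Phi^T \stationaryProb_\pi (P_{\param^*(\pi)} - \policyMat_\expertNet)$. Subtracting this identity for two policies $\pi^1,\pi^2$ yields
\be
\lambda(\param^*_1 - \param^*_2) = -\Phi^T \stationaryProb_{\pi^1}(P_{\param^*_1} - P_{\param^*_2}) - \Phi^T(\stationaryProb_{\pi^1} - \stationaryProb_{\pi^2})(P_{\param^*_2} - \policyMat_\expertNet).
\ee
Bounding the first term via Lemma~2 (softmax Lipschitzness, giving a factor proportional to $\|\Phi(\param^*_1 - \param^*_2)\|$) and the second via Lemma~1 (stationary-distribution Lipschitzness, giving a factor proportional to $\|\pi^1 - \pi^2\|$), then rearranging, produces a bound of the form $\|\param^*_1 - \param^*_2\| \le c_\param \|\pi^1 - \pi^2\|$, provided $\lambda$ dominates the $\Phi$-dependent constants so that the $\param$-term can be absorbed to the left-hand side. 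Chaining with Lemma~2 once more gives $\|P_{\param^*_1} - P_{\param^*_2}\| \le c_J\|\Phi\|\,c_\param\|\pi^1 - \pi^2\|$, and finally the Lipschitz assumption on $\Gamma$ (with constant $c_\epsilon$) yields
\be
\|F(\pi^1) - F(\pi^2)\| \;\le\; c_\epsilon c_J \|\Phi\|\, c_\param \|\pi^1 - \pi^2\|.
\ee
Under the regime where this composite constant is strictly less than one, Banach's theorem delivers the unique fixed point $\pi^*$, and $\param^* := \param^*(\pi^*)$ is the corresponding unique parameter limit.

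The main obstacle, and the place where I would spend the most care, is step two: getting a clean Lipschitz bound of $\param^*(\pi)$ in $\pi$ that does not blow up. The implicit relation mixes the unknown $\param^*_1, \param^*_2$ on both sides via $P_{\param^*}$, and the second term involves $P_{\param^*_2} - \policyMat_\expertNet$, which must be bounded uniformly in $\pi$; this in turn needs an a priori bound on $\|\param^*(\pi)\|$ from the regularizer (immediate since setting $\param=0$ in the objective bounds the optimum). A secondary subtlety is that the irreducibility and aperiodicity assumption on every $\Gamma$-induced policy is what makes $\stationaryProb_\pi$ well-defined and makes the eigenvalue factor $1/(1-\lambda^1)$ in Lemma~1 finite and uniformly controllable; I would verify that the chain of constants $c_D, c_J, c_\epsilon$ can be collected into a single explicit expression so that a sufficient condition (e.g., a large enough $\lambda$, or equivalently small enough $c_\epsilon$) for strict contraction can be stated, mirroring the convergence conditions in \citep{perkins2002convergent}. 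The almost-sure part is then immediate since the outer map is deterministic-contraction and the only stochasticity sits inside the inner SGD, which converges a.s.\ to $\param^*(\pi)$ for each outer iterate.
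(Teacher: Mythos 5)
Your proposal follows essentially the same route as the paper's own proof: a Banach contraction argument on the policy-update map, using the stationarity identity $\lambda\param^* = -\Phi^T D_{\pi}(P_{\param^*}-\Pi_e)$, the stationary-distribution Lipschitz lemma, the softmax Lipschitz lemma, and the Lipschitz continuity of $\Gamma$ to chain the constants into a single contraction modulus, then invoking Lemma~\ref{lemma:fixed_policy} for the inner almost-sure convergence. If anything, you are more careful than the paper at the one delicate step --- absorbing the $\Phi^T D_{\pi^1}(P_{\param^*_1}-P_{\param^*_2})$ term back into the left-hand side, which requires $\lambda$ to dominate the $\Phi$-dependent constants, and stating the contraction condition as the composite constant being strictly below one --- a rearrangement the paper's proof passes over silently when it asserts $\|\param^1-\param^2\|\le c\|\pi^1-\pi^2\|$.
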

\begin{proof}
We follow a similar contraction argument made in 
\cite{perkins2002convergent}
, and show the iterative algorithm is a contraction process. Namely, for any two policies $\pi^1$ and $\pi^2$, the learning algorithm above produces new policies $\Gamma(\hat{Q}_{\param^1})$, $\Gamma(\hat{Q}_{\param^2})$ after one iteration, where $\| \Gamma(\hat{Q}_{\param^1}) - \Gamma(\hat{Q}_{\param^2})\| \le \beta\| \pi^1 - \pi^2 \|$. Here $\|\cdot\|$ is the Euclidean norm and $\beta \in [0, 1)$. 

By Lipschtiz continuity, 
\bea
\| \Gamma(\hat{Q}_{\param^1}) - \Gamma(\hat{Q}_{\param^2})\| \le &c_{\epsilon}\| \hat{Q}_{\param^1} - \hat{Q}_{\param^2}\|
= c_{\epsilon}\| \Phi{\param^1} - \Phi{\param^2}\| \\
\le &c_{\epsilon}\| \Phi \| \|{\param^1} - {\param^2}\|.
\eea
Let $\param^1$ and $\param^2$ be the stationary points of Eq. (\ref{eq:ev_obj}) under $\policy^1$ and $\policy^2$. That is, $\Delta \param^1 = \Delta \param^2 = 0$ respectively. Rearranging Eq. (\ref{eq:ev_grad}) gives,
\bea
\|\param^1 - \param^2\| = &\frac{1}{\lambda}\| \Phi^T D_{\pi^1} (P_{\param^1} - \Pi_e) - \Phi^T D_{\pi^2} (P_{\param^2} - \Pi_e) \| \\
= &\frac{1}{\lambda}\| \Phi^T (D_{\pi^2} - D_{\pi^1}) \Pi_e + \Phi^T D_{\pi^1}P_{\param^1} - \Phi^T D_{\pi^1}P_{\param^2} + \Phi^T D_{\pi^1}P_{\param^2} - \Phi^T D_{\pi^2}P_{\param^2} \| \\
= &\frac{1}{\lambda}\| \Phi^T (D_{\pi^2} - D_{\pi^1}) \Pi_e + \Phi^T D_{\pi^1}(P_{\param^1} - P_{\param^2}) + \Phi^T (D_{\pi^1} - D_{\pi^2})P_{\param^2} \| \\
\le &\frac{1}{\lambda}\bigg[\| \Phi^T\| \|D_{\pi^1} - D_{\pi^2}\| \|\Pi_e\| + \|\Phi^T\|\|D_{\pi^1}\|\|P_{\param^1} - P_{\param^2}\| + \|\Phi^T\|\|D_{\pi^1} - D_{\pi^2}\|\|P_{\param^2}\|\bigg] \\
\le & c\| \pi^1 - \pi^2\|.
\eea
The last inequality is given by Lemma 2 and 3 and the compactness of $\Phi$.  For a Lipschtiz constant $c_\epsilon \ge c$, there exists a $\beta$ such that $\| \Gamma(\hat{Q}_{\param^1}) - \Gamma(\hat{Q}_{\param^2})\| \le \beta\| \pi^1 - \pi^2 \|$. Hence, the sequence of policies generated by the algorithm converges almost surely to a unique fixed point $\pi^*$ from Lemma \ref{lemma:fixed_policy} and the Contraction Mapping Theorem 
\cite{bertsekas1995dynamic}.
Furthermore, the model parameters converge w.p. 1 to a stationary point $\param^*$ under the fixed point policy $\pi^*$.
\end{proof}

\end{appendices}

\begin{appendices}

\section{AMN training details}
\label{app:trainingdetails}

All of our Actor-Mimic Networks (AMNs) were trained using the
Adam \citep{kingma2015_adam} optimization algorithm.
The AMNs have a single 18-unit output, with each output corresponding to one of the 18 possible Atari player actions. Having the full 18-action output simplifies the multitask case when each game has a different subset of valid actions. While playing a certain game, we mask out AMN action outputs that are not valid for that game and take the softmax over only the subset of valid actions. We use a replay memory for each game to reduce correlations between successive frames and stabilize network training. Because the memory requirements of having the standard replay memory size of 1,000,000 frames for each game are prohibitive when we are training over many source games, for AMNs we use a per-game 100,000 frame replay memory. AMN training was stable even with only a per-game equivalent of a tenth of the replay memory size of the DQN experts. For the transfer experiments with the feature regression objective, we set the scaling parameter $\beta$ to 0.01 and the feature prediction network $f_i$ was set to a linear projection from the AMN features to the $i^{th}$ expert features. For the policy regression objective, we use a softmax temperature of 1 in all cases. Additionally, during training for all AMNs we use an $\epsilon$-greedy policy with $\epsilon$ set to a constant 0.1. Annealing $\epsilon$ from 1 did not provide any noticeable benefit. During training, we choose actions based on the AMN and not the expert DQN. We do not use weight decay during AMN training as we empirically found that it did not provide any large benefits.

For the experiments using the DQN algorithm, we optimize the networks with RMSProp.
Since the DQNs are trained on a single game their output layers only contain the player actions that are valid in the particular game that they are trained on. The experts guiding the AMNs used the same architecture, hyperparameters and training procedure as that of \cite{mnih2015_humanlevel}. We use the full 1,000,000 frame replay memory when training any DQN.

\end{appendices}

%\pagebreak

\begin{appendices}

\section{Multitask DQN baseline results}
\label{app:mtbaseline}

\begin{figure}[t!]
\vspace{-0.5in}
\begin{tabular}{cccc}
\includegraphics[width=0.225\linewidth]{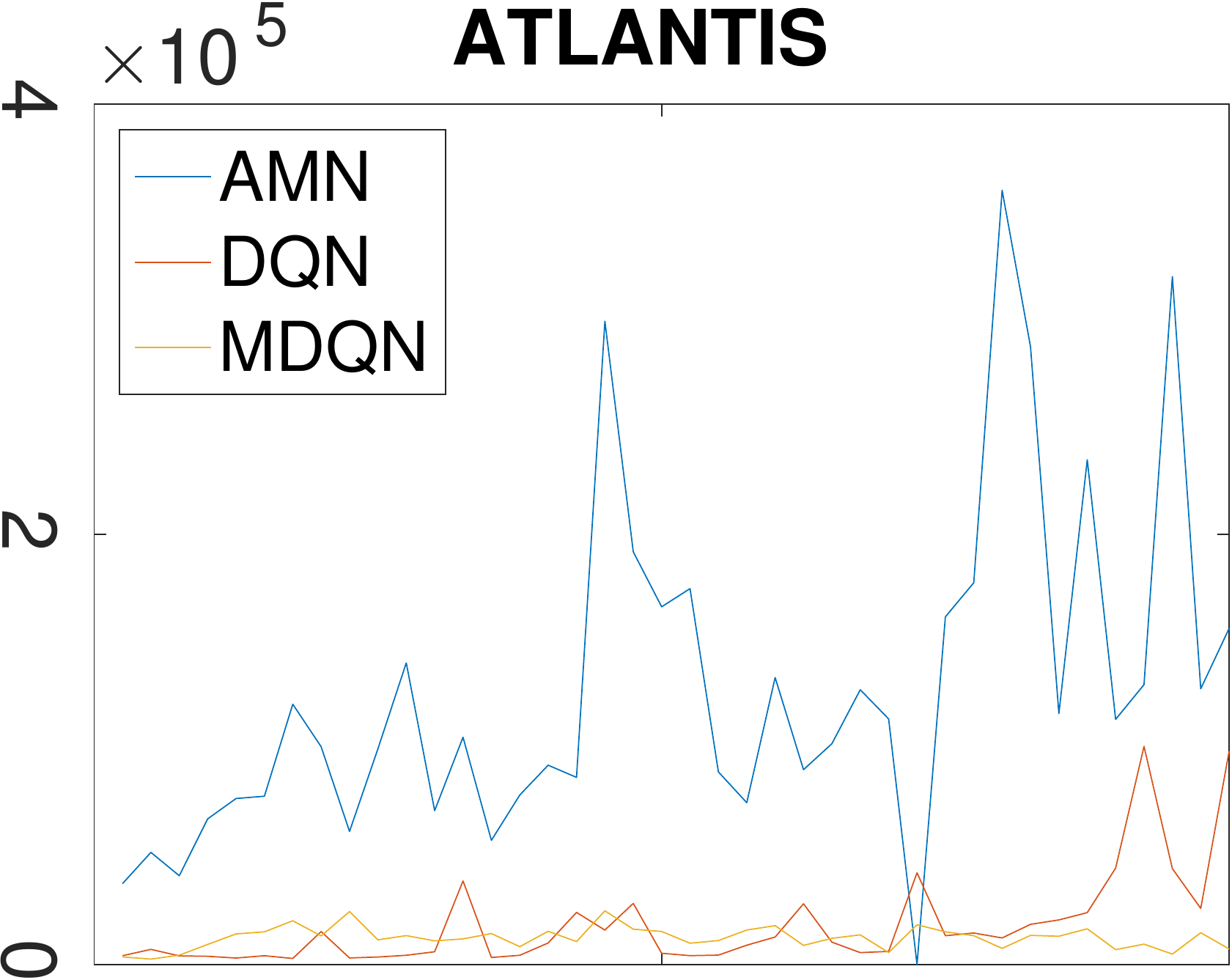} &
\includegraphics[width=0.225\linewidth]{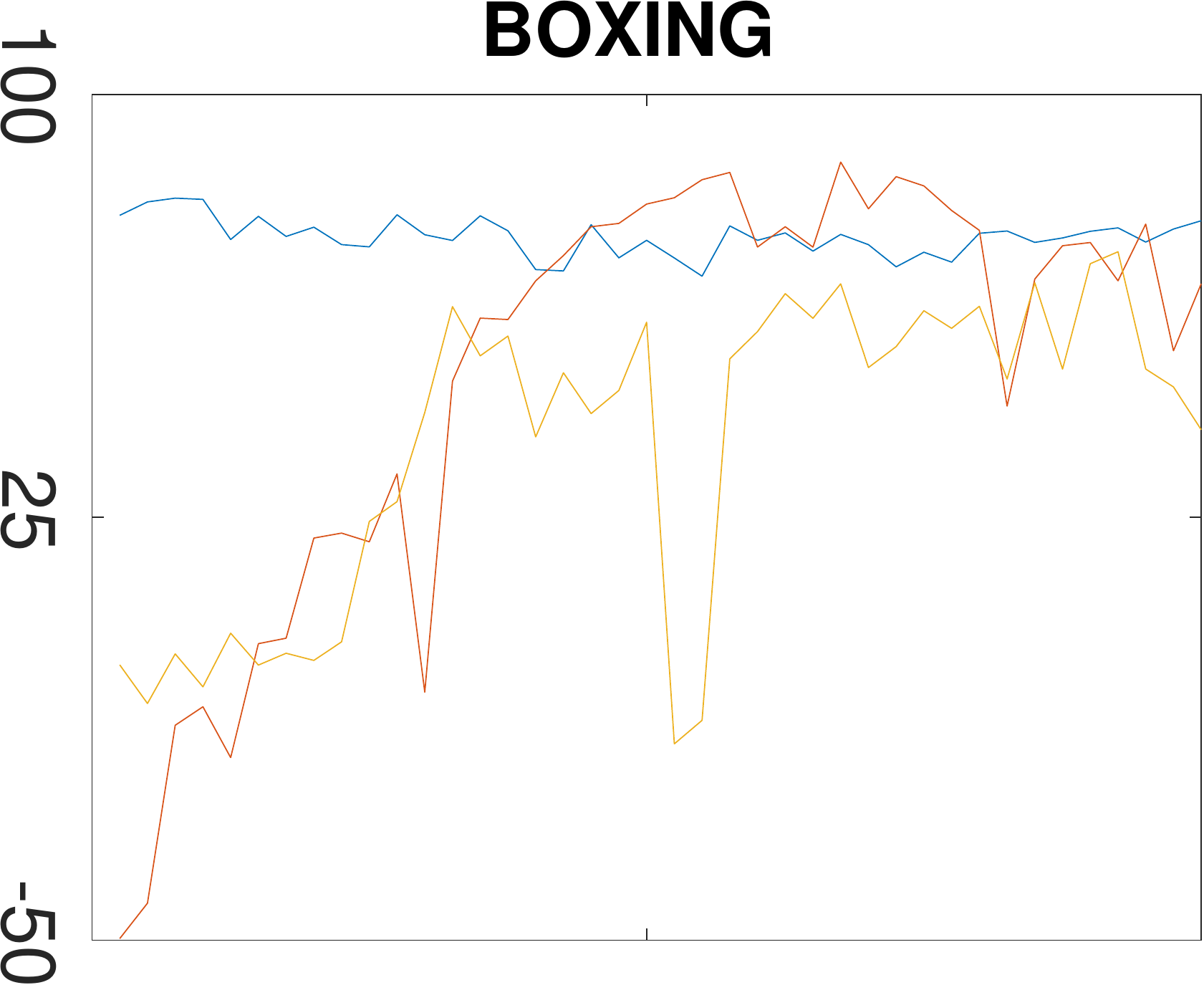} &
\includegraphics[width=0.225\linewidth]{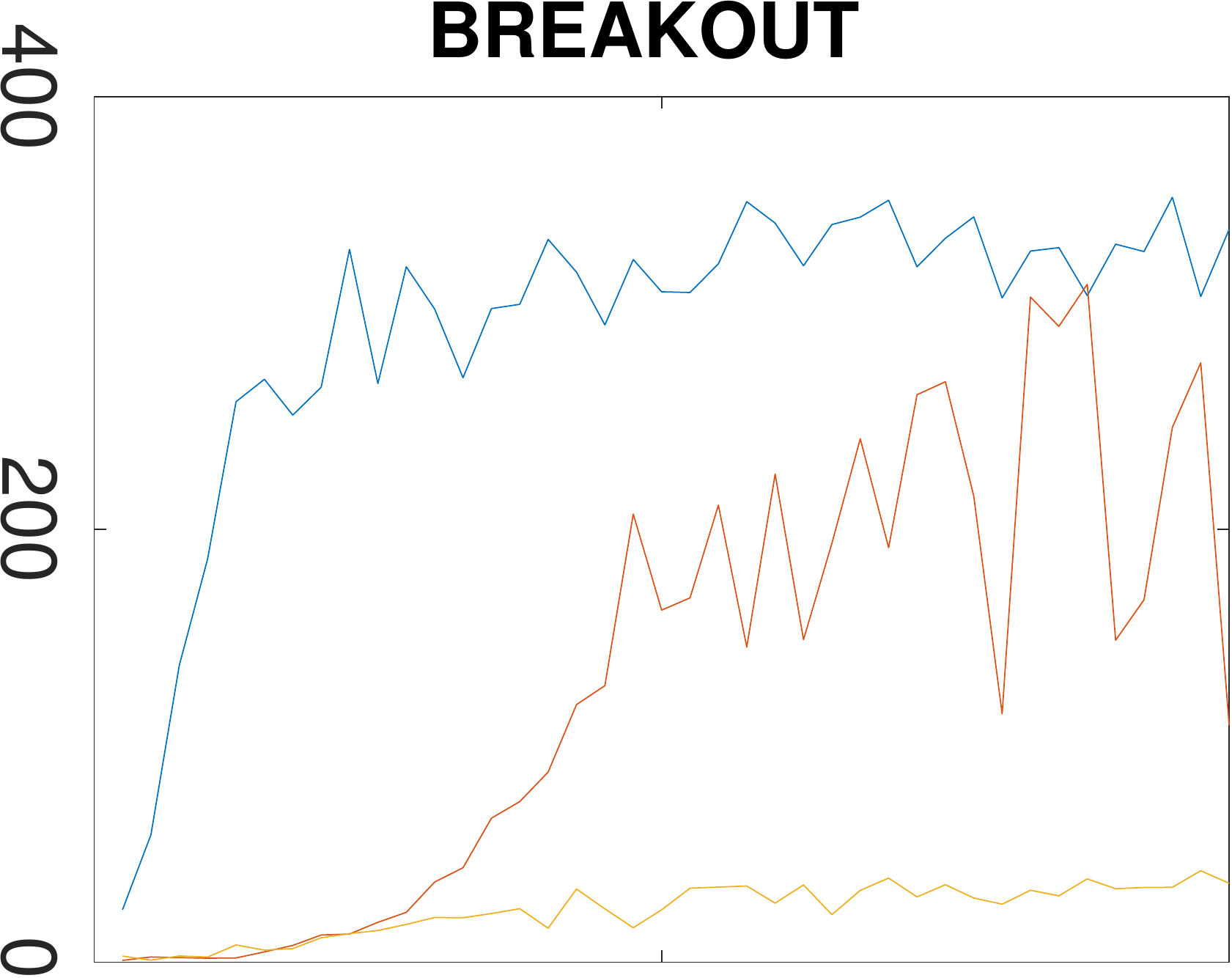} &
\includegraphics[width=0.225\linewidth]{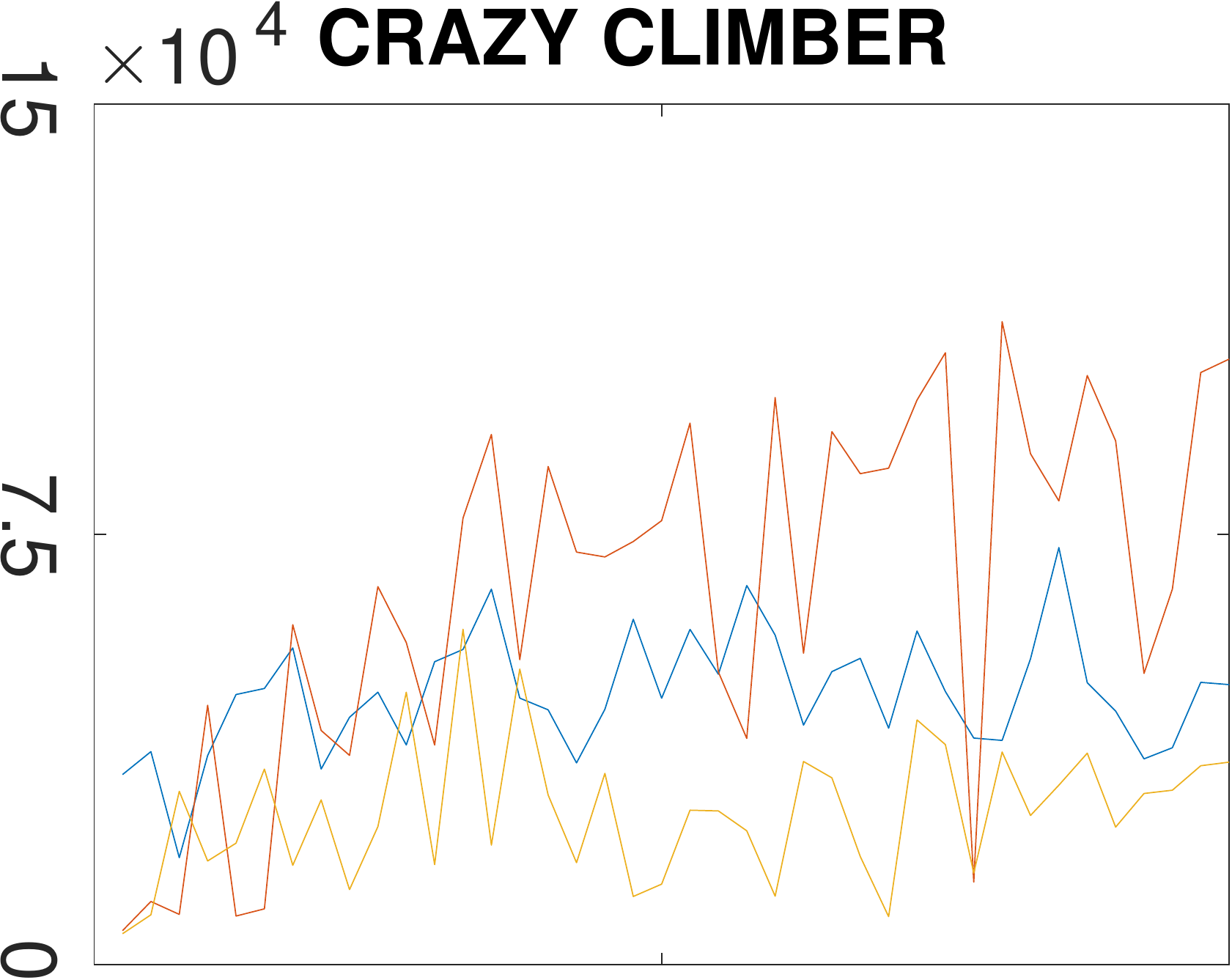} \\ 
\includegraphics[width=0.225\linewidth]{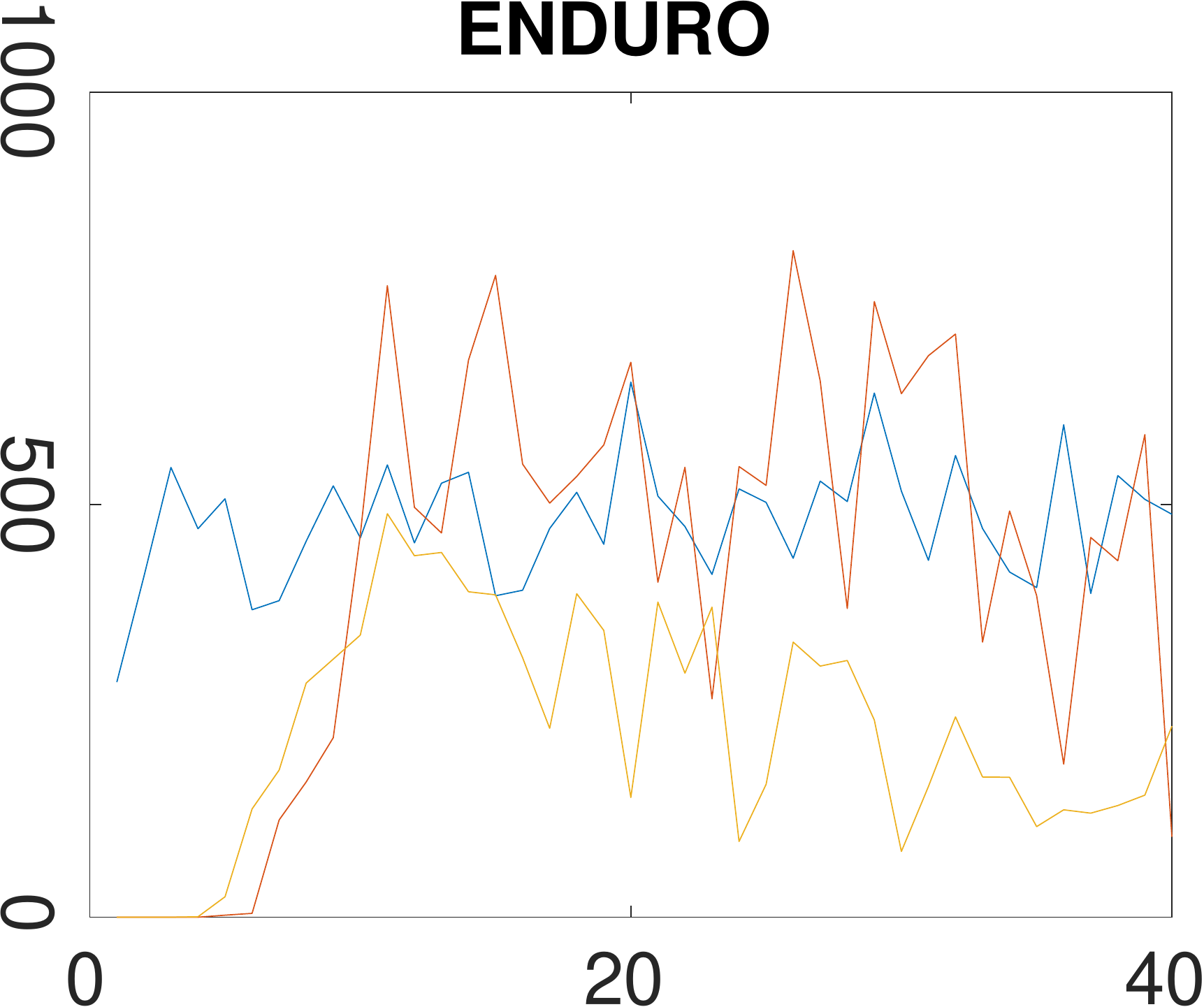} & 
\includegraphics[width=0.225\linewidth]{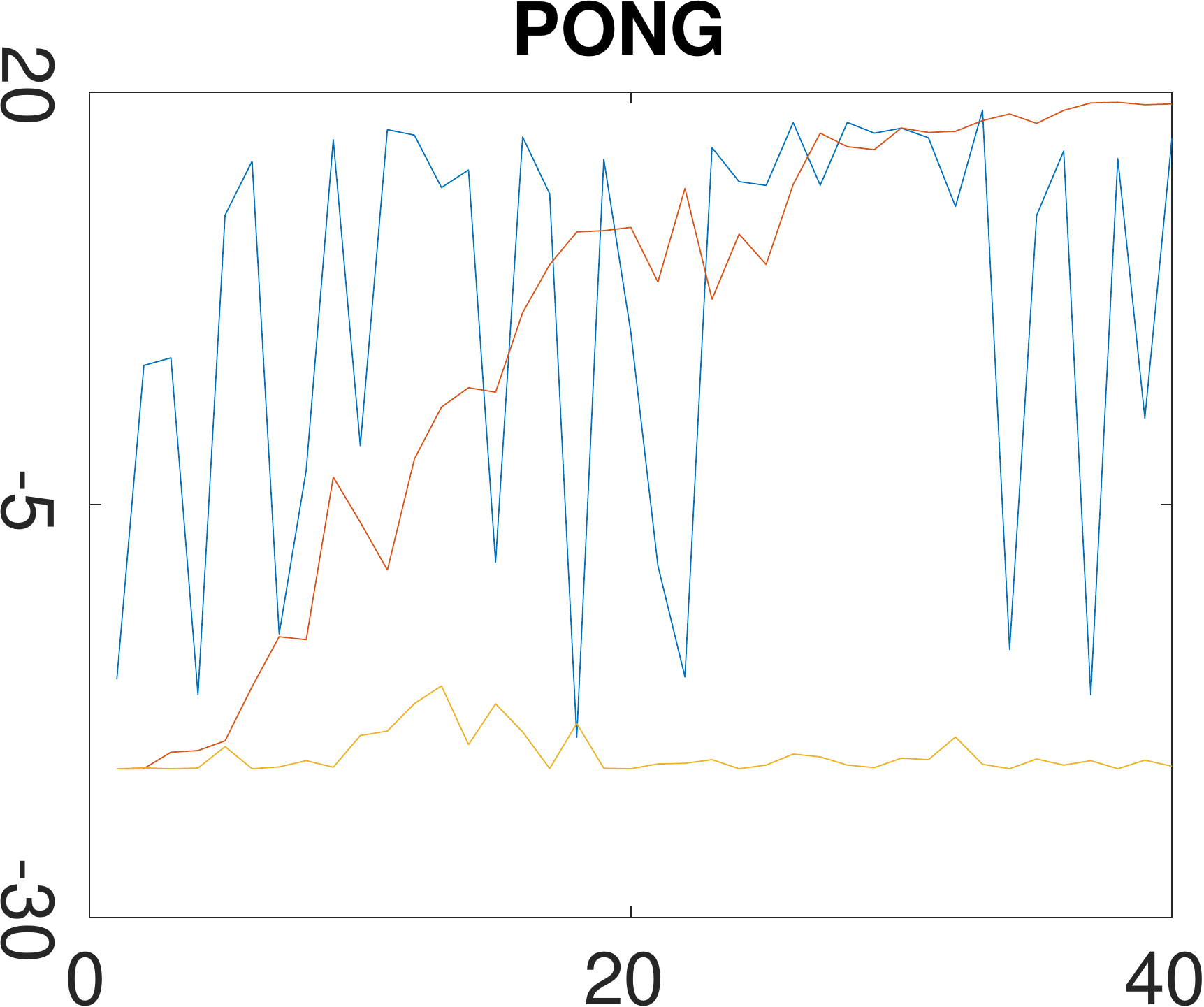} &
\includegraphics[width=0.225\linewidth]{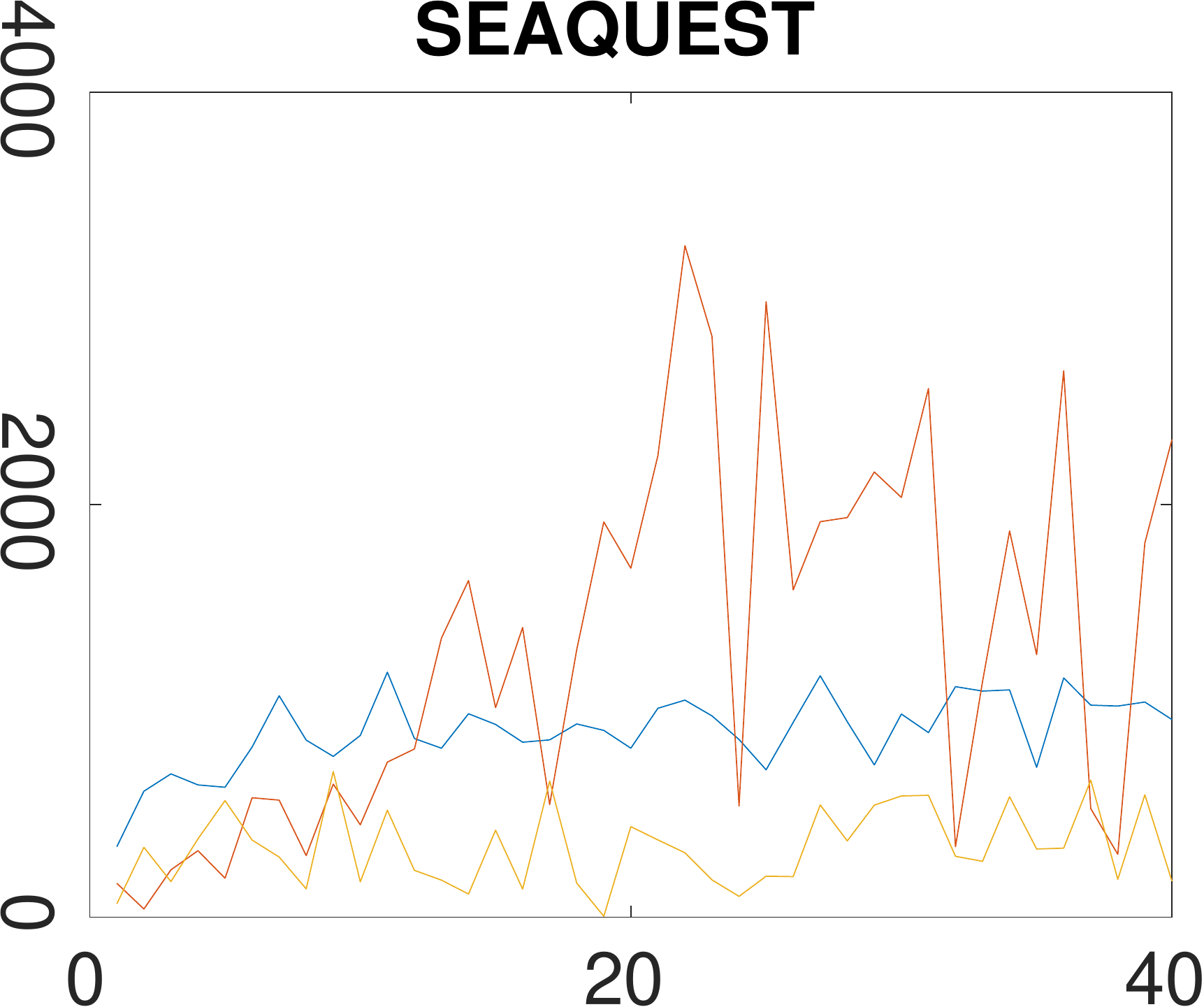} & 
\includegraphics[width=0.225\linewidth]{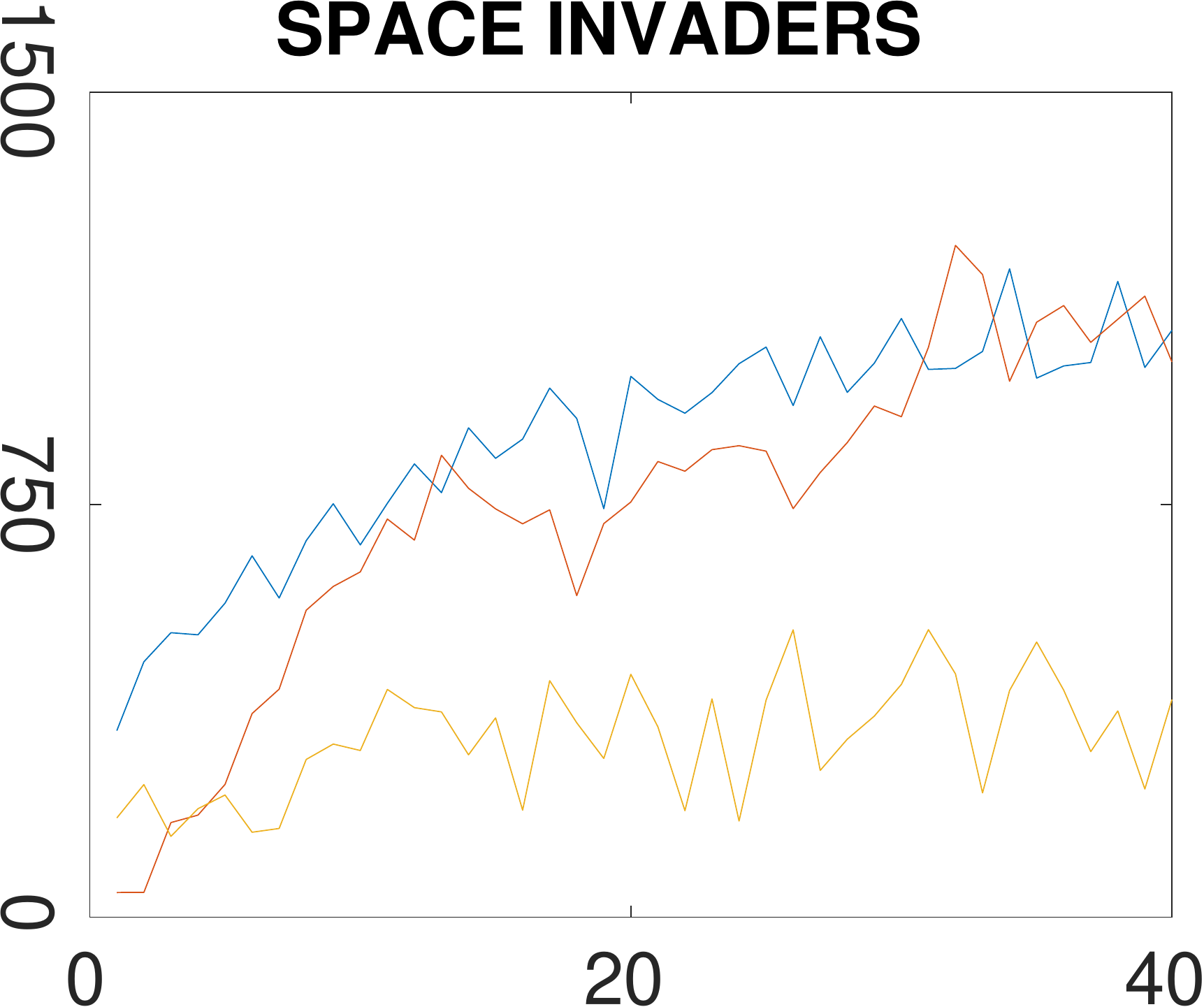} 
\end{tabular}
\vspace{-0.1in}
\caption{\small The Actor-Mimic, expert DQN, and Multitask DQN (MDQN) training curves for 40 training epochs for each of the 8 games. A training epoch is 250,000 frames and for each training epoch we evaluate the networks with a testing epoch that lasts 125,000 frames.  We report AMN, expert DQN and MDQN test reward for each testing epoch. In the testing epoch we use $\epsilon = 0.05$ in the $\epsilon$-greedy policy. The y-axis is the average unscaled episode reward during a testing epoch.}
\label{fig:MDQN_results}
\end{figure}

\begin{figure}[t!]
\begin{tabular}{cccc}
\includegraphics[width=0.225\linewidth]{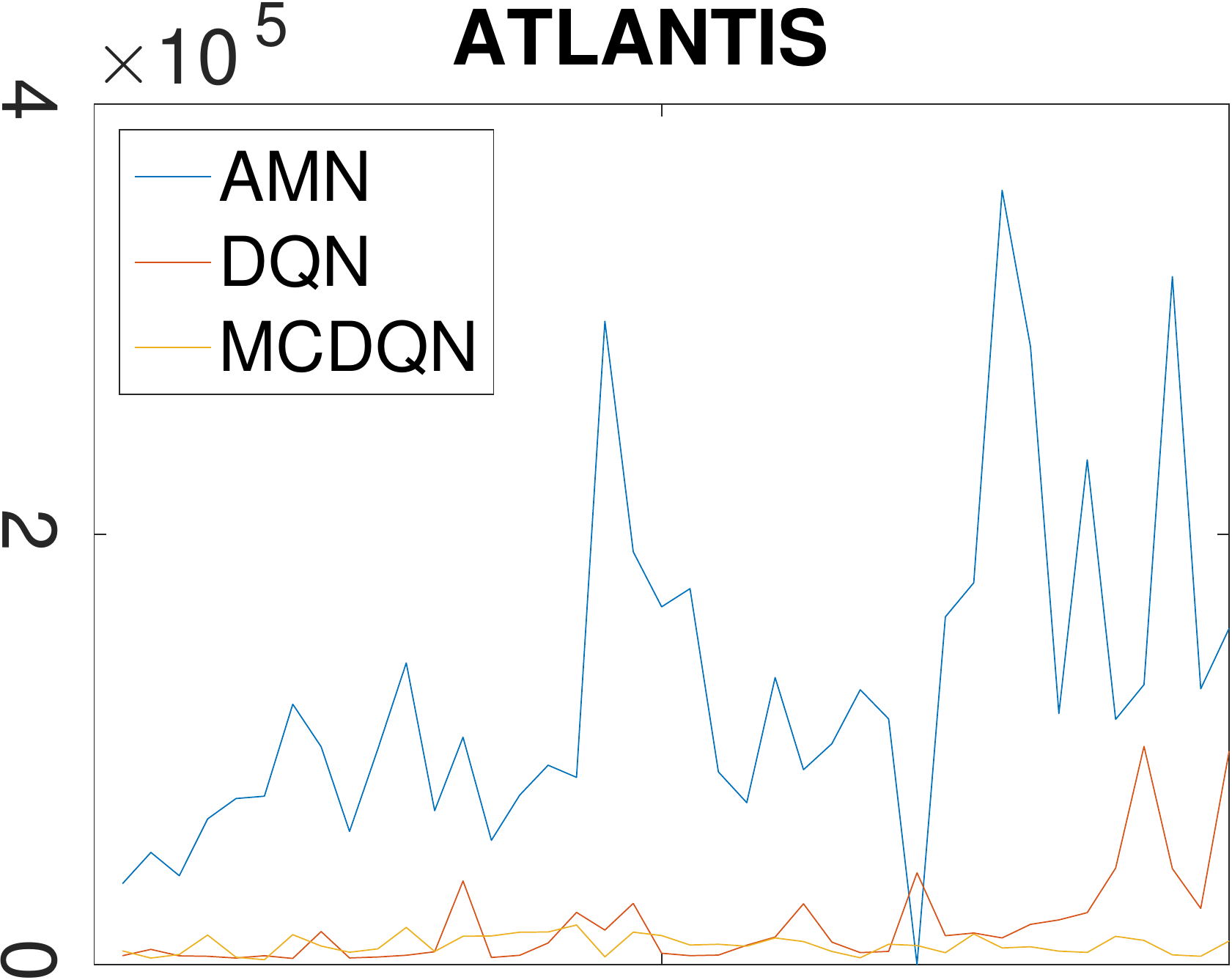} &
\includegraphics[width=0.225\linewidth]{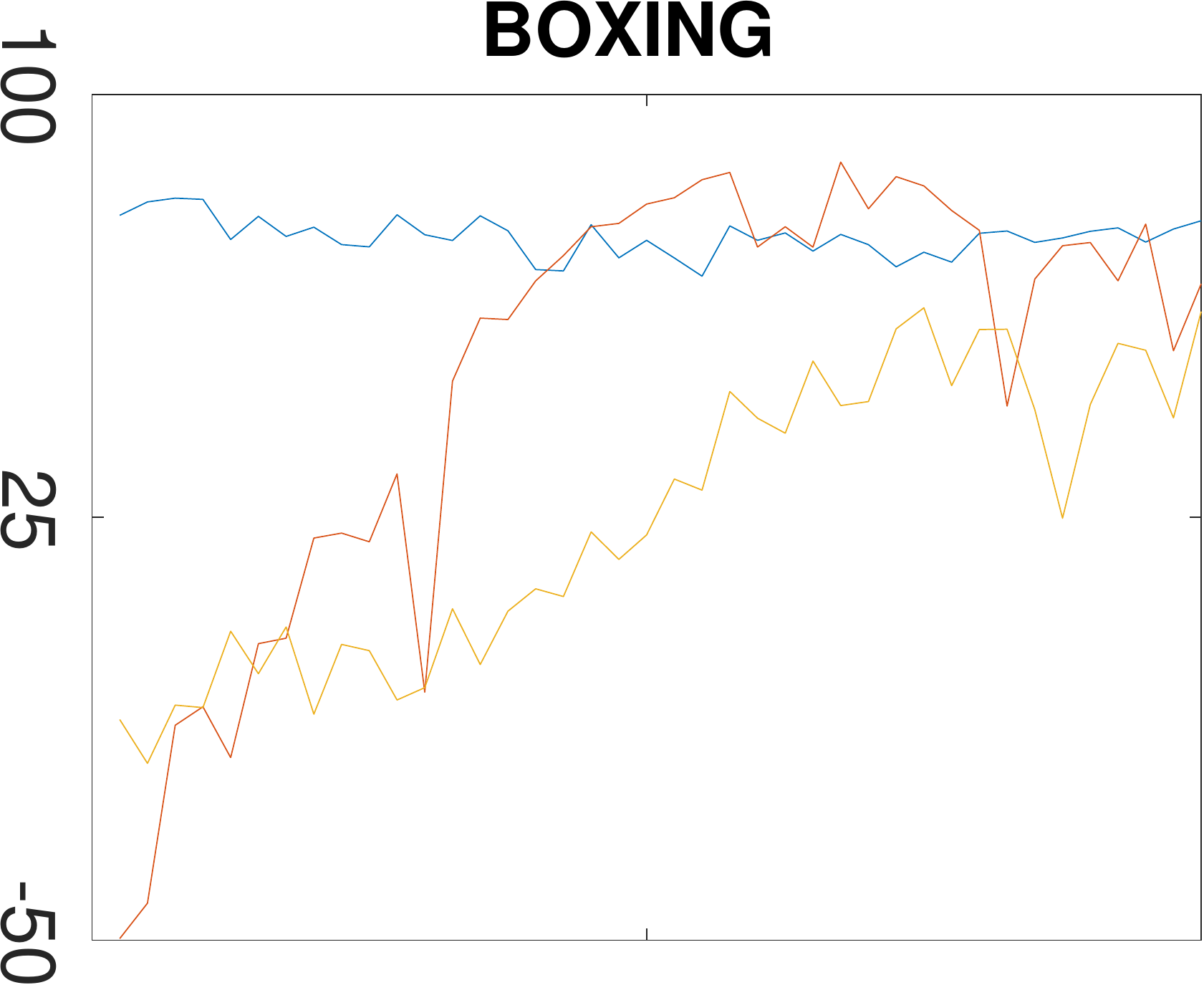} &
\includegraphics[width=0.225\linewidth]{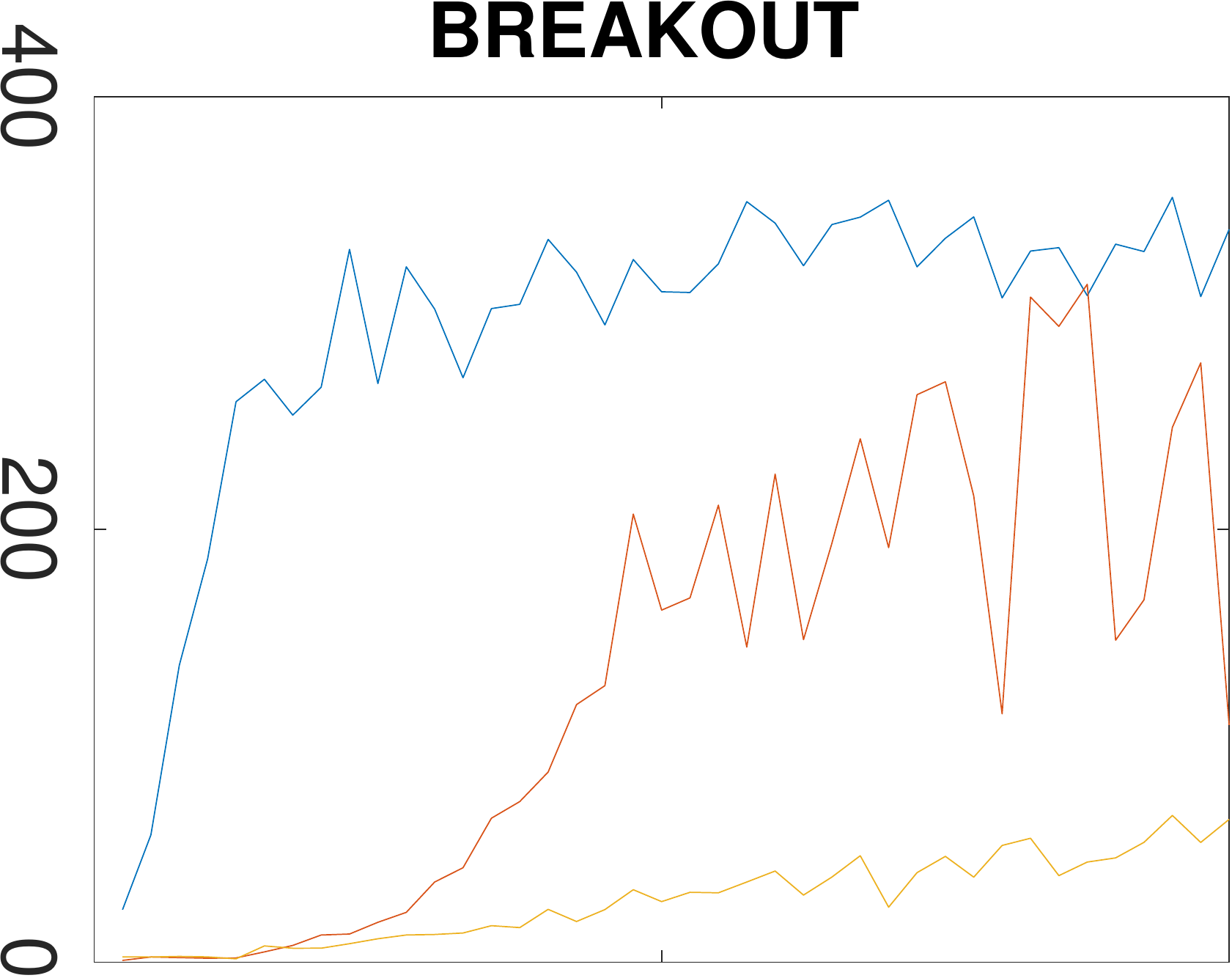} &
\includegraphics[width=0.225\linewidth]{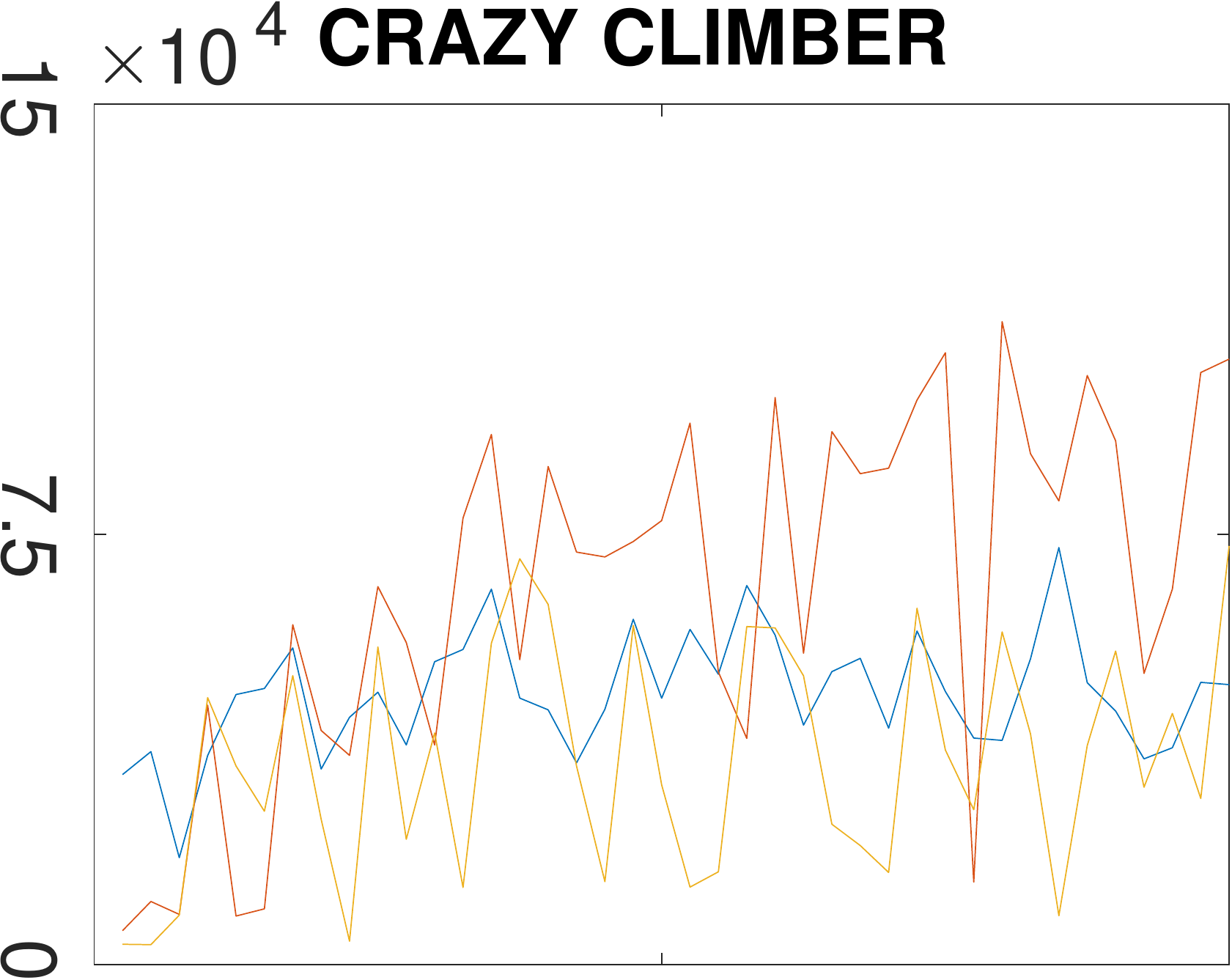} \\ 
\includegraphics[width=0.225\linewidth]{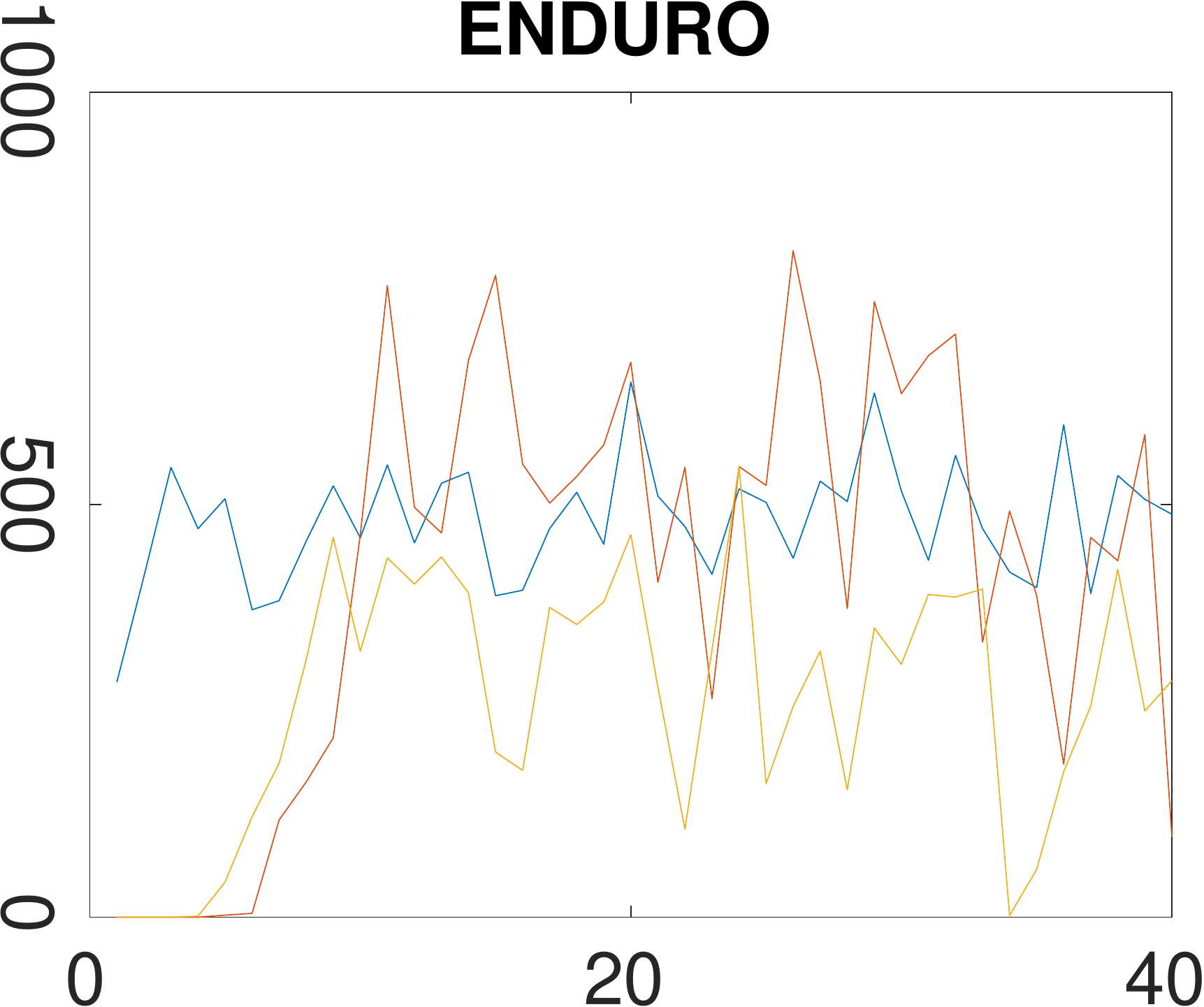} & 
\includegraphics[width=0.225\linewidth]{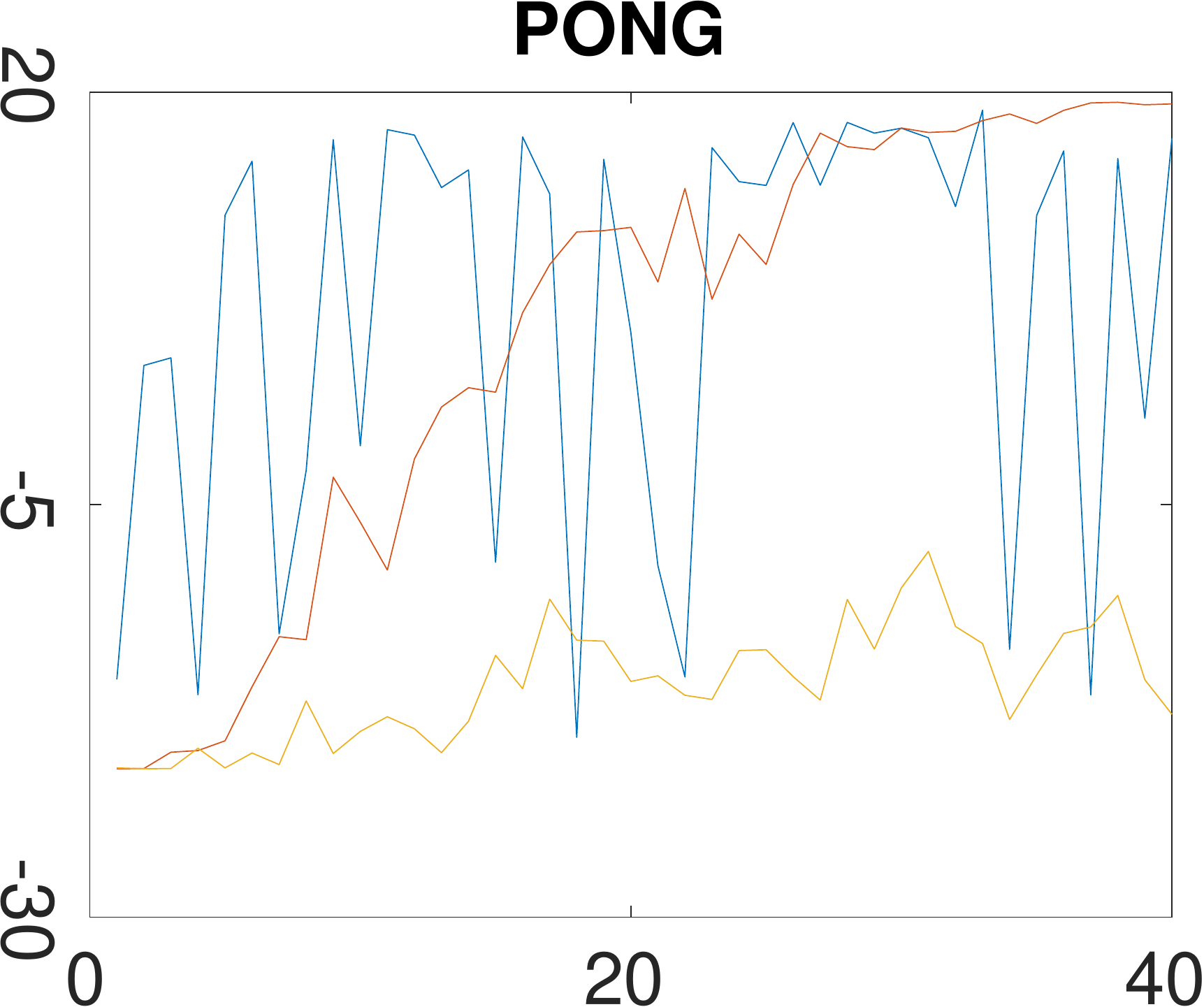} &
\includegraphics[width=0.225\linewidth]{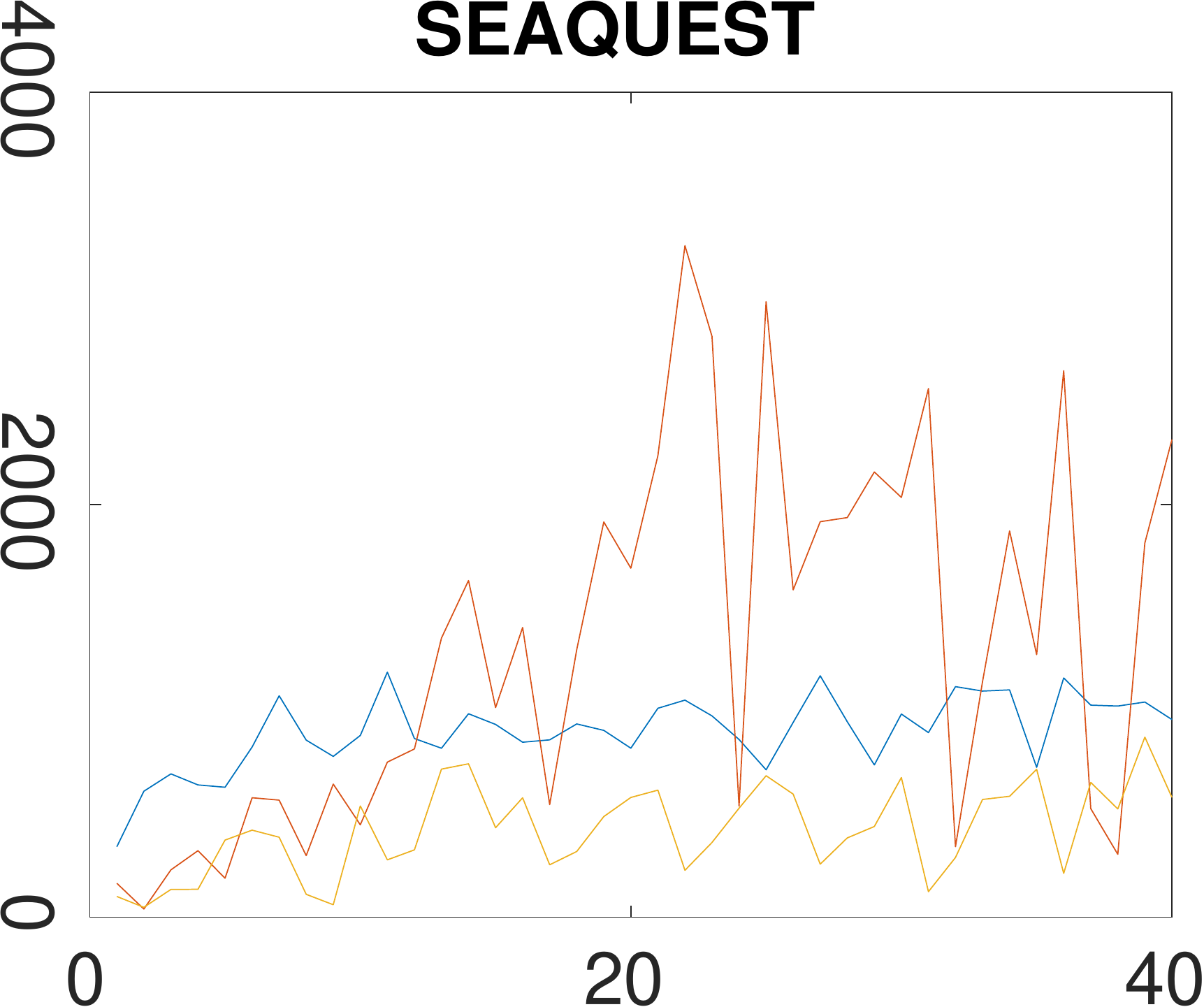} & 
\includegraphics[width=0.225\linewidth]{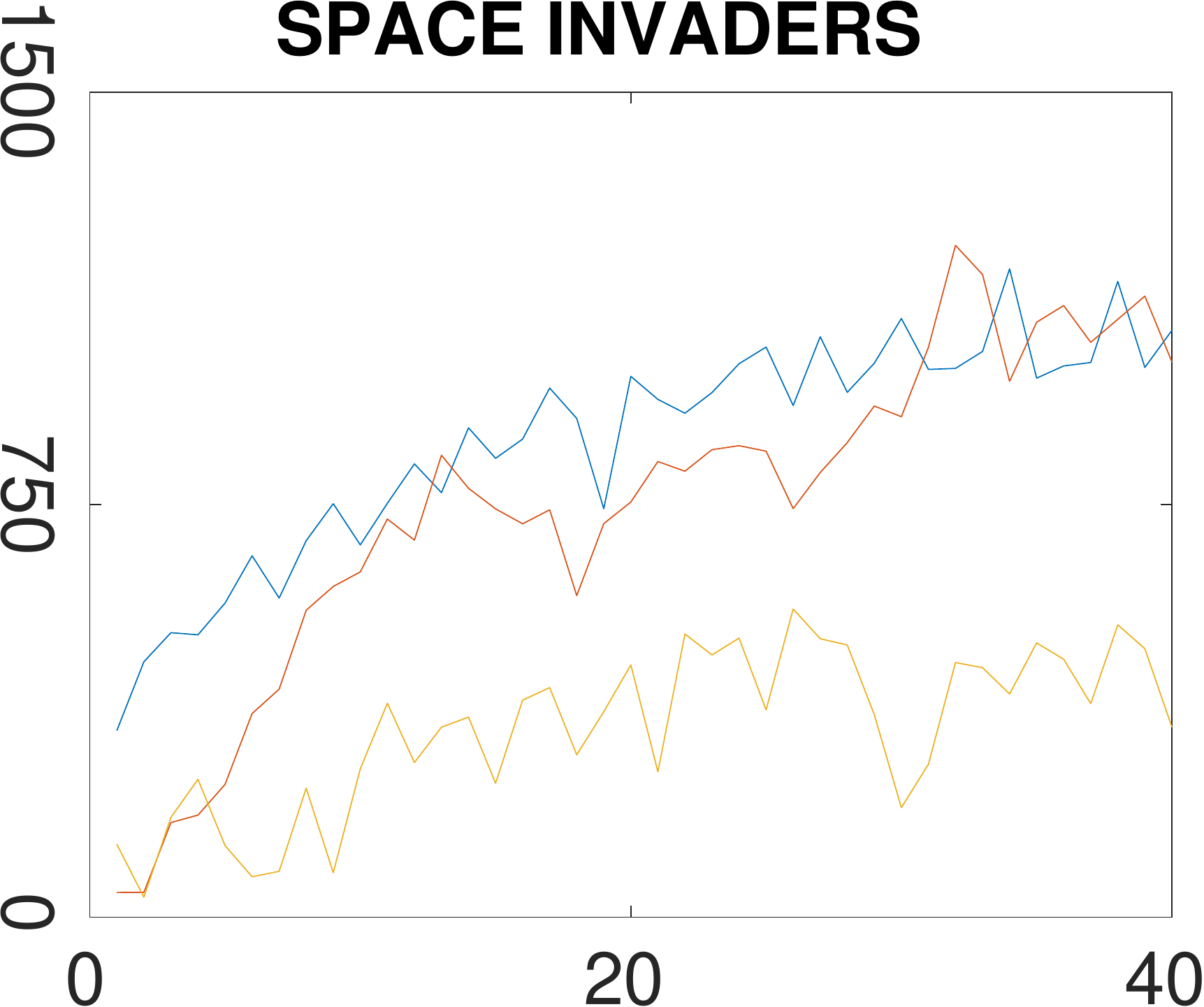} 
\end{tabular}
\vspace{-0.1in}
\caption{\small The Actor-Mimic, expert DQN, and Multitask Convolutions DQN (MCDQN) training curves for 40 training epochs for each of the 8 games. A training epoch is 250,000 frames and for each training epoch we evaluate the networks with a testing epoch that lasts 125,000 frames.  We report AMN, expert DQN and MCDQN test reward for each testing epoch. In the testing epoch we use $\epsilon = 0.05$ in the $\epsilon$-greedy policy. The y-axis is the average unscaled episode reward during a testing epoch.}
\label{fig:MCDQN_results}
\end{figure}

As a baseline, we trained DQN networks over 8 games simultaneously to test their performance against the Actor-Mimic method. We tried two different architectures, the first is using the basic DQN procedure on all 8 games. This network has a single 18 action output shared by all games, but when we train or test in a particular game, we mask out and ignore the action values from actions that are invalid for that particular game. This architecture is denoted the Multitask DQN (MDQN). The second architecture is a DQN but where each game has a separate fully-connected feature layer and action output. In this architecture only the convolutions are shared between games, and thus the features and action values are completely separate. This was to try to mitigate the destabilizing effect that the different value scales of each game had during learning. This architecture is denoted the Multitask Convolutions DQN (MCDQN). 

The results for the MDQN and MCDQN are shown in Figures~\ref{fig:MDQN_results} and~\ref{fig:MCDQN_results}, respectively. From the figures, we can observe that the AMN is far more stable during training as well as being consistently higher in performance than either the MDQN or MCDQN methods. In addition, it can be seen that the MDQN and MCDQN will often focus on performing reasonably well on a small subset of the source games, such as on Boxing and Enduro, while making little to no progress in others, such as Breakout or Pong. Between the MDQN and MCDQN, we can see that the MCDQN hardly improves results even though it has significantly larger computational cost that scales linearly with the number of source games. 

For the specific details of the architectures we tested, for the MDQN the architecture was: 8x8x4x32-4 \footnote{\label{fn:conv} Here we represent convolutional layers as WxWxCxN-S, where W is the width of the (square) convolution kernel, C is the number of input images, N is the number of filter maps and S is the convolution stride.} $\rightarrow$ 4x4x32x64-2 $\rightarrow$ 3x3x64x64-1 $\rightarrow$ 512 fully-connected units $\rightarrow$ 18 actions. This is exactly the same network architecture as used for the 8 game AMN in Section~\ref{Actor_Mimic_exp}. For the MCDQN, the bottom convolutional layers were the same as the MDQN, except there are 8 parallel subnetworks on top of the convolutional layers. These game-specific subnetworks had the architecture: 512 fully-connected units $\rightarrow$ 18 actions. All layers except the action outputs were followed with a rectifier non-linearity.

\end{appendices}

\begin{appendices}

\section{Actor-Mimic Network Multitask Results for Transfer Pretraining}
\label{app:transfer}

The network used for transfer consisted of the following architecture: 8x8x4x256-4~\textsuperscript{\ref{fn:conv}} $\rightarrow$ 4x4x256x512-2 $\rightarrow$ 3x3x512x512-1 $\rightarrow$ 3x3x512x512-1 $\rightarrow$ 2048 fully-connected units $\rightarrow$ 1024 fully-connected units $\rightarrow$ 18 actions. All layers except the final one were followed with a rectifier non-linearity. 

\begin{figure}[hb!]
\begin{tabular}{ccc}
\includegraphics[width=0.3\linewidth]{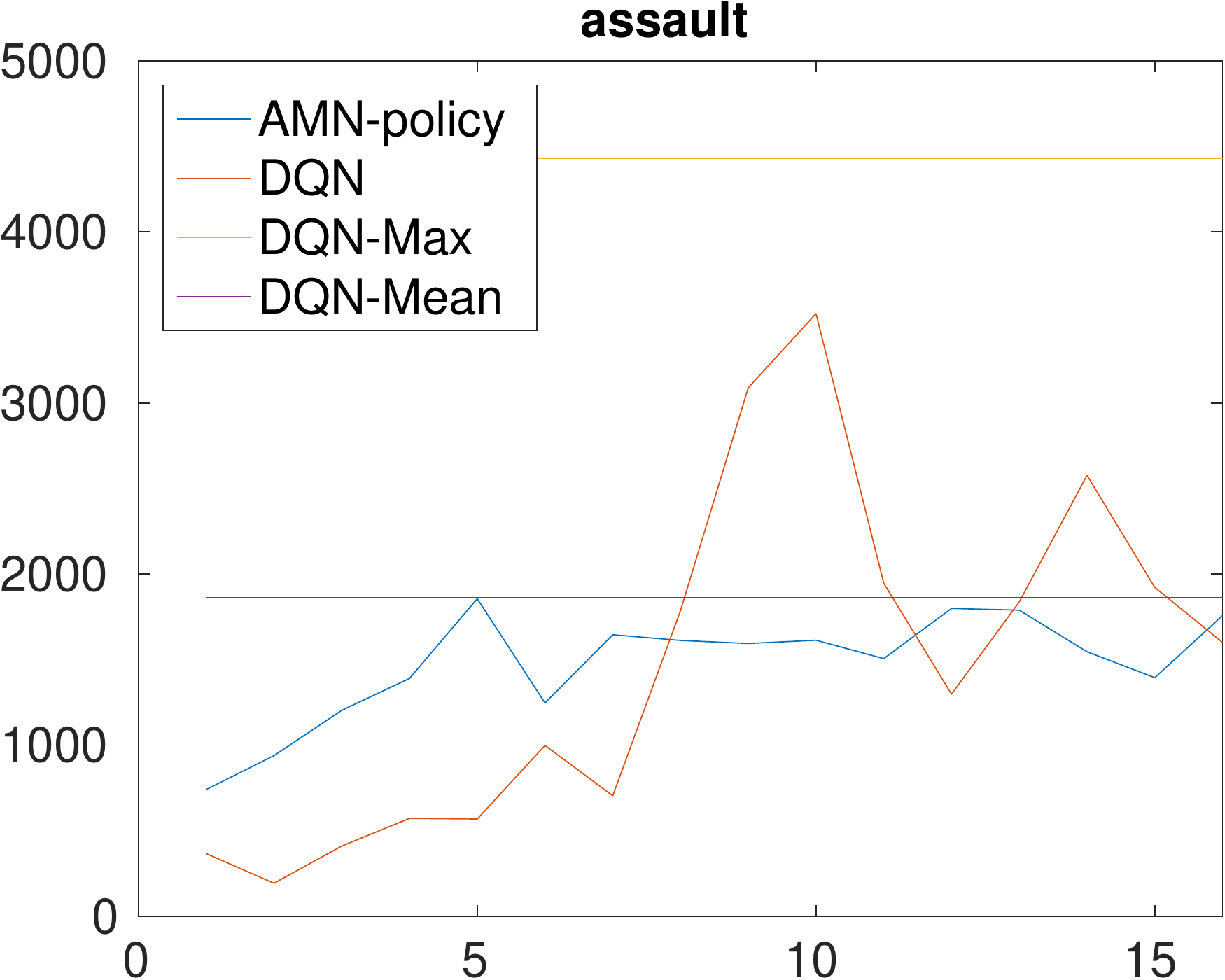} &
\includegraphics[width=0.3\linewidth]{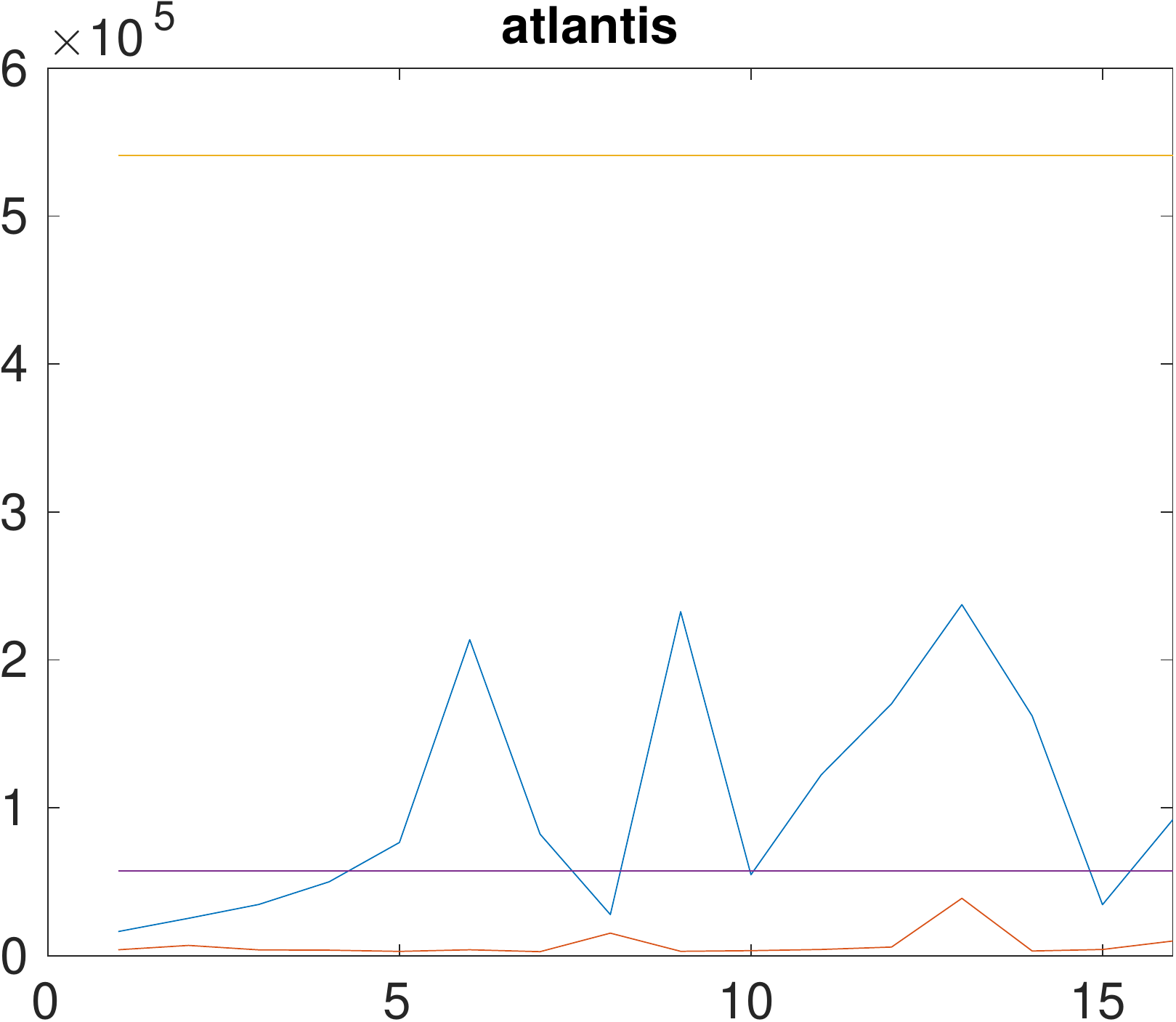} &
\includegraphics[width=0.3\linewidth]{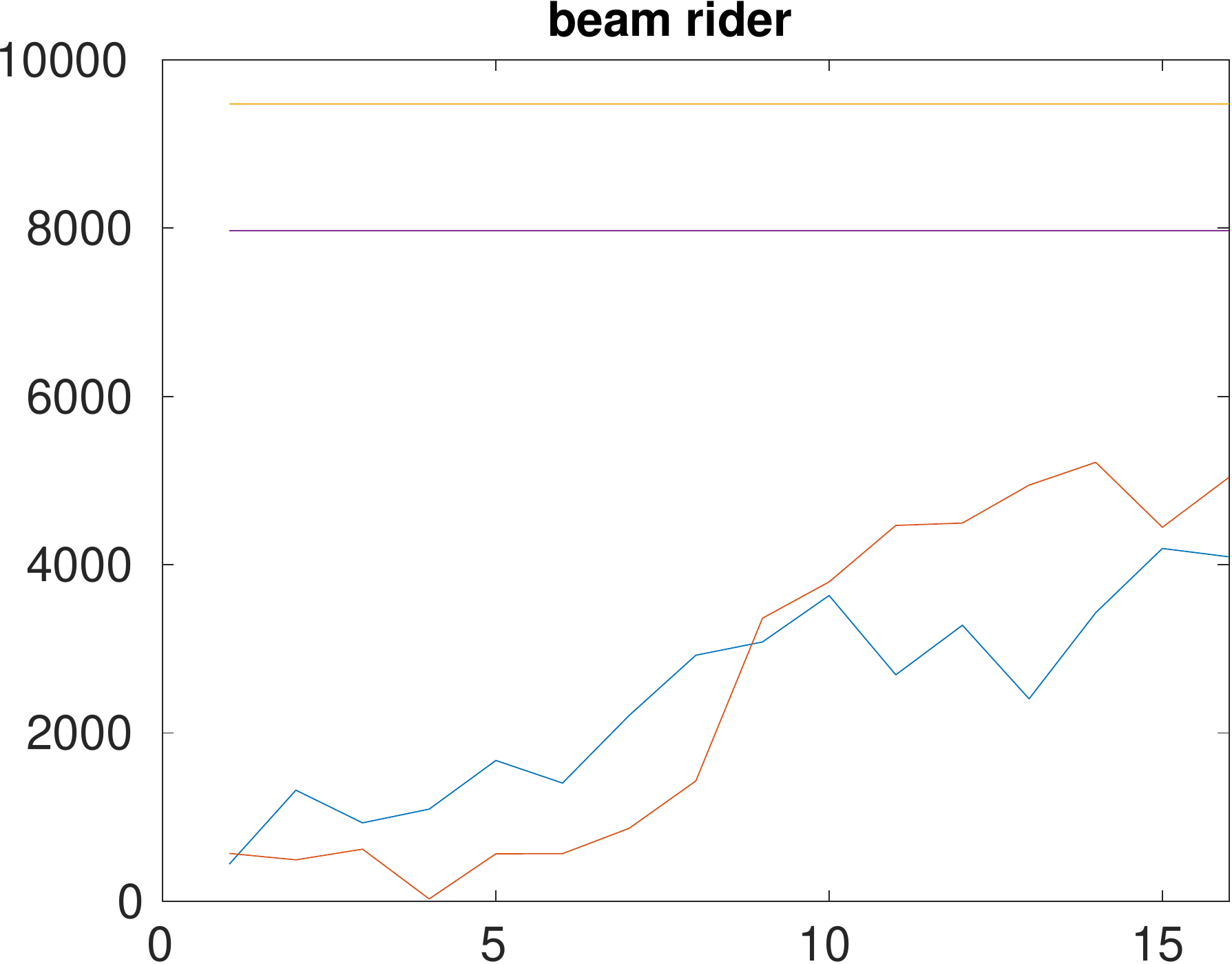} \\
\includegraphics[width=0.3\linewidth]{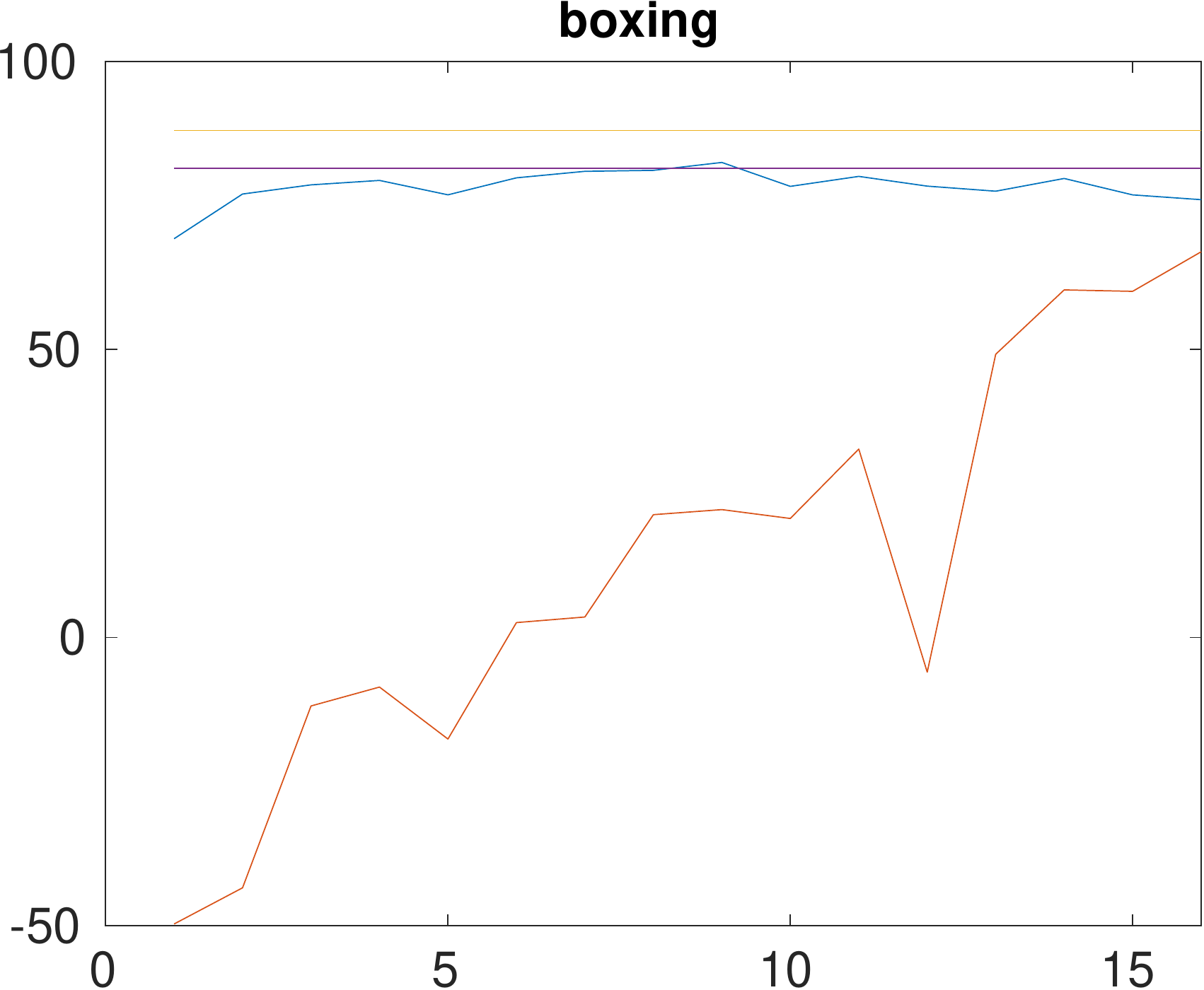} &
\includegraphics[width=0.3\linewidth]{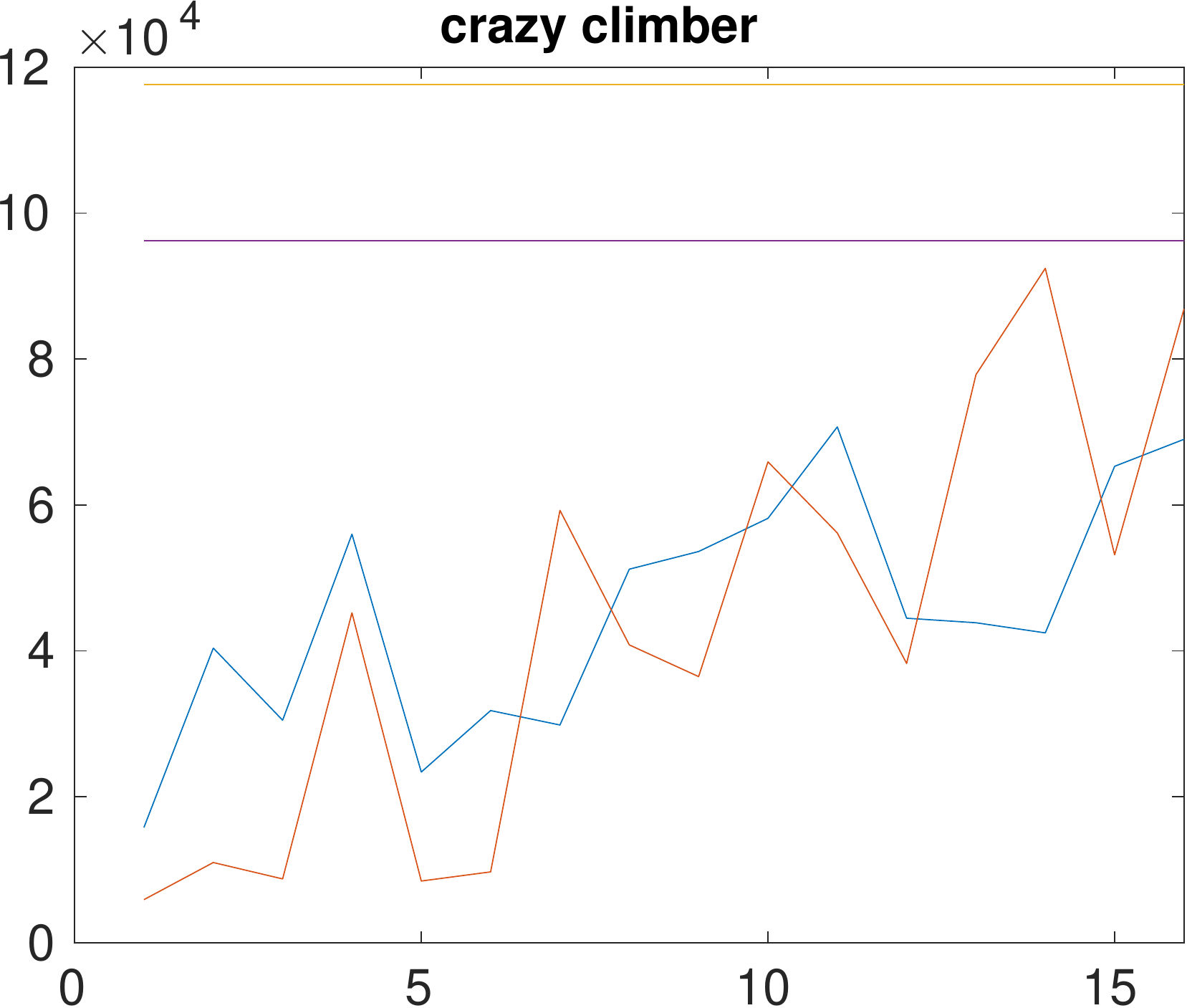} &
\includegraphics[width=0.3\linewidth]{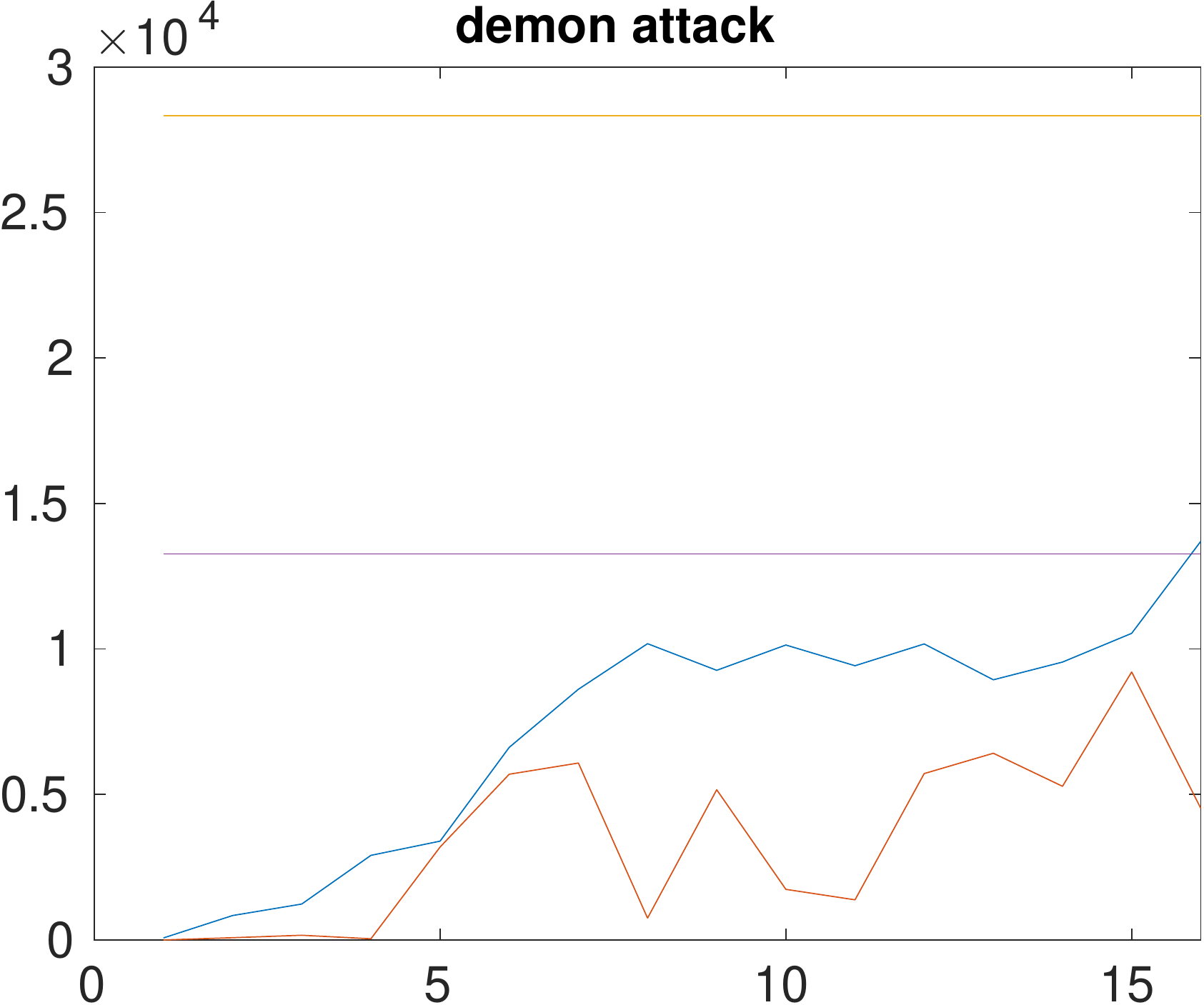} \\
\includegraphics[width=0.3\linewidth]{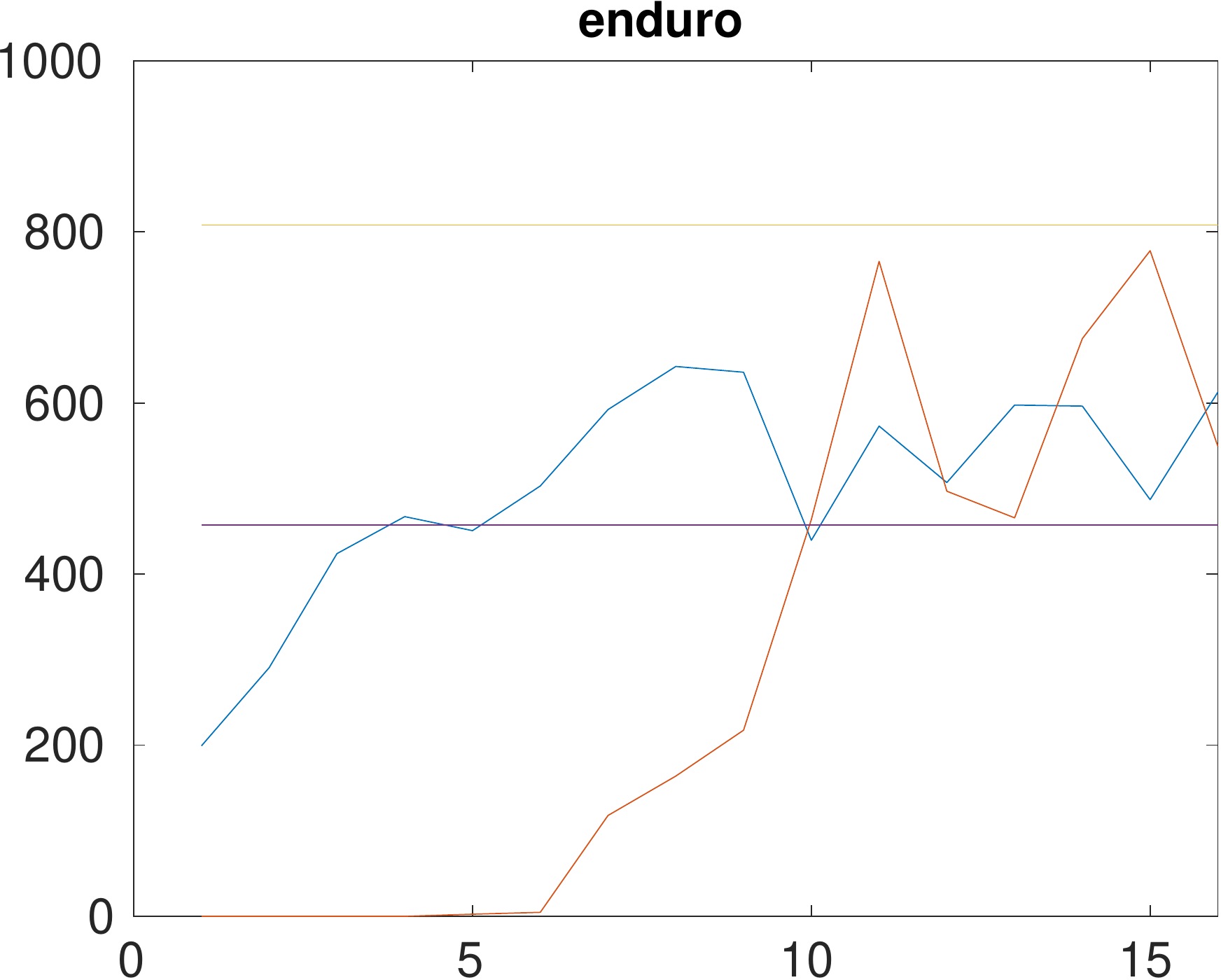} &
\includegraphics[width=0.3\linewidth]{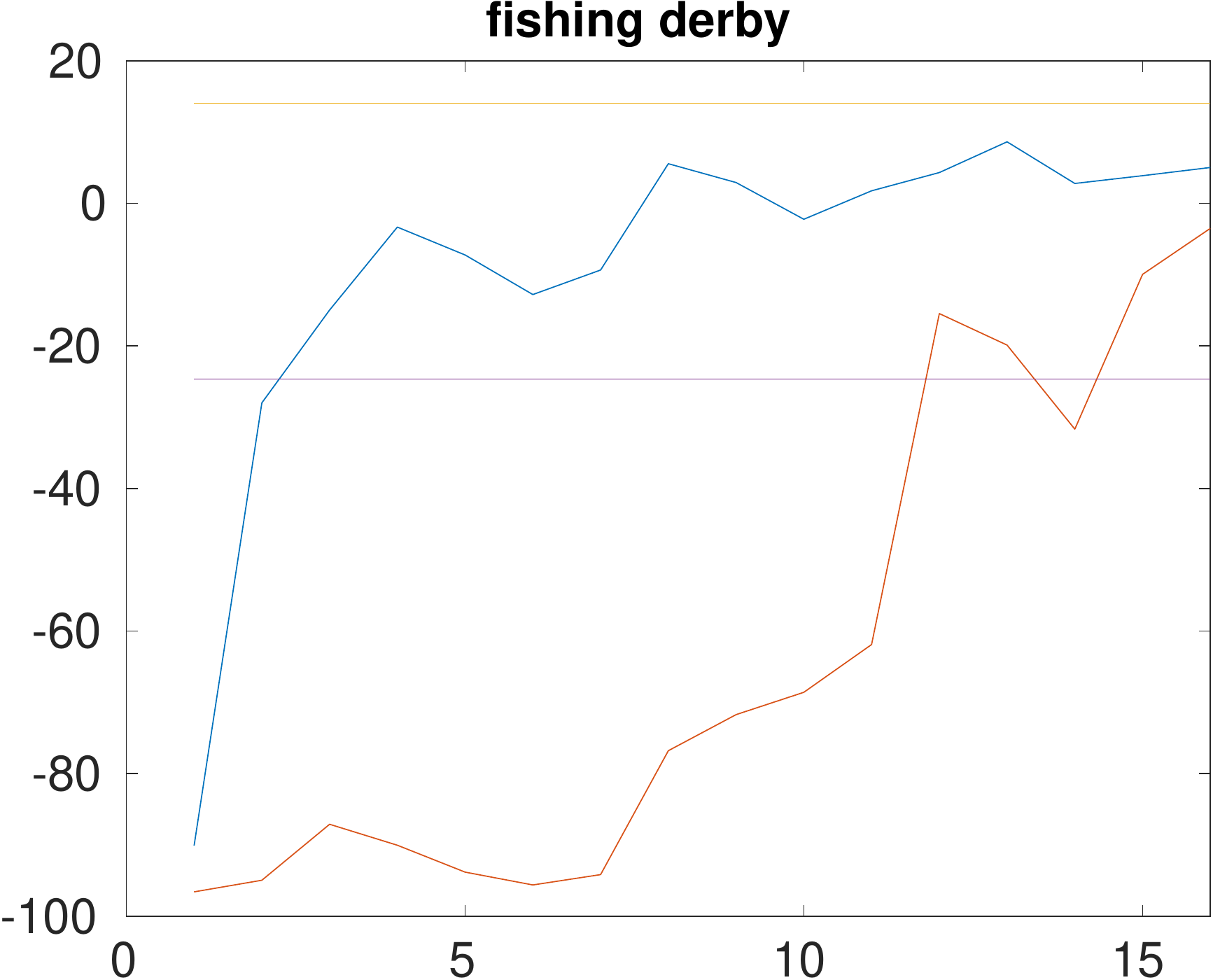} &
\includegraphics[width=0.3\linewidth]{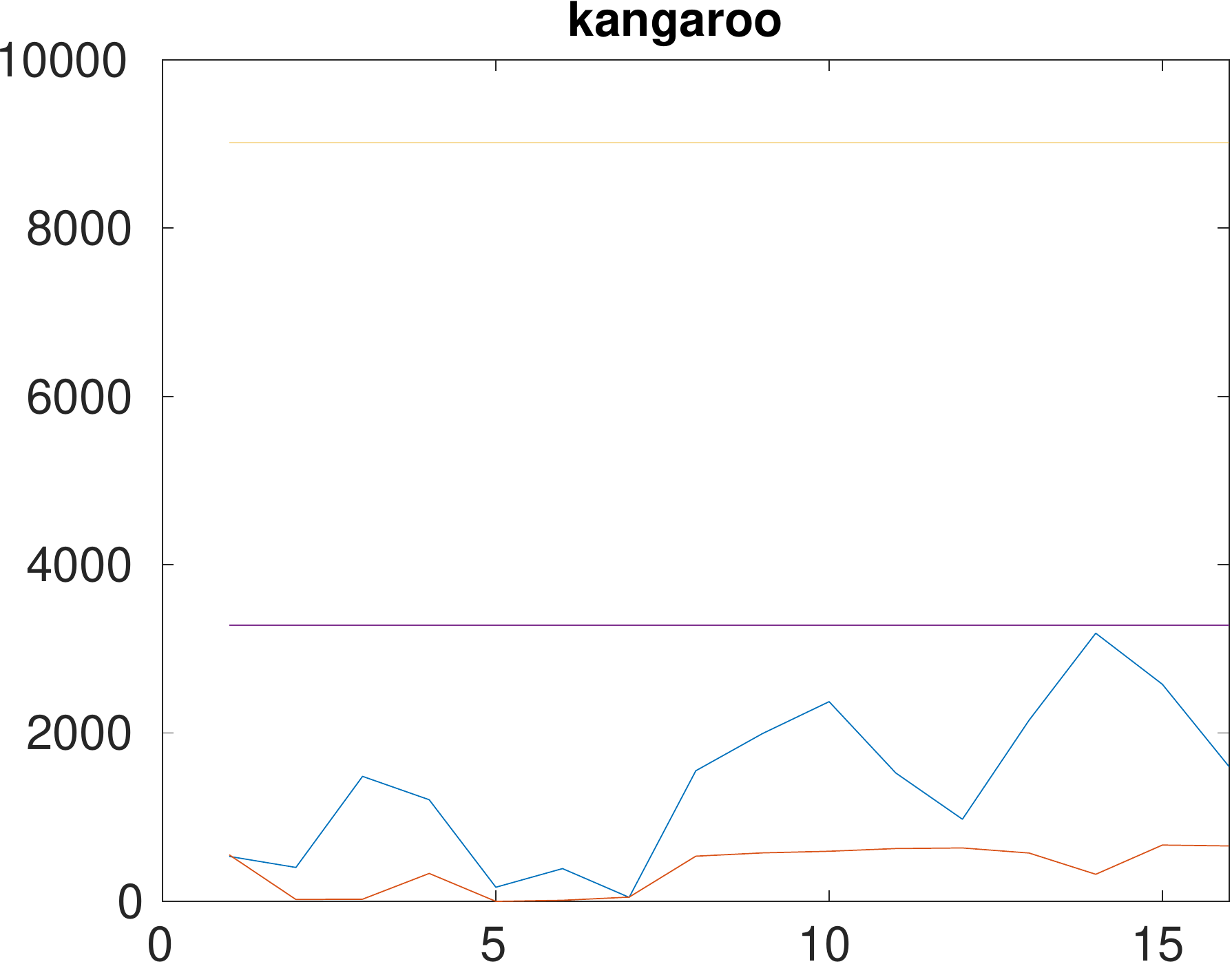} \\
\includegraphics[width=0.3\linewidth]{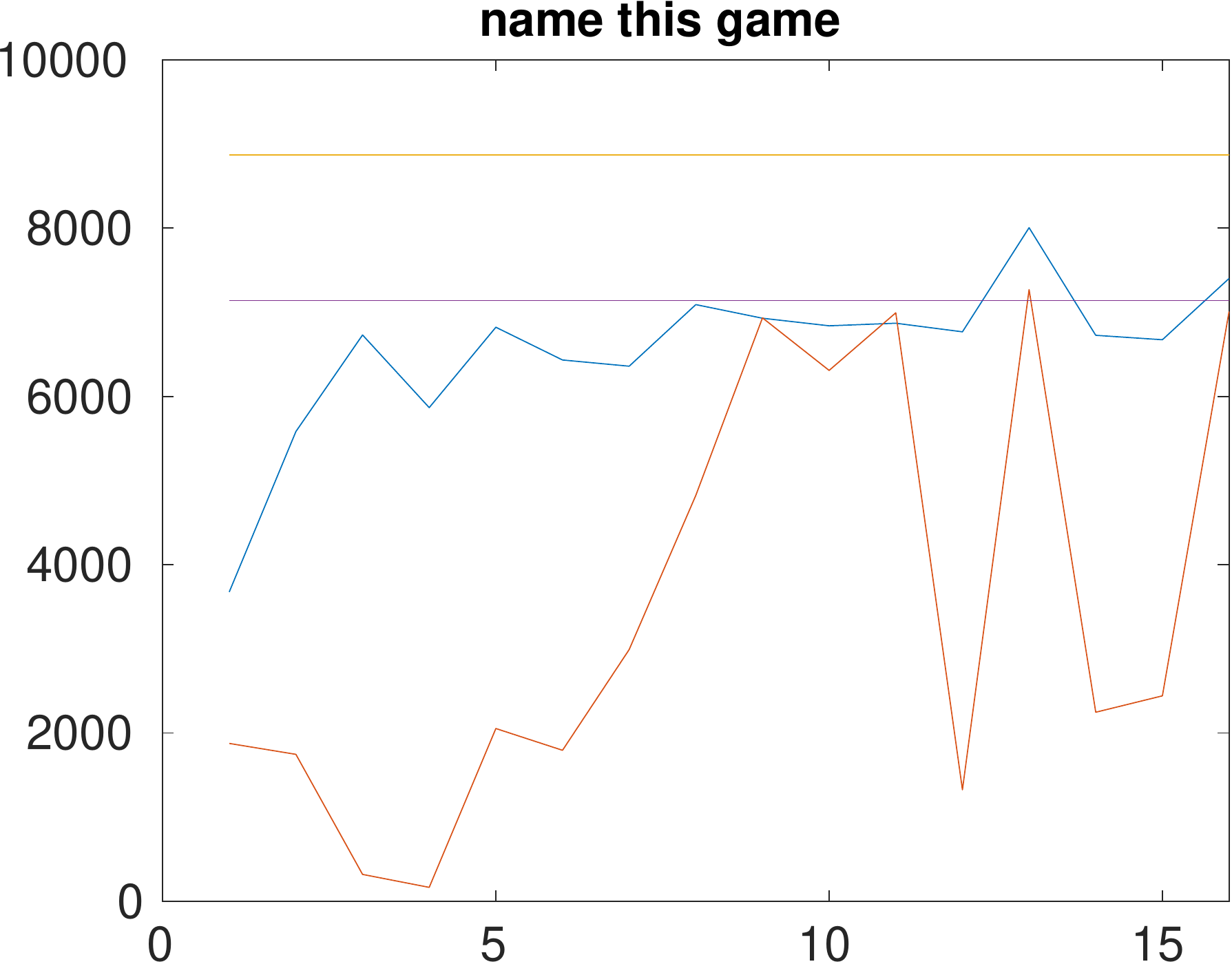} &
\includegraphics[width=0.3\linewidth]{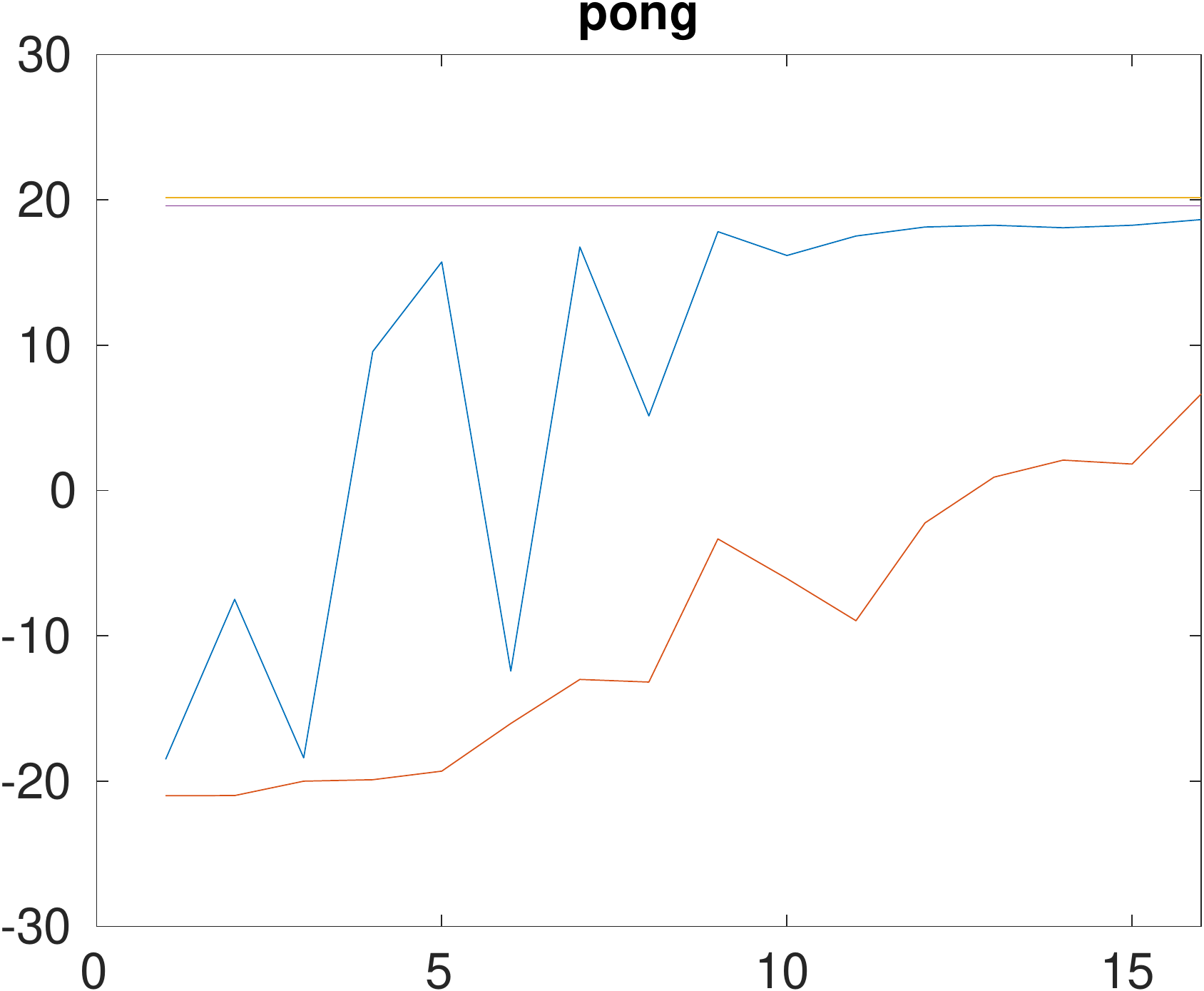} &
\includegraphics[width=0.3\linewidth]{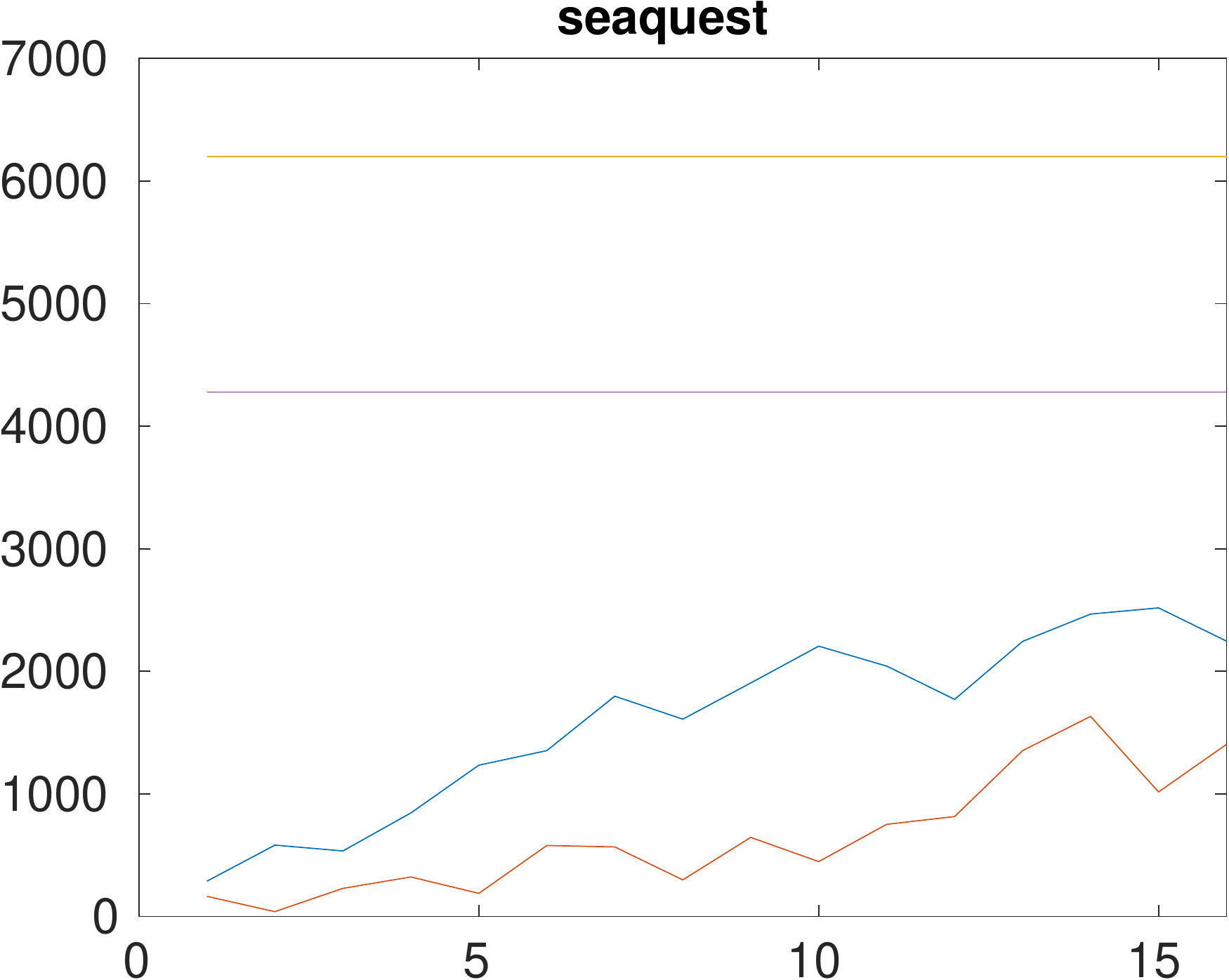} \\
\multicolumn{3}{c}{\includegraphics[width=0.3\linewidth]{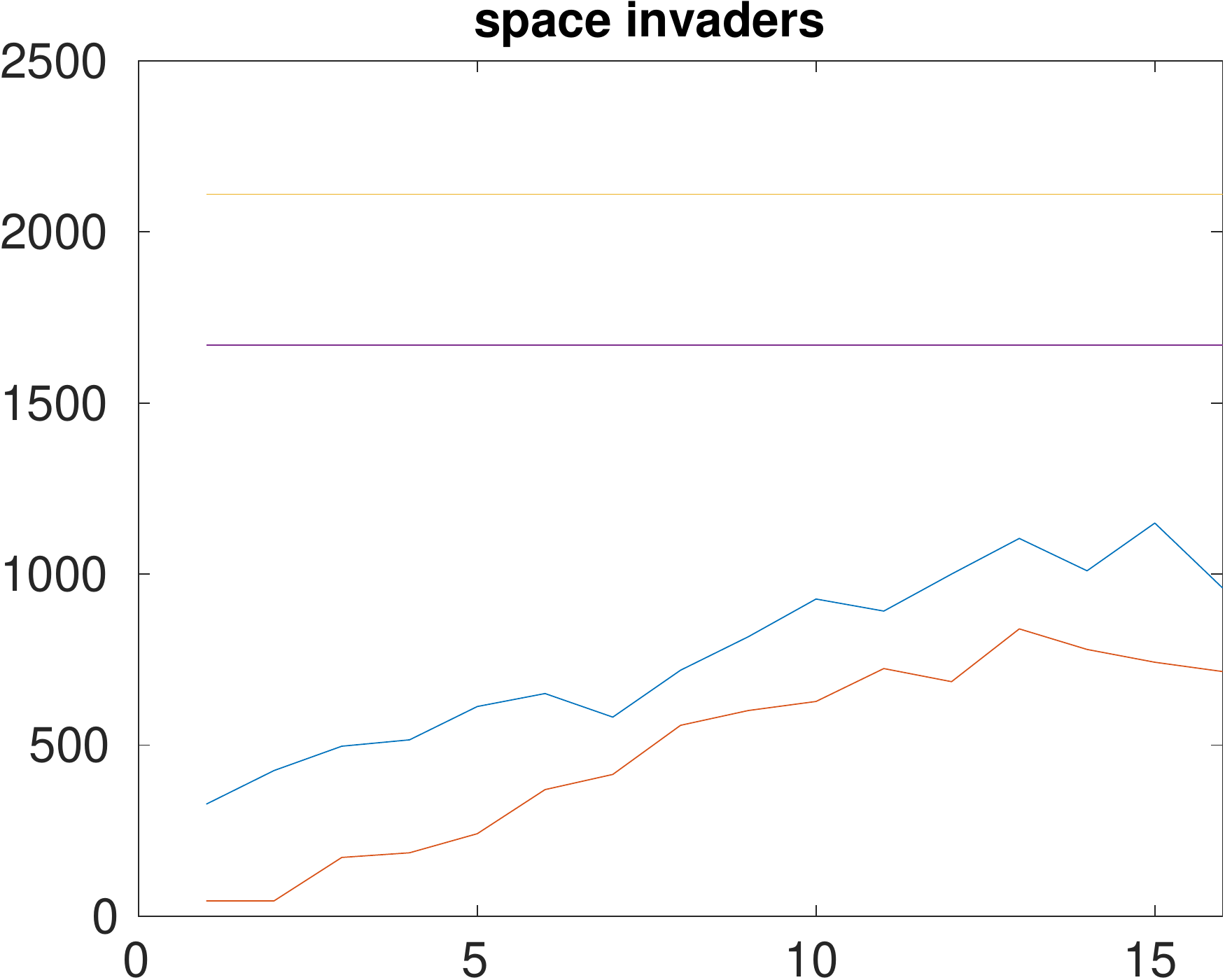}}
\end{tabular}
\caption{\small The Actor-Mimic training curves for the network trained solely with the policy regression objective (AMN-policy). The AMN-policy is trained for 16 epochs, or 4 million frames per game. We compare against the (smaller network) expert DQNs, which are trained until convergence. We also report the maximum test reward the expert DQN achieved over all training epochs, as well as the mean testing reward achieved over the last 10 epochs.}
\end{figure} 

\begin{figure}[hb!]
\begin{tabular}{ccc}
\includegraphics[width=0.3\linewidth]{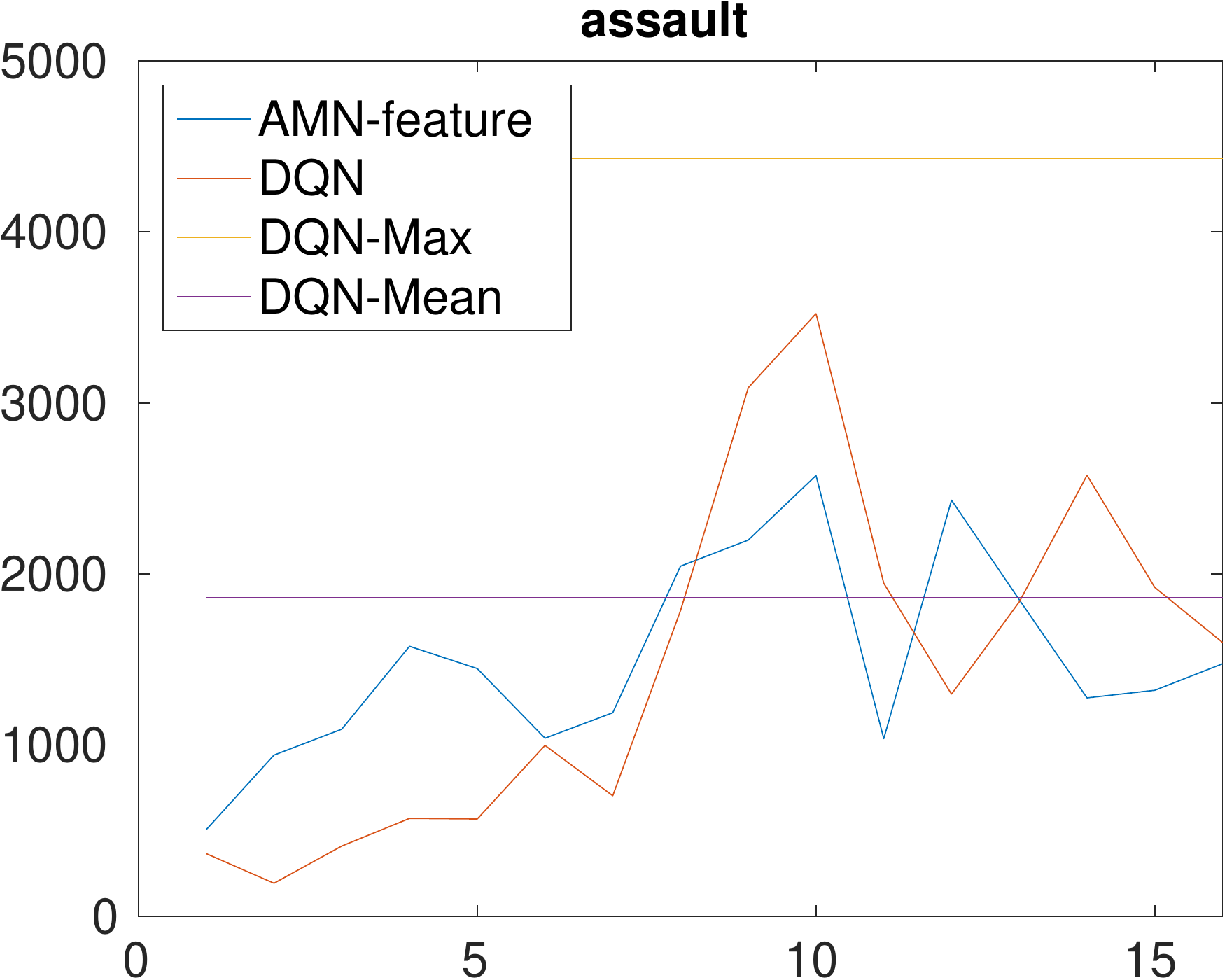} &
\includegraphics[width=0.3\linewidth]{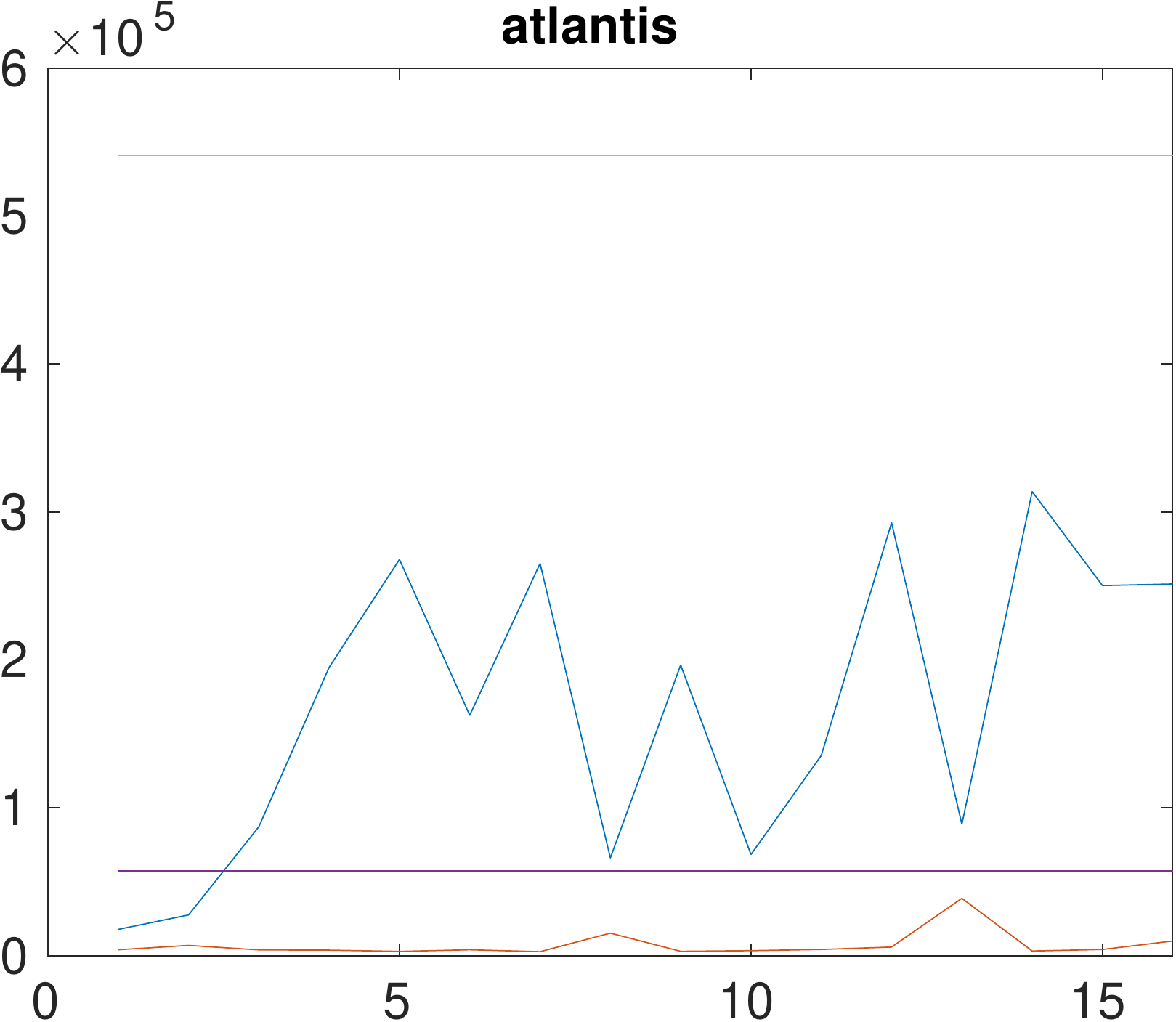} &
\includegraphics[width=0.3\linewidth]{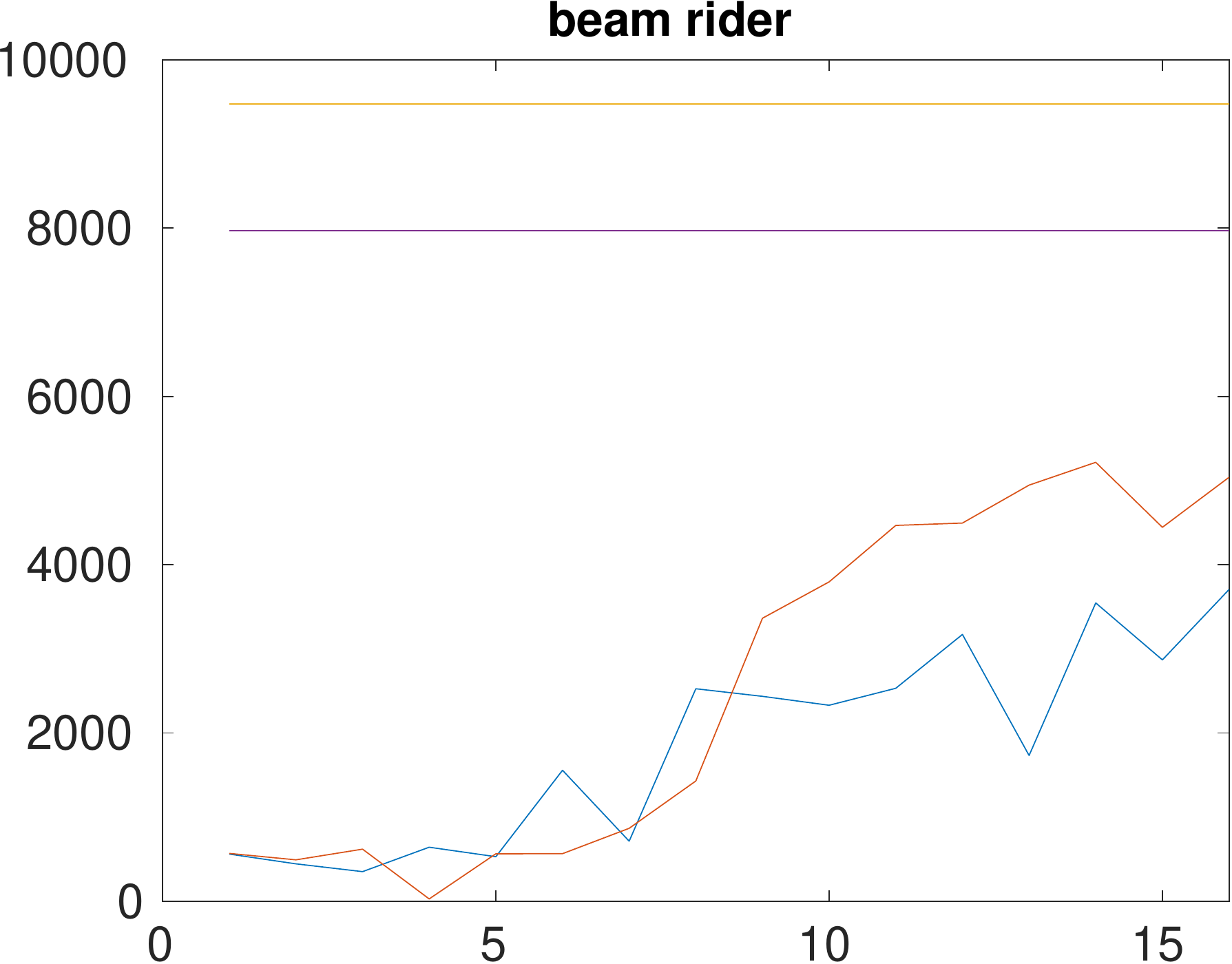} \\
\includegraphics[width=0.3\linewidth]{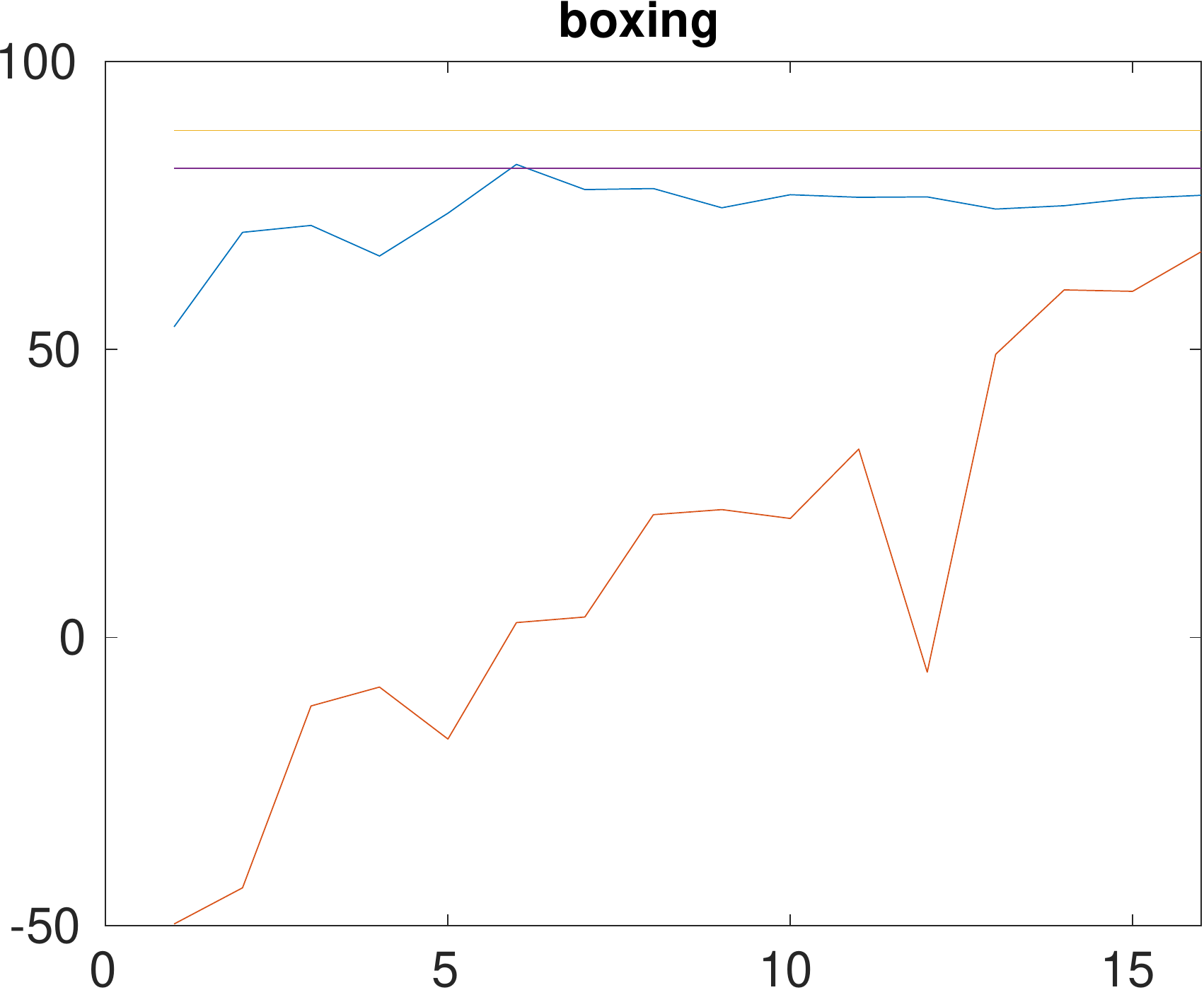} &
\includegraphics[width=0.3\linewidth]{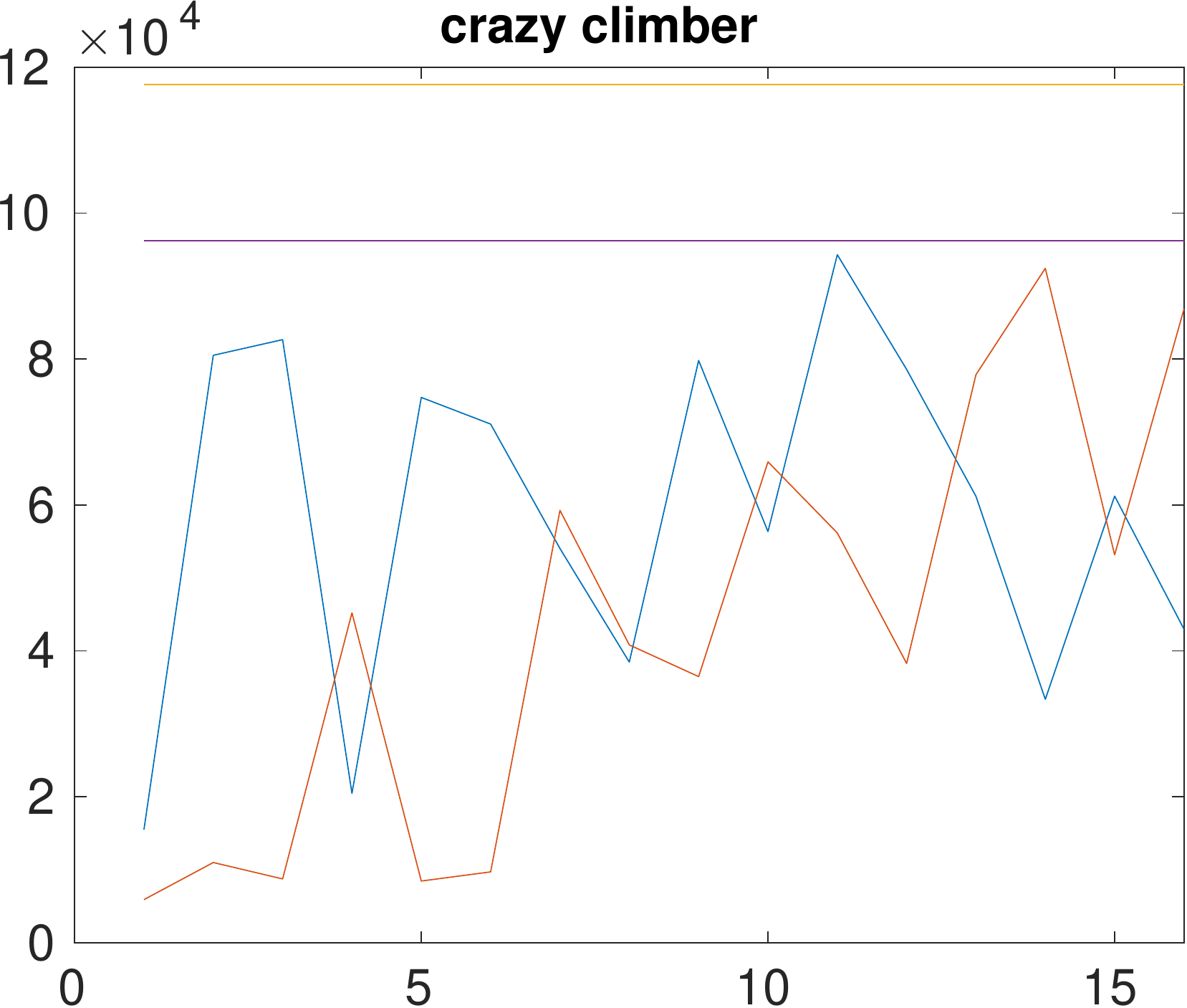} &
\includegraphics[width=0.3\linewidth]{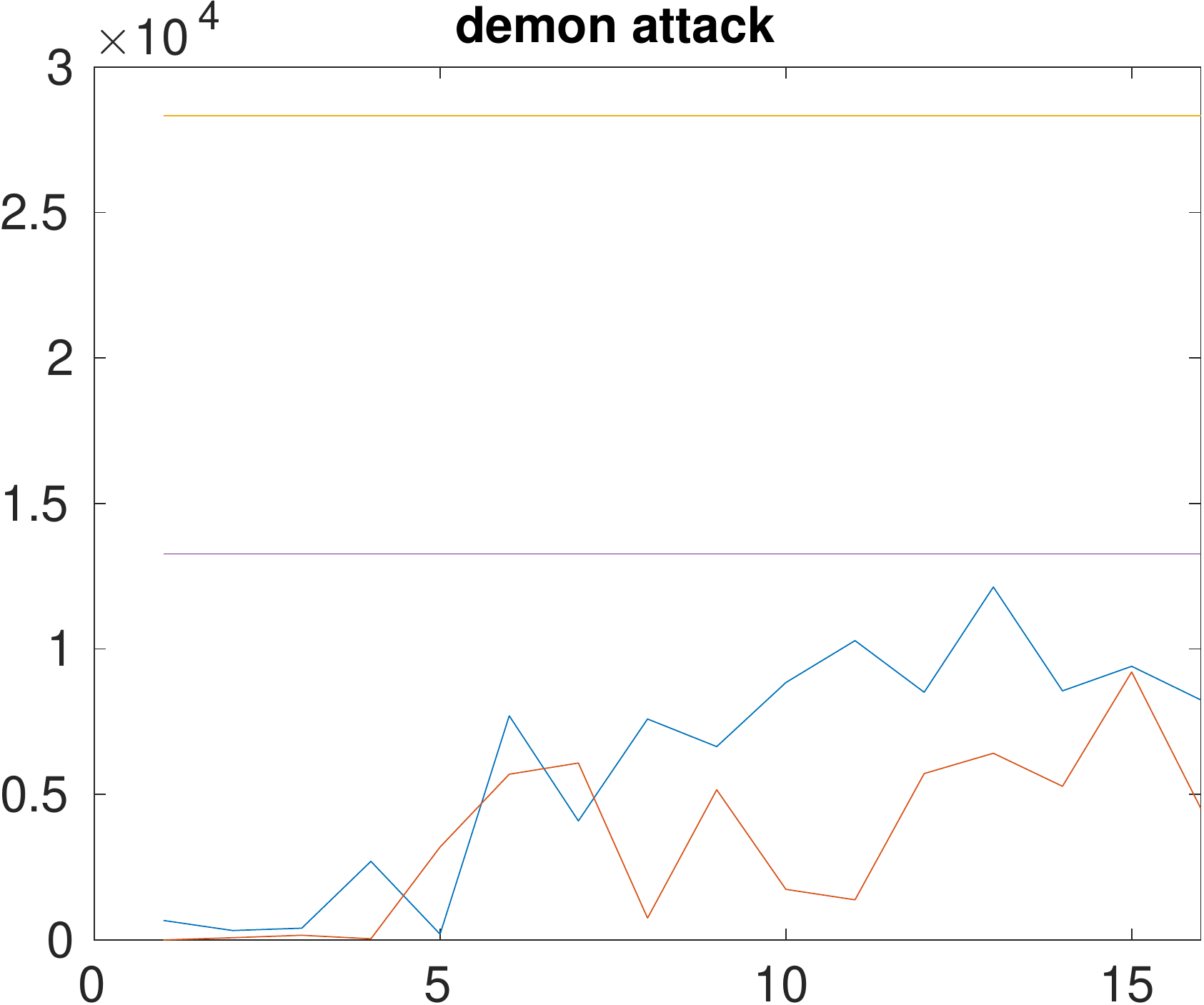} \\
\includegraphics[width=0.3\linewidth]{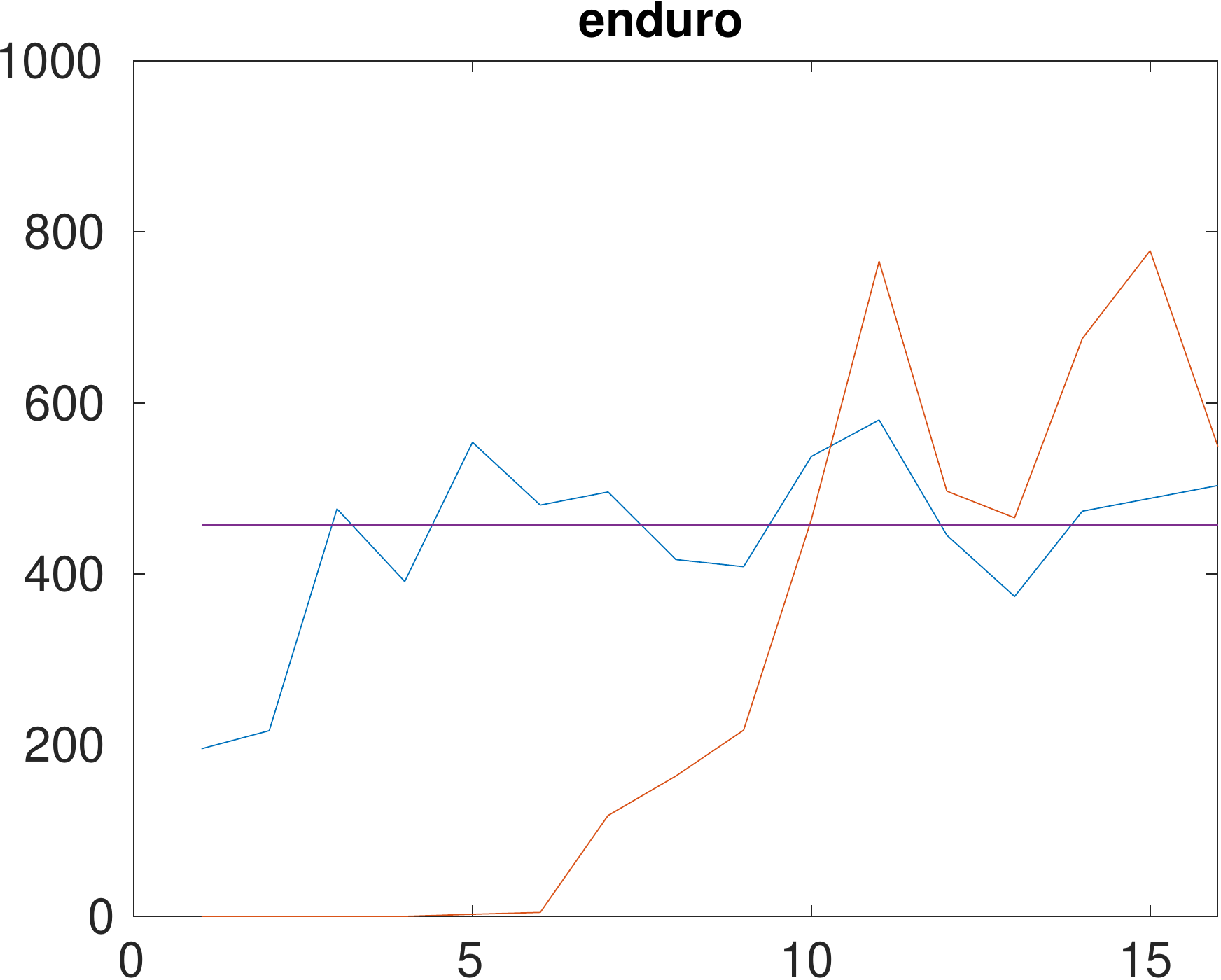} &
\includegraphics[width=0.3\linewidth]{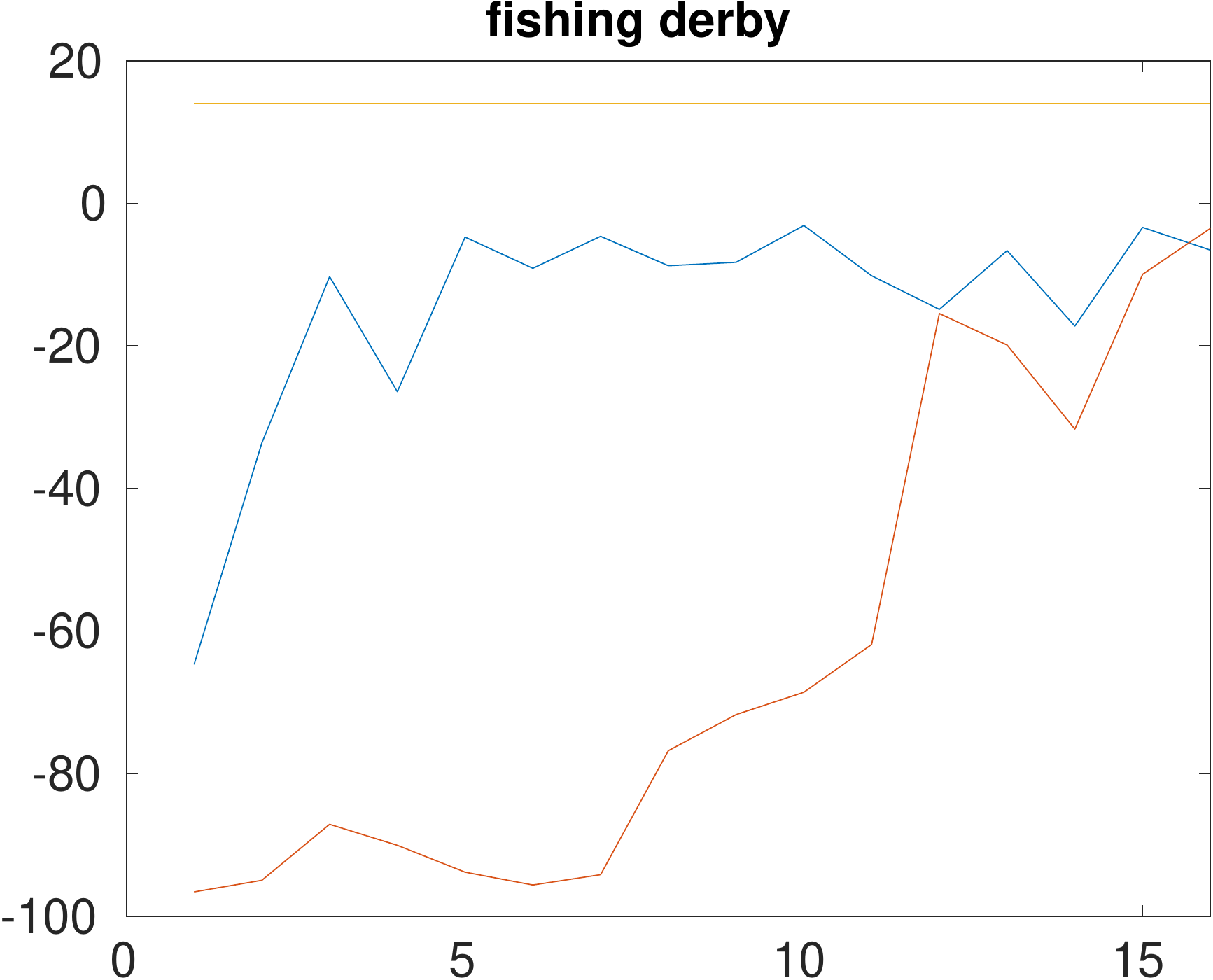} &
\includegraphics[width=0.3\linewidth]{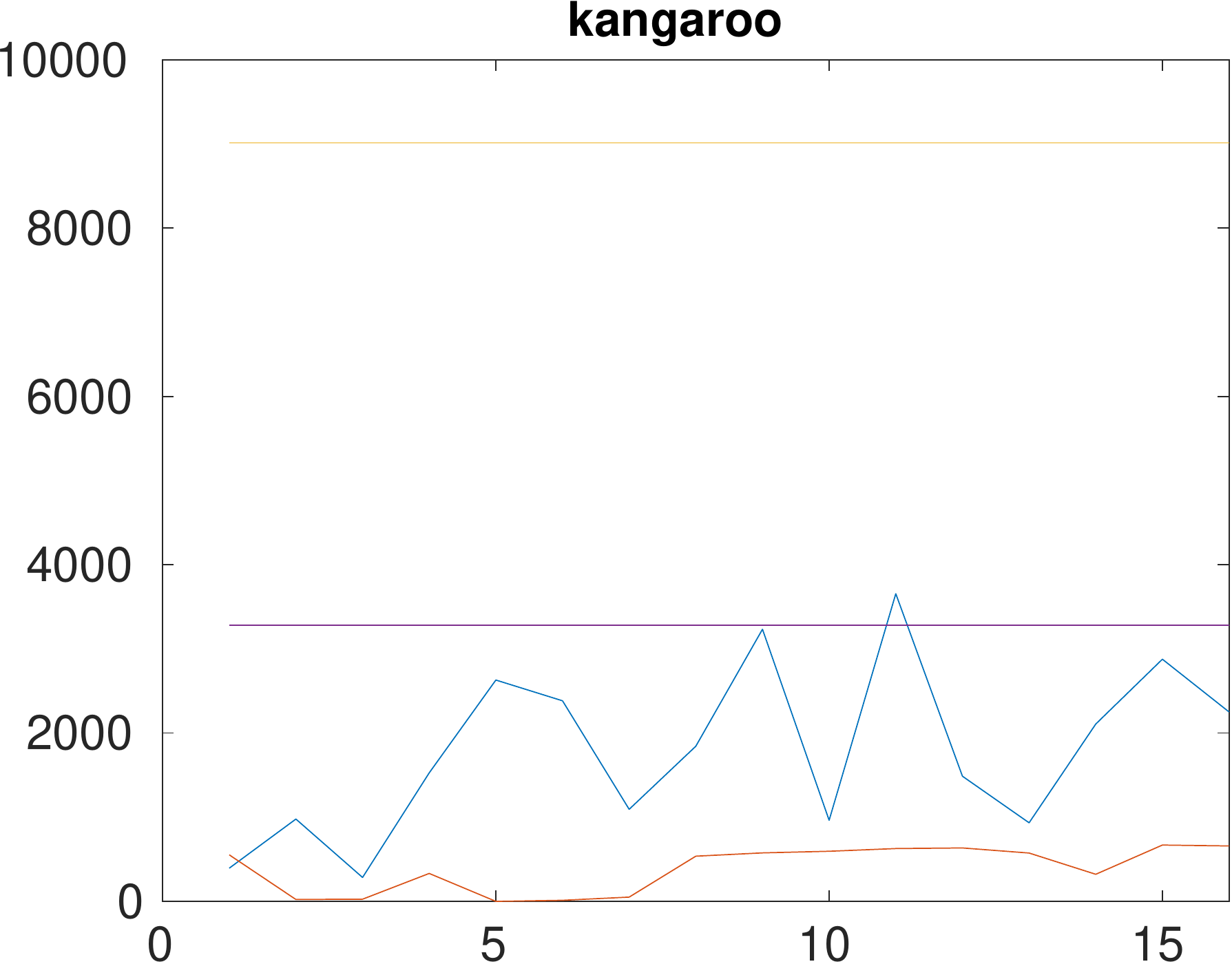} \\
\includegraphics[width=0.3\linewidth]{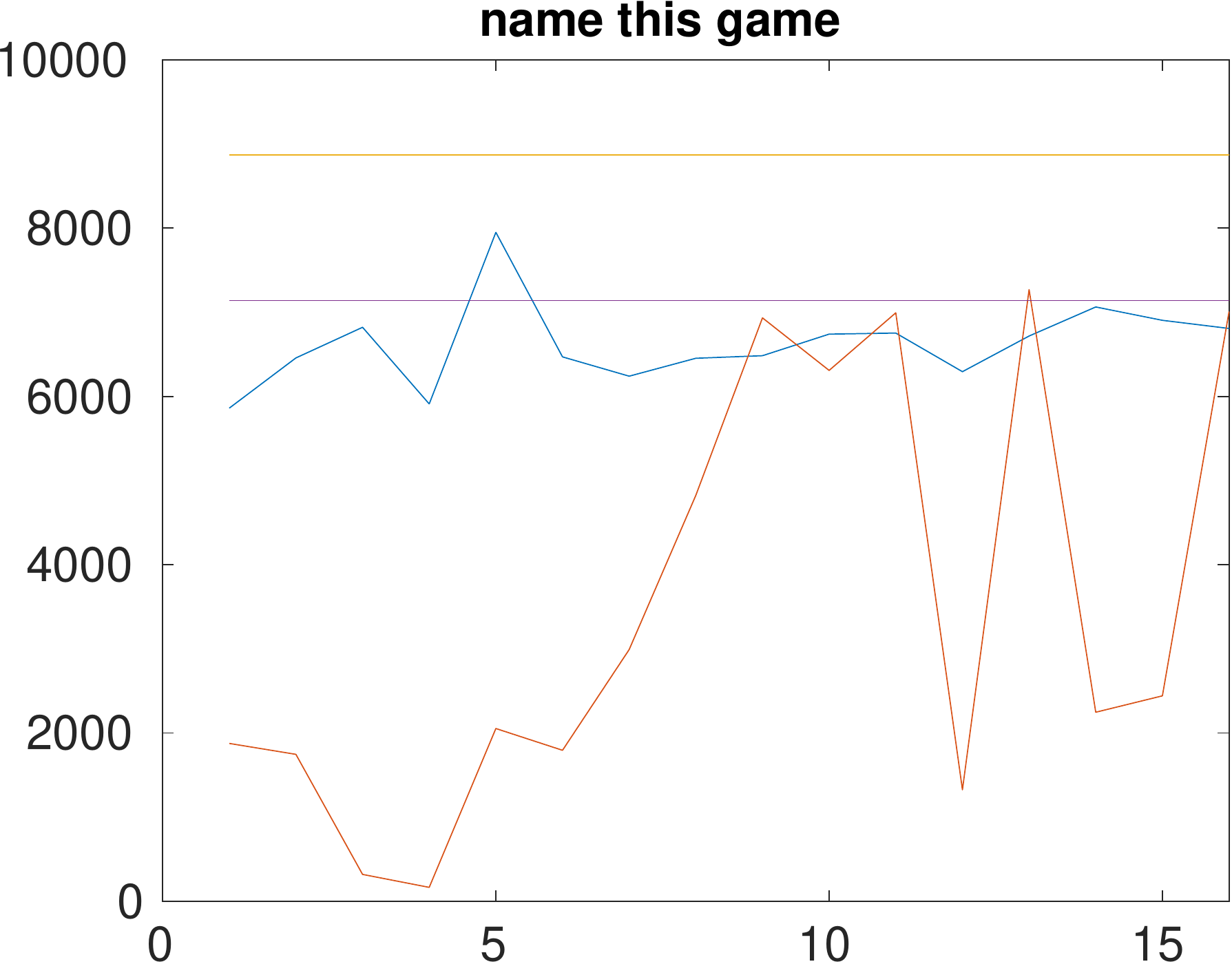} &
\includegraphics[width=0.3\linewidth]{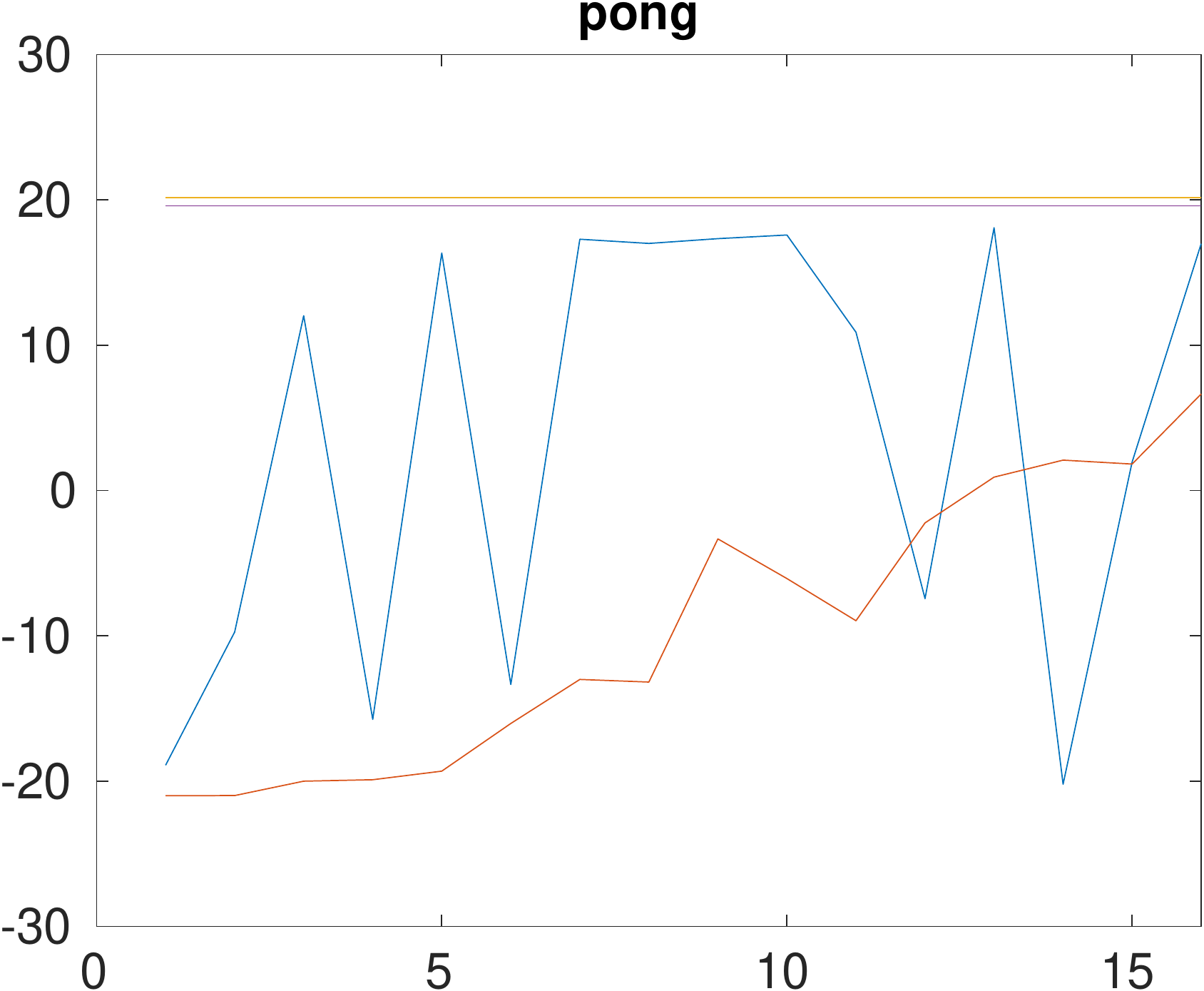} &
\includegraphics[width=0.3\linewidth]{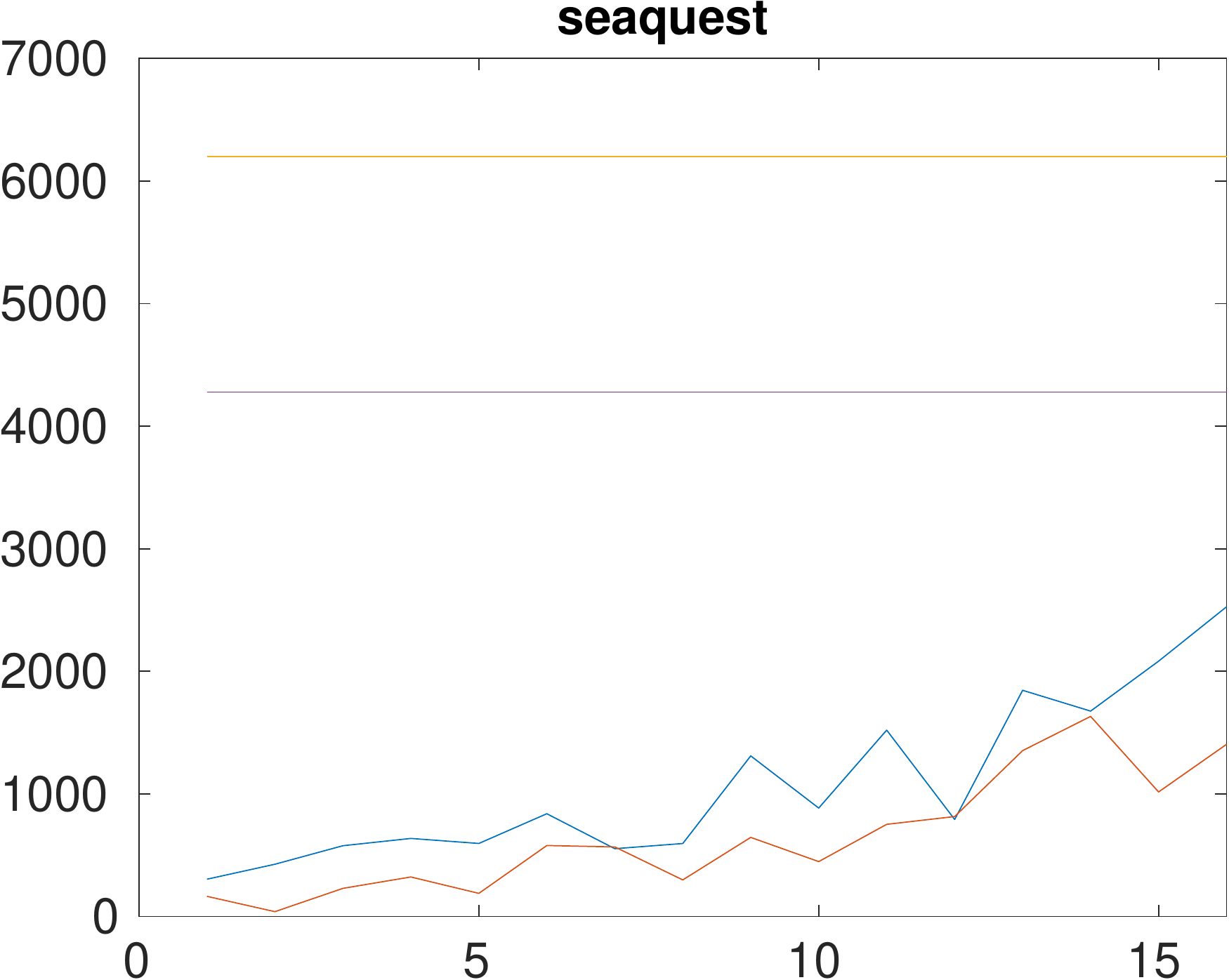} \\
\multicolumn{3}{c}{\includegraphics[width=0.3\linewidth]{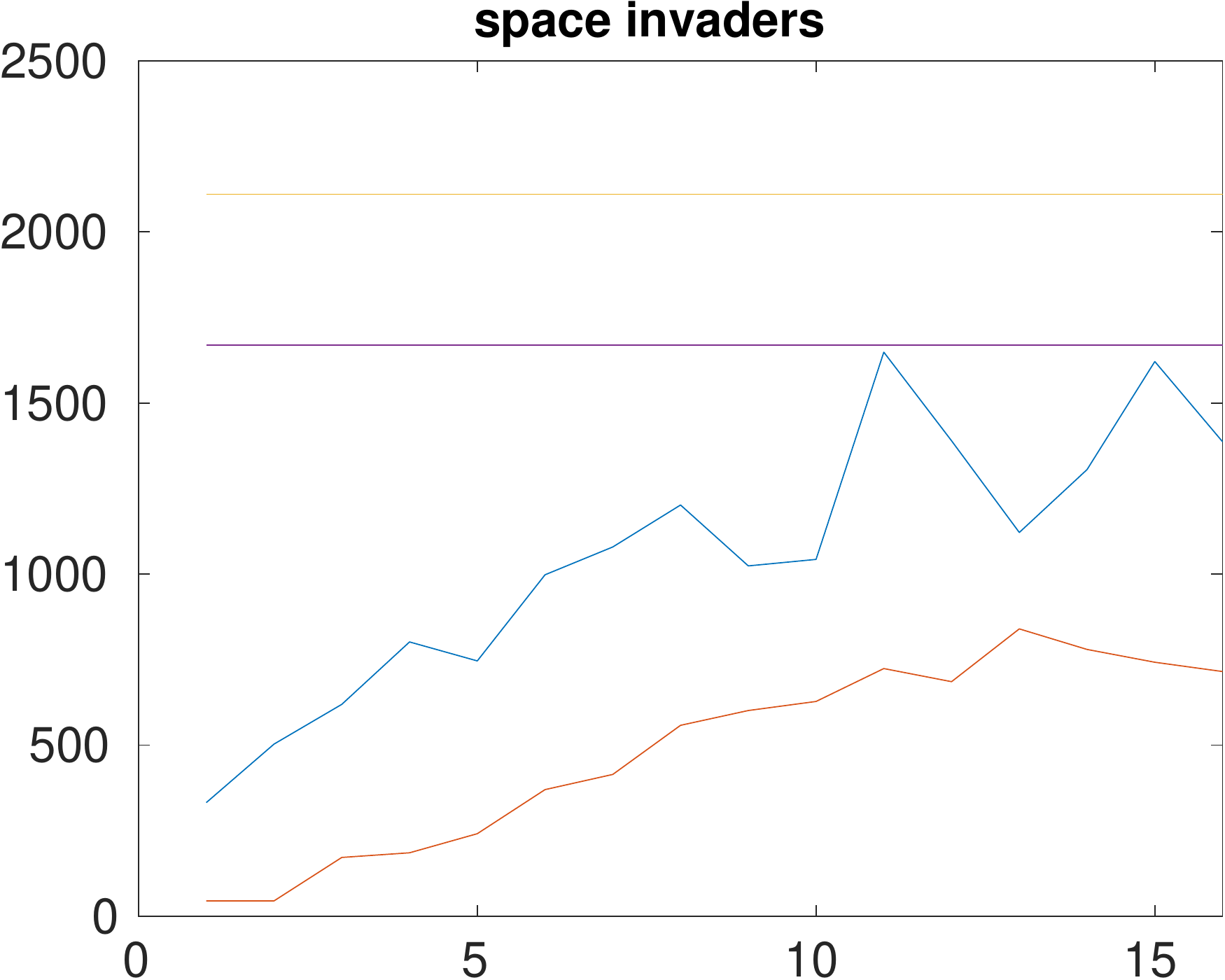}}
\end{tabular}
\caption{\small The Actor-Mimic training curves for the network trained with both the feature and policy regression objective (AMN-feature). The AMN-feature is trained for 16 epochs, or 4 million frames per game. We compare against the (smaller network) expert DQNs, which are trained until convergence. We also report the maximum test reward the expert DQN achieved over all training epochs, as well as the mean testing reward achieved over the last 10 epochs.}
\end{figure} 

\end{appendices}

\begin{appendices}

\section{Table 1 Barplot}
\label{app:barplot}

\begin{figure}[htb!]
\begin{tabular}{cc}
\includegraphics[width=0.45\linewidth]{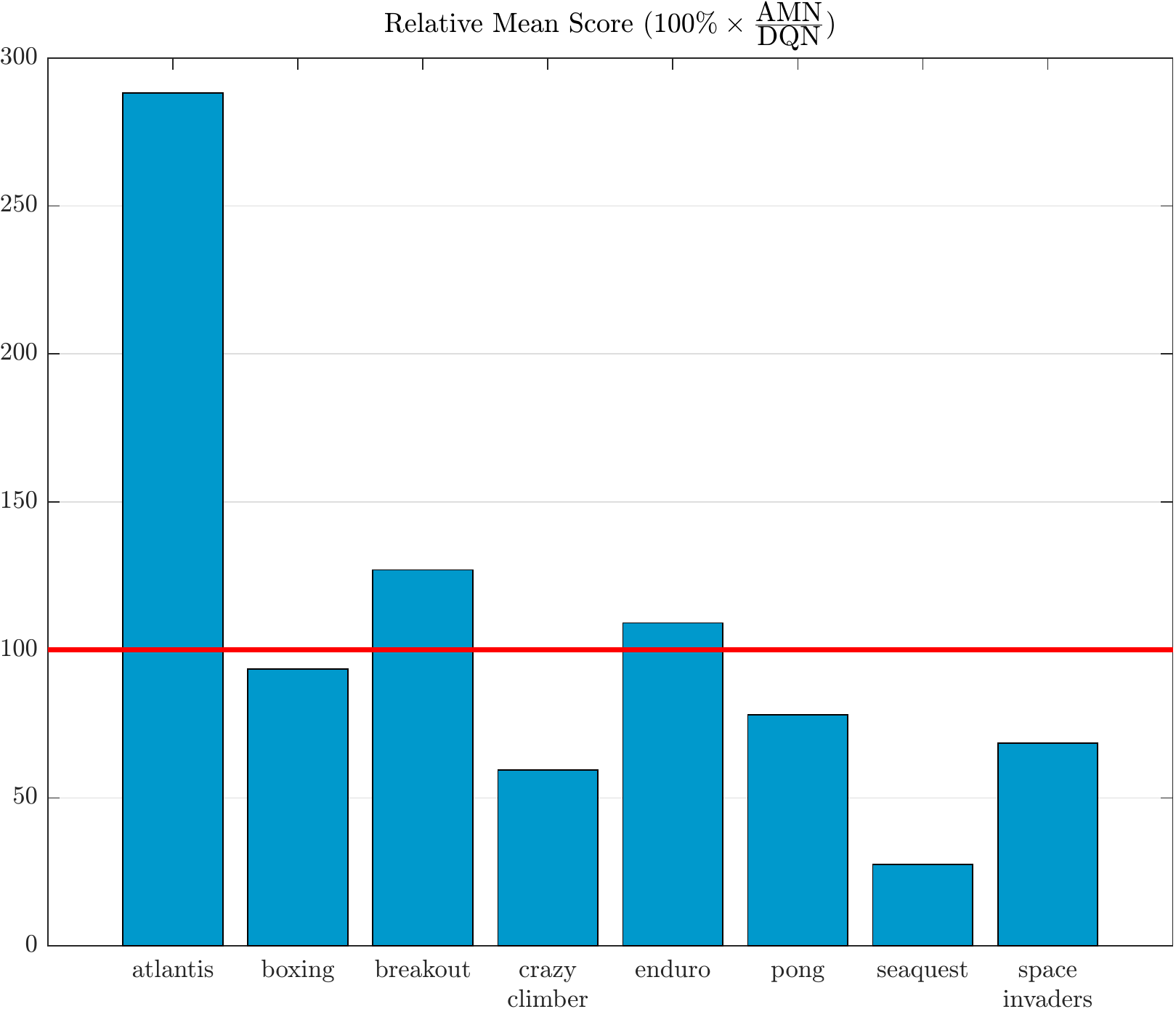} &
\includegraphics[width=0.45\linewidth]{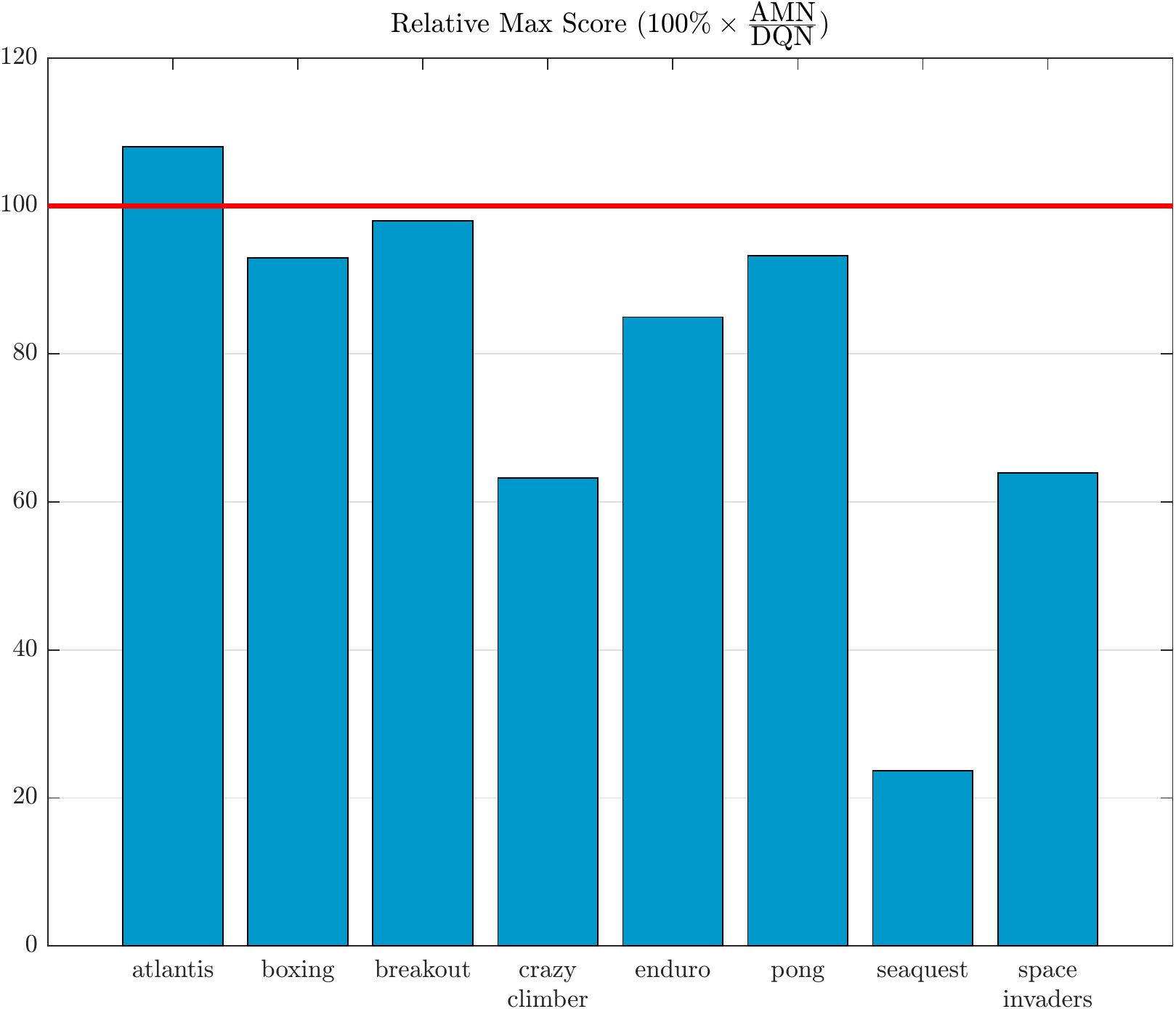}
\end{tabular}
\caption{\small Plots showing relative mean reward improvement (left) and relative max reward improvement (right) of the multitask AMN over the expert DQNs. See Table~\ref{AMN_8games_table} for details on how these values were calculated.}
\label{fig:amn_relative_improvement}
\end{figure}

\end{appendices}

\begin{appendices}

\section{Table 2 Learning Curves}
\label{app:learningcurves}

\begin{figure}[htb!]
\begin{tabular}{cccc}
\includegraphics[width=0.22\linewidth]{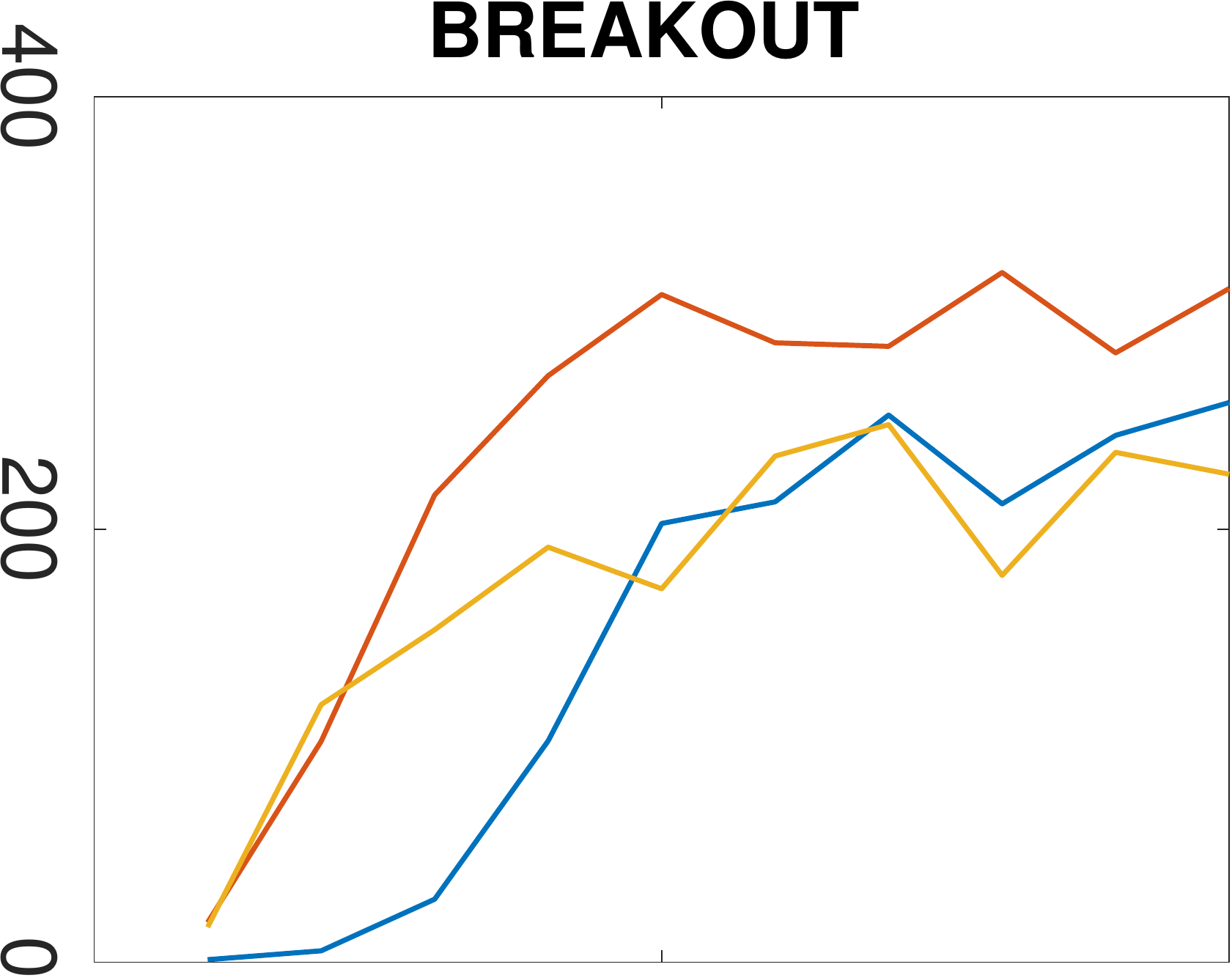} &
\includegraphics[width=0.22\linewidth]{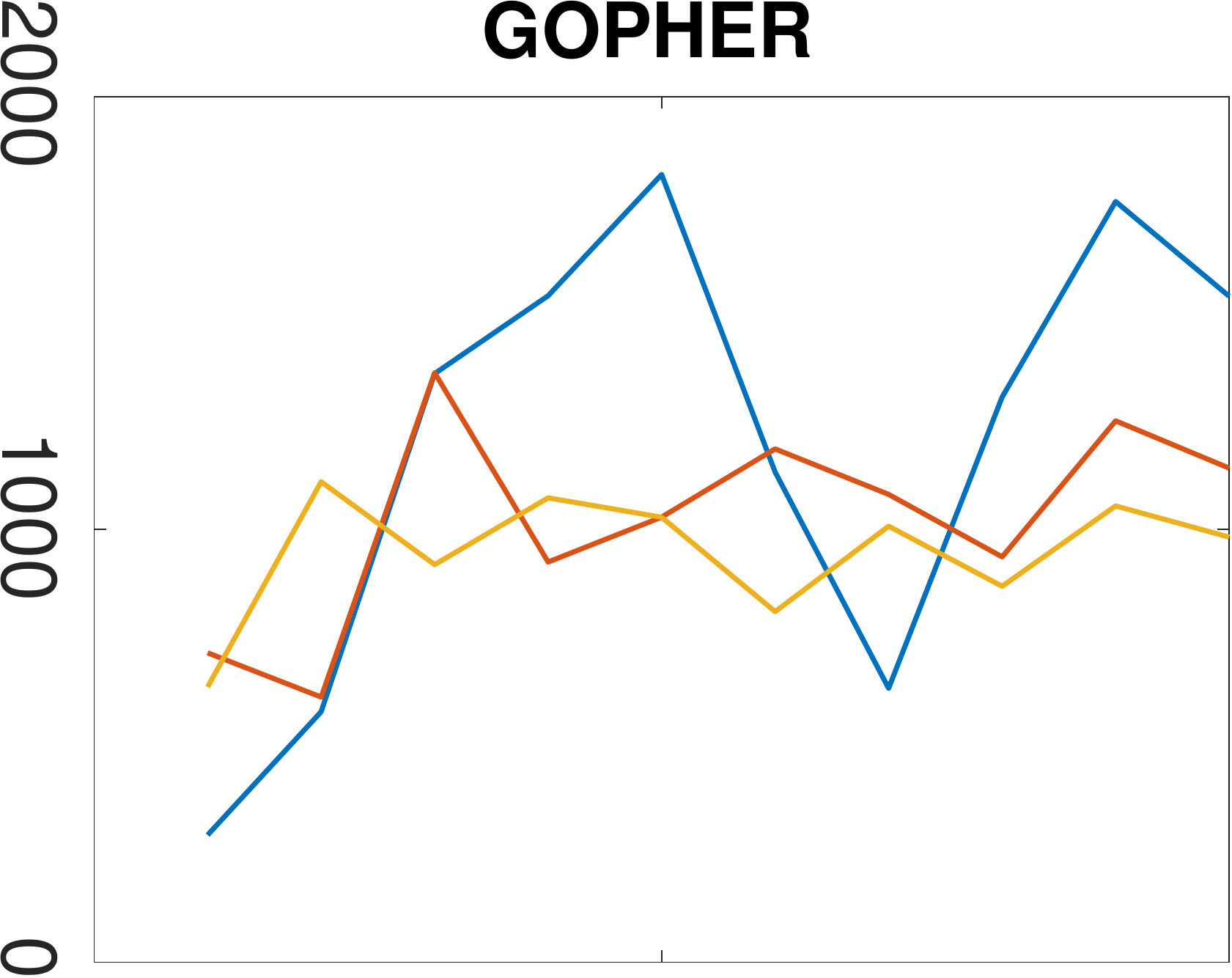} &
\includegraphics[width=0.22\linewidth]{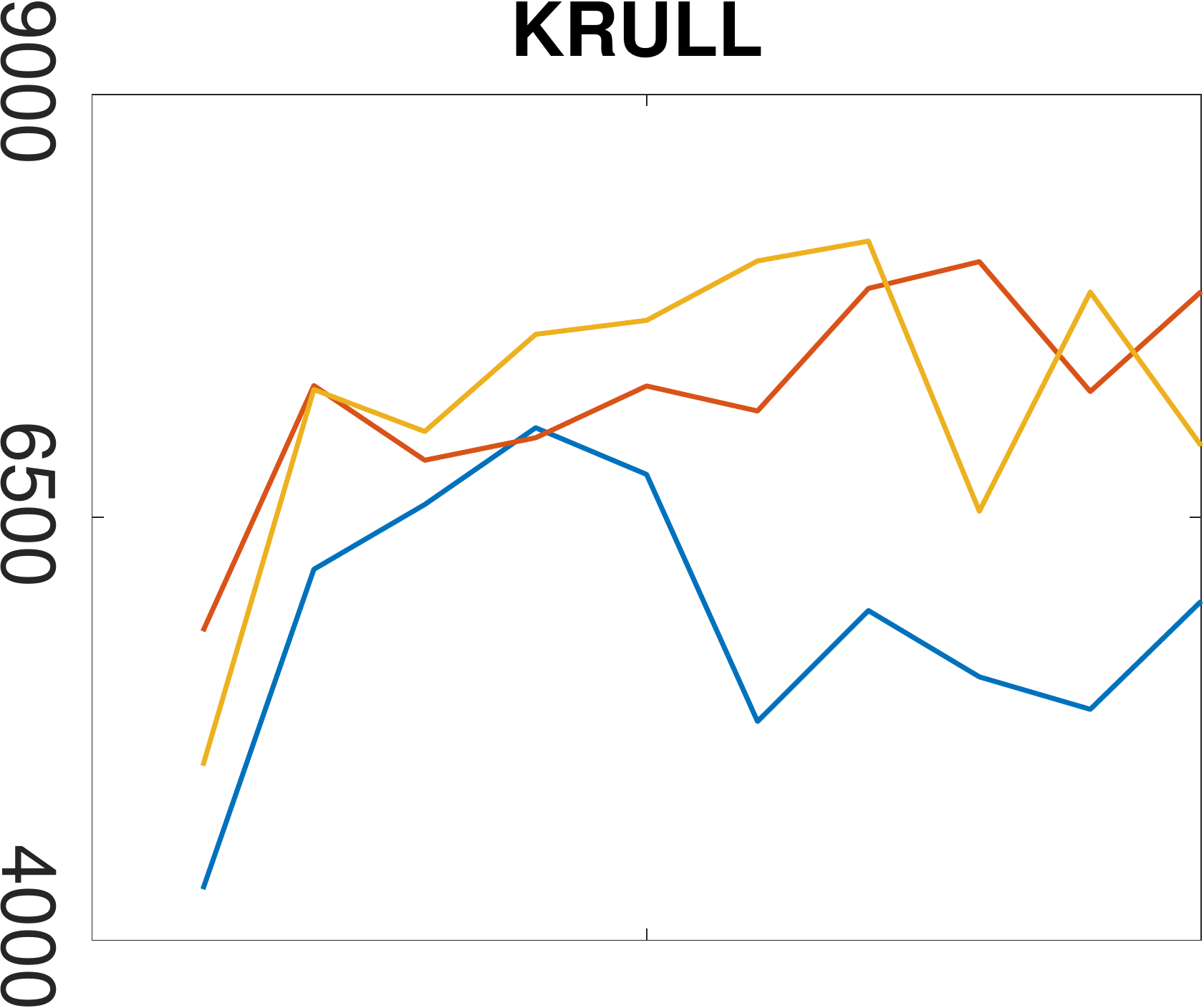} &
\includegraphics[width=0.22\linewidth]{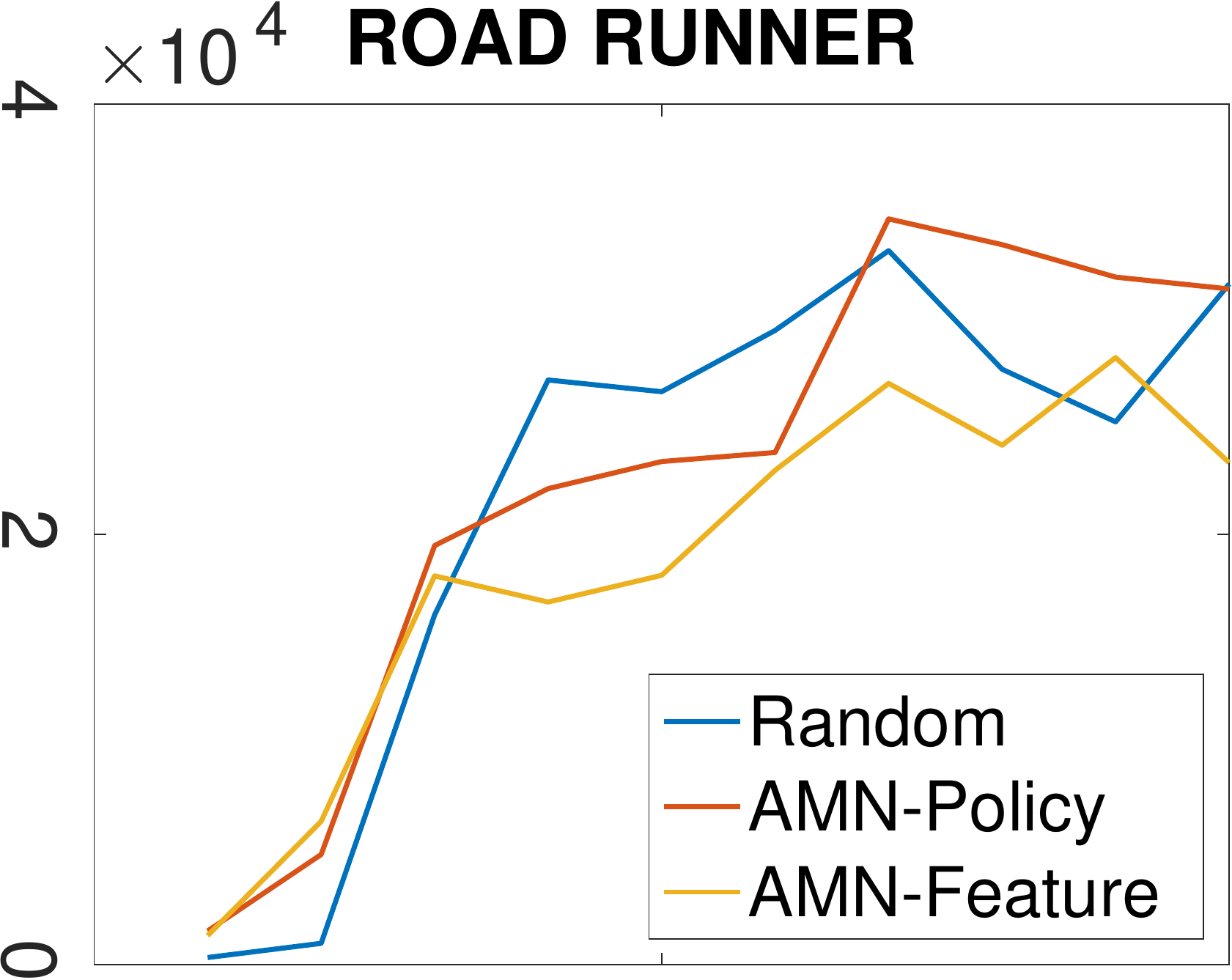} \\
\multicolumn{4}{c}{\begin{tabular}{ccc}
    \includegraphics[width=0.23\linewidth]{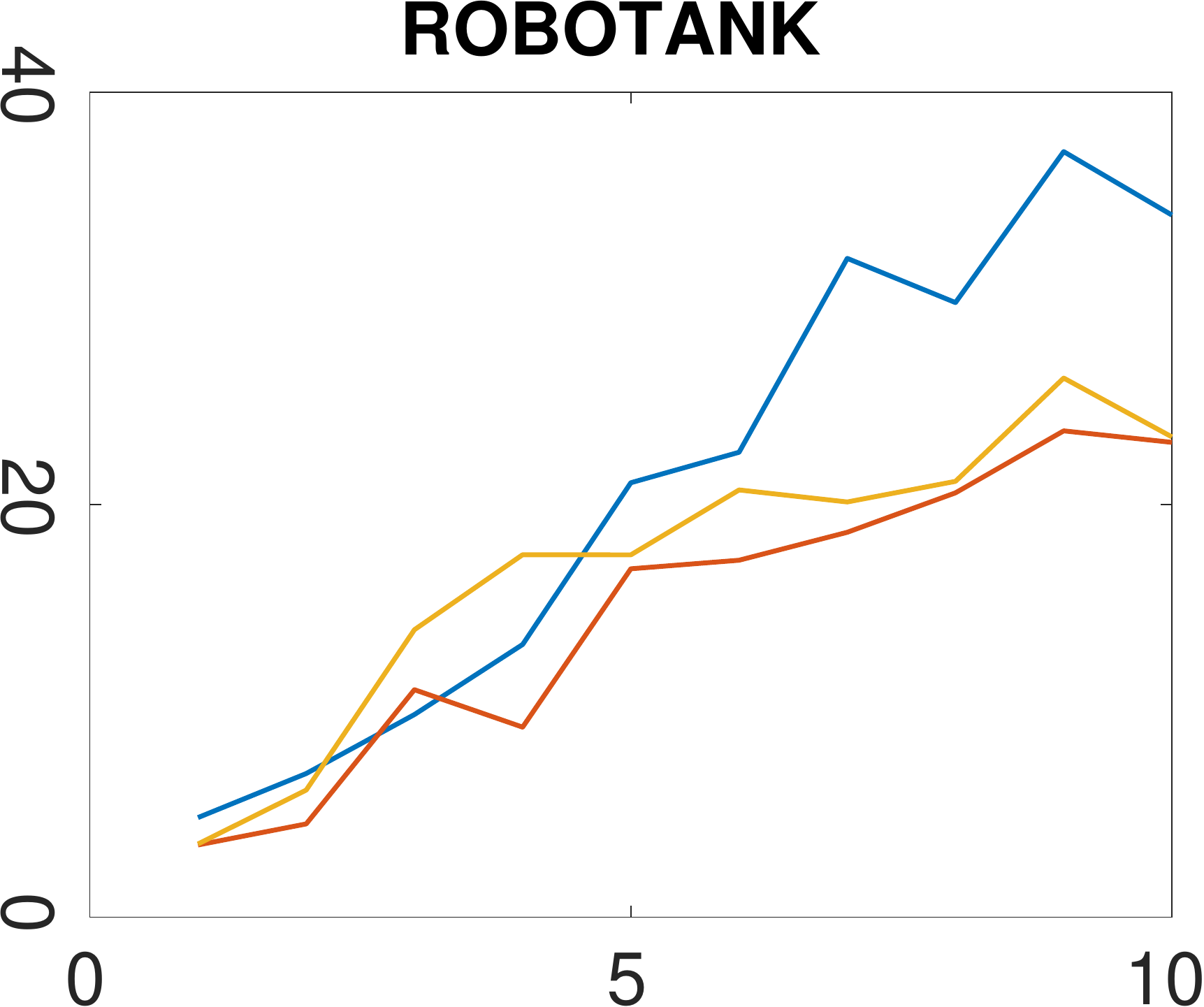} &
    \includegraphics[width=0.23\linewidth]{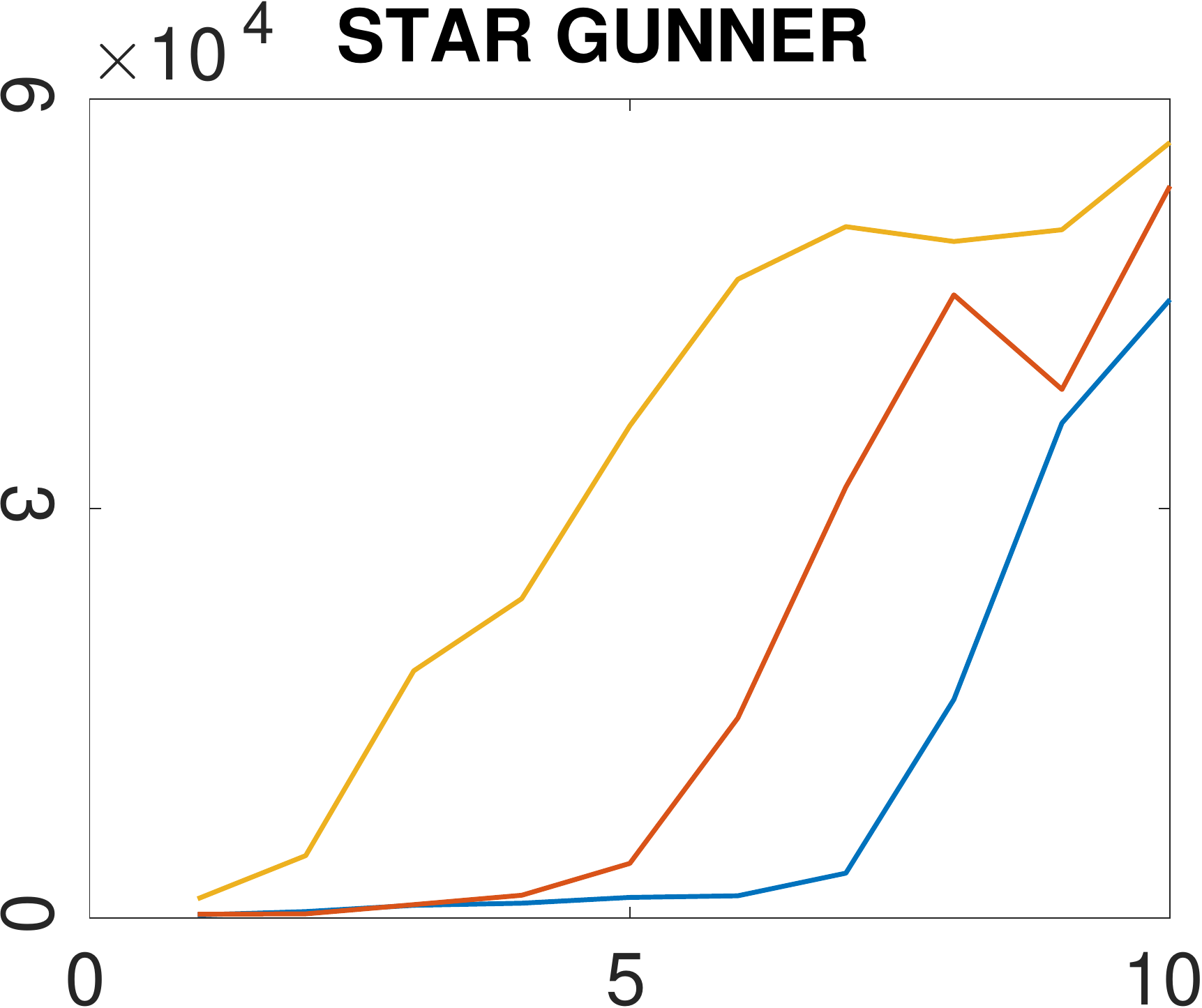} &
    \includegraphics[width=0.23\linewidth]{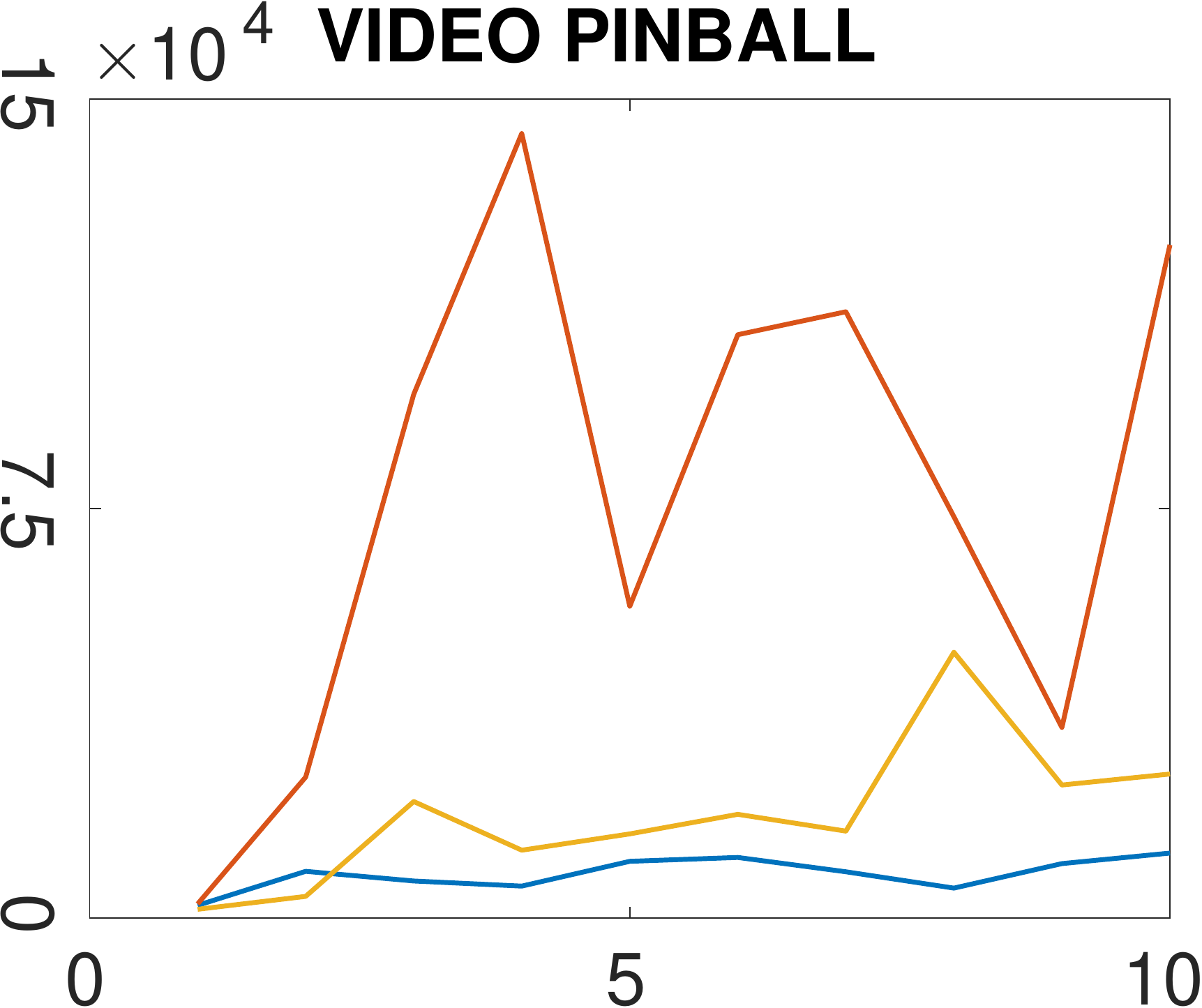}
  \end{tabular}}
\end{tabular}
\caption{\small Learning curve plots of the results in Table\ref{AMN_transfer_table}.}
\label{fig:transfer_learn_curve}
\end{figure}

\end{appendices}

\end{document}